\tikzset{
    -Latex,auto,node distance =1 cm and 1 cm,semithick,
    state/.style ={circle, draw=white, inner sep=0.01cm, minimum width = 0.35 cm},
    snode/.style = {rectangle, draw, inner sep=0.1cm,node contents={}},
    unmeasured/.style = {circle, draw, inner sep=0.05cm,node contents={}},
    point/.style = {circle, draw, inner sep=0.04cm,fill,node contents={}},샤
    bidirected/.style={Latex-Latex,dashed},
    el/.style = {inner sep=2pt, align=left, sloped}
}
\tikzstyle{mybox} = [draw=gray, fill=gray!20, very thick,
\newcommand\labelAndRemember[2]
\gdef\csname labeled:#1\endcsname{#2}%
\newcommand\recallLabel[1]
\endcsname\tag{\ref{#1}}}
\definecolor{opaquegray}{RGB}{192, 192, 192}
\definecolor{afblue}{RGB}{0,0,102}
\definecolor{mypink}{RGB}{255,51,153}
\definecolor{betterpink}{RGB}{218, 94, 156}
\definecolor{betterred}{RGB}{228,26,28}
\definecolor{betterblue}{RGB}{55,126,184}
\definecolor{bettergreen}{RGB}{77,175,74}
\definecolor{betterpurple}{RGB}{152,78,163}
\definecolor{betterorange}{RGB}{255,127,0}
\definecolor{betteryellow}{RGB}{242,177,61} 
\Crefname{figure}{Fig.}{Figs.}
\Crefname{section}{Sec.}{Secs.}
\Crefname{equation}{Eq.}{Eqs.}
\Crefname{proposition}{Prop.}{Props.}
\Crefname{theorem}{Thm.}{Thms.}
\Crefname{lemma}{Lem.}{Lems}
\Crefname{definition}{Def.}{Defs.}
\Crefname{algorithm}{Alg.}{Algs.}
\Crefname{corollary}{Cor.}{Cors.}
\newcommand{\Gi}[1]{\mathcal{G}_{\overline{#1}}}
\newcommand{\Go}[1]{\mathcal{G}_{\underline{#1}}}
\newcommand{\CI}{\mathrel{\perp\mspace{-10mu}\perp}}
\newcommand{\Pa}{\mathtt{Pa}}
\newcommand{\De}{\mathtt{De}}
\newcommand{\An}{\mathtt{An}}
\newcommand{\Ch}{\mathtt{Ch}}
\newcommand{\bedge}{\leftrightarrow}
\newcommand{\mf}[1]{\mathbf{#1}}
\newcommand{\mc}[1]{\mathcal{#1}}
\newcommand{\mb}[1]{\mathbb{#1}} 
\newcommand{\POMISs}{\mathsf{POMISs}}
\newcommand{\MISs}{\mathsf{MISs}}
\newcommand{\IB}{\mathsf{IB}}
\newcommand{\MUCT}{\mathsf{MUCT}}
\newcommand{\cc}{\mathsf{cc}}
\newcommand{\Bern}{\mathtt{Bern}}
\newcommand{\enablefootnotes}{
    \let\footnote\oldfootnote
}
\let\oldnl\nl
\newcommand{\nonl}{\renewcommand{\nl}{\let\nl\oldnl}}
\theoremstyle{definition} 
\newtheorem{definition}{Definition}
\theoremstyle{plain} %
\newtheorem{theorem}{Theorem}
\newtheorem{lemma}{Lemma}
\newtheorem{proposition}{Proposition}
\newtheorem{corollary}{Corollary}
\newtheorem*{lemma*}{Lemma}
\title{\Copy{title}{On Transportability for Structural Causal Bandits}}
\author{
    \textbf{Min Woo Park \quad Sanghack Lee}\\
    Graduate School of Data Science\\
    Seoul National University\\
    Seoul, South Korea\\
    \texttt{\{alsdn0110,sanghack\}@snu.ac.kr} 
}
\begin{document}
\maketitle

\begin{abstract}
  Intelligent agents equipped with causal knowledge can optimize their action spaces to avoid unnecessary exploration. The \textit{structural causal bandit} framework provides a graphical characterization for identifying actions that are unable to maximize rewards by leveraging prior knowledge of the underlying causal structure. While such knowledge enables an agent to estimate the expected rewards of certain actions based on others in online interactions, there has been little guidance on how to transfer information inferred from arbitrary combinations of datasets collected under different conditions---observational or experimental---and from heterogeneous environments. In this paper, we investigate the structural causal bandit with \textit{transportability}, where priors from the source environments are fused to enhance learning in the deployment setting. We demonstrate that it is possible to exploit invariances across environments to consistently improve learning. The resulting bandit algorithm achieves a sub-linear regret bound with an explicit dependence on informativeness of prior data, and it may outperform standard bandit approaches that rely solely on online learning.  
\end{abstract}
\allowdisplaybreaks


\section{Introduction}\label{sec: introduction}
The multi-armed bandit (MAB) \citep{robbins1952some, lai1985asymptotically, lattimore2020bandit} problem is a pivotal topic in sequential decision-making studies, where an agent aims to maximize cumulative rewards by repeatedly choosing actions based on observed rewards, balancing the exploration-exploitation trade-off. Traditionally, MAB problems assume independence among rewards of different arms, meaning that the reward obtained from one arm provides no information about the others. This assumption limits its applicability to scenarios where dependencies between actions are common such as clinical trials, healthcare, and advertising. 

Integrating causal knowledge into a decision-making process enables an agent to model decision problems with abundant dependency structures \citep{zhang2020causal,kumor2021sequential,zhang2022online,ruan2024causal}, where structural causal models (SCMs) \citep{pearl:2k} have been employed to represent causal relationships among actions, rewards, and other relevant factors such as context and states. This approach enables agents to make informed decisions by considering how each action causally influences the reward through causal pathways \citep{bareinboim2024introduction}. 

Existing works \citep{bareinboim2015bandits,lattimore2016causal,pmlr-v70-forney17a} have shown that MAB algorithms with causal knowledge can significantly outperform others that do not account for causal dependencies. Subsequent work has explored various specialized settings by introducing additional structural assumptions, such as the availability of both observational and experimental distributions or linear mechanisms \citep{lu2020regret, bilodeau2022adaptively, feng2023combinatorial, varici2023causal}. Specifically, \citet{lee2018structural} formalized the \textit{structural causal bandit} without any parametric assumptions. Building on this, \citet{lee2019structural} extended the framework to accommodate scenarios involving non-manipulable variables. 


Although the framework significantly reduces the action space, it still requires a substantial amount of exploration, which can be costly in many applications. An alternative approach to alleviating the cost of active experimentation is to leverage previous experimental records from related environments. The expectation is that informative prior data can help narrow down reward distributions and circumvent the \textit{cold-start} problem of agents, allowing them to converge to optimal actions faster, ultimately achieving a higher cumulative reward, even without incurring any regret. However, discrepancies across environments are often significant, meaning that the data obtained from source environments may not always be informative or lead to improvements in a target environment. 

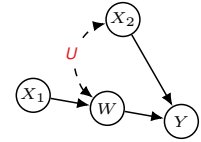
\begin{wrapfigure}[9]{r}{0.24\textwidth}
    \vspace{-0.5em}
    \centering
    \begin{tikzpicture}[x=1cm,y=.8cm,>={Latex[width=1.4mm,length=1.7mm]},
        font=\sffamily\sansmath\tiny,
        line width=0.2mm,
        RR/.style={draw,circle,inner sep=0mm, minimum size=4.5mm,font=\sffamily\tiny},
        transp/.style={regime,fill=betterred,minimum size=1mm,draw=none,inner sep=1mm}, 
        regime/.style={rectangle,draw=betterred,very thick,draw=betterred,minimum size=1mm},
        regime line/.style={draw=betterred, very thick},
        rotate = -10]\centering
        
        


    
        \node[RR] (X1) at (0,0) {$X_1$};
        \node[RR] (X2) at (1,1.5) {$X_2$};
        \node[RR] (W) at (1,0) {$W$};
        \node[RR] (Y) at (2,0) {$Y$};
        \draw[->] (X1) -- (W);
        \draw[->] (X2) -- (Y);
        \draw[->] (W) -- (Y);

        \node[] (U) at (0.4,0.8) {\color{betterred}$U$};
        \draw[->,dashed] (U) to[bend right = 20] (W);
        \draw[->,dashed] (U) to[bend left = 20] (X2);

    \end{tikzpicture}
    \caption{\small Diagram encoding causal relations.}
    \label{wrapfig: cardiovascular}
\end{wrapfigure}
\paragraph{Example. (Cardiovascular disease treatment)} Consider a scenario where $Y$ represents \textit{cardiovascular disease}, $W$ \textit{blood pressure}, $X_1$ the intake of an \textit{antihypertensive drug}, $X_2$ the use of an \textit{anti-diabetic drug}, and $U$ unobserved factors (e.g., physical activity levels, diet patterns; \citet{ferrannini2012diabetes}). Prior data from \textit{Houston} (source) is available for designing a population-level treatment strategy for cardiovascular disease patients in \textit{Boston} (target), aiming to determine appropriate medications. \Cref{wrapfig: cardiovascular} graphically illustrates this scenario. Such data can be useful but must be handled with care, especially if the population is suspected to differ, e.g., we may expect the distribution $P(U)$ vary across populations. Consider an instance where variables $X_1, X_2, W$, and $Y$ are binary, and their values are determined by functions: $W {\gets} X_1 {\oplus} U$, $X_2 {\gets} U$, and $Y {\gets} W {\land} X_2$, where $\oplus$ denotes the exclusive-or operation. $X_1$ is drawn independently and uniformly over $\{0,1\}$ and $P(U {=} 1) {=} 0.4$. Under this system, we find $\mb{E}_{P_{do(X_1 {=}1)}}Y {=} 0.6 > 0.4 {=} \mb{E}_{P_{do(X_1 {=}0)}}Y$ suggesting the antihypertensive drug was effective for Houstonians. Now suppose $P(U {=} 1) {=} 0.7$, reflecting the situation in Boston. In the target, the expected outcome becomes $\mb{E}_{P_{do(X_1 {=} 1)}}Y = 0.3$, which is the opposite of what was observed in the source. This example illustrates that the optimal strategy in a source can be suboptimal in a target. 

In the causality literature, this problem falls under the rubric of \textit{transportability theory} \citep{pearl2011transportability,bareinboim2012transportability,bareinboim2016causal}, which provides methods for determining when and how a causal effect can be computed across different environments. \citet{zhang2017transfer} studied the transfer of prior observations in settings with specific graph structures, e.g., Bow and Instrumental Variable (IV), demonstrating that leveraging existing experience can enhance the performance of an agent. \citet{bellot2023transportability} and \citet{deng2025transfer} addressed this line of work in general causal diagrams, although their focus was limited to single-node interventions. 
\paragraph{Contributions.} We propose a structural causal bandit algorithm that leverages prior data while accounting for structural discrepancies across heterogeneous environments. Our main contributions are as follows: (1) Within the structural causal bandits framework, we first establish hierarchical relations between action spaces and derive corresponding \textit{dominance} bounds on the expected rewards of actions. (2) We introduce a method for estimating the expected rewards or their \textit{causal} bounds using arbitrary combinations of observational or experimental sources. (3) We provide a UCB-based algorithm that incorporates causal information to guide exploration, and we provide both theoretical guarantees and empirical results showing it achieving a sub-linear cumulative regret depending on the amount of causal knowledge.

\section{Preliminaries}\label{sec: preliminaries}
We introduce notation and review relevant prior work. Following conventions, we use a capital letter, such as $X$, to represent a variable, with its corresponding lowercase letter, $x$, denoting a realization of the variable. Boldface is employed to represent a set of variables or values, denoted by $\mf{X}$ or $\mf{x}$. The domain of $X$ is indicated by $\Omega_X$ and $\Omega_\mf{X} {=} \times_{X \in \mf{X}}\Omega_X$. We use calligraphic letters for graphs and models such as $\mc{G}$ and $\mc{M}$. The distribution over variables $\mf{X}$ is denoted by $P(\mf{X})$. We consistently use $P(\mf{x})$ as an abbreviation for $P(\mf{X} = \mf{x})$. We denote by $\mathds{I}\{\mf{X} = \mf{x}\}$, the indicator function. 

\paragraph{Structural Causal Model.} We use \textit{structural causal model} (SCM) \citep{pearl:2k} as the semantic framework to represent the underlying environment a decision maker is deployed. An SCM $\mc{M}$ is a quadruple $\langle \mf{U}, \mf{V}, \mf{F}, P(\mf{U})\rangle$, where $\mf{U}$ is a set of exogenous variables determined by factors outside the model following a joint distribution $P(\mf{U})$, and $\mf{V}$ is a set of endogenous variables whose values are determined following a collection of functions $\mf{F} = \{f_V\}_{V \in \mf{V}}$ such that $V \gets f_V(\mf{PA}_V,\mf{U}_V)$ where $\mf{PA}_V \subseteq \mf{V} \setminus \{V\}$ and $\mf{U}_V \subseteq \mf{U}$. The observational probability $P(\mf{v})$ is defined as $\int_{\mf{u}}\prod_{V \in \mf{V}} \mathds{I}\{f_V(\mf{pa}_V,\mf{u}_V) = v\}dP(\mf{u})$. Every SCM $\mc{M}$ is associated with a \textit{causal diagram} (also called a semi-Markovian graph) $\mc{G} = \langle \mf{V}, \mf{E}\rangle$ where a directed edge $V_i \to V_j \in \mf{E}$ if $V_i \in \mf{PA}_{V_j}$, and a bidirected edge between $V_i$ and $V_j$ if $\mf{U}_{V_i}$ and $\mf{U}_{V_j}$ are dependent. The probability of $\mf{V} = \mf{v}$ when $\mf{X}$ is intervened upon to take the value $\mf{x}$ is denoted by $P(\mf{v}\mid do(\mf{x}))$ or $P_{\mf{x}}(\mf{v})$, and the submodel induced by the intervention is denoted by $\mc{M}_\mf{x}$.

\paragraph{Graphical notations.} An ordered sequence of edges is called a \textit{path}. If a path consists of directed edges with the same orientation, we say the path is \textit{directed}. A path is \textit{directed} from $X$ to $Y$ if there is no arrowhead on the path pointing towards $X$. If there is a (possibly empty) directed path from $X$ to $Y$, then $Y$ is called a \textit{descendant} of $X$, and $X$ is an \textit{ancestor} of $Y$. A variable $Y$ is referred to as a \textit{child} of $X$, and $X$ is a \textit{parent} of $Y$ if they are adjacent and the edge is not directed into $X$. We denote the ancestors, descendants, parents, and children of a given variable as $\An$, $\De$, $\Pa$, and $\Ch$, respectively. Ancestors and descendants include the variable itself. For a set of variables, we define the ancestral set as $\An(\mf{X})_{\mc{G}} = \bigcup_{X\in\mf{X}} \An(X)_{\mc{G}}$, and similarly for other relationships. The $\mf{X}$-lower-manipulation of $\mc{G}$ removes all outgoing edges from variables in $\mf{X}$, denoted as $\Go{\mf{X}}$, while the $\mf{X}$-upper-manipulation of $\mc{G}$ removes all incoming edges into variables in $\mf{X}$ in $\mc{G}$, denoted as $\Gi{\mf{X}}$. We denote the set of variables in $\mc{G}$ by $\mf{V}(\mc{G})$. A subgraph $\mc{G}[\mf{V}']$, where $\mf{V}' \subseteq \mf{V}(\mc{G})$ is defined as a vertex-induced subgraph in which all edges among the vertices in $\mf{V}'$ are preserved. We define $\mc{G} \backslash \mf{X}$ as $\mc{G}[\mf{V}(\mc{G}) \setminus \mf{X}]$ for $\mf{X} \subseteq \mf{V}(\mc{G})$. We denote $\mc{G} \langle \mf{X} \rangle$ as the latent projection of $\mc{G}$ on to $\mathbf{X}$. We provide related literature in \Cref{sec: app related works}, along with detailed background for our work in \Cref{sec: app background}.



\section{Structural Causal Bandits with Transportability}\label{sec: SCB}
We formalize the \textit{structural causal bandit with transportability} problem, where an agent interacts with a target system modeled by a structural causal model (SCM) $\mc{M}^\ast$ including a reward variable $Y \in \mf{V}$. In this setting, pulling each arm corresponds to intervening on a set of variables $\{\mf{x} \in \Omega_{\mf{X}} \mid \mf{X} \subseteq \mf{V} \setminus  \{Y\} \setminus \mf{N} \}$ where $\mf{N}$ denotes non-manipulable variables. We use the terms \textit{arm}, \textit{action}, and \textit{intervention} interchangeably, depending on the context. The agent cannot access the target system $\mc{M}^\ast$ but can observe $\mf{V}$ through online interaction by pulling an arm $do(\mf{x})$. In addition, the agent has access to data from one or more related source environments $\Pi = \{\pi^1, \pi^2,\cdots,\pi^n\}$ each associated with SCMs $\mc{M}^1, \mc{M}^2,\cdots,\mc{M}^n$. The distributions associated with $\pi^i$ under $do(\mf{x})$ will be denoted by $P^i_\mf{x}$. We use the superscript $\ast$ throughout this paper to consistently denote the target environment. 



\paragraph{Graph encoding differences among environments.} To account for environment shift, we introduce a \textit{selection diagram} \citep{bareinboim2012transportability} that captures discrepancies across environments.   

\begin{definition}[Environment discrepancy]
    Let $\pi^i$ and $\pi^j$ be environments associated with SCMs $\mc{M}_1$ and $\mc{M}_2$ conforming to a causal diagram $\mc{G}$. We denote by $\Delta^{i,j}$ a set of variables such that, for every $V \in \Delta^{i,j}$, there exists a discrepancy; either $f_V^i \neq f_V^j$ or $P^i(\mf{U}_V) \neq P^j(\mf{U}_V)$\footnote{Superscripts on $\Delta$, such as $\Delta^i$, indicate discrepancies with respect to different environments $\pi^i$, while subscripts like $\Delta_\mf{x}$ denote suboptimal gaps for the regret analysis for a bandit problem. }.  
\end{definition}


\begin{definition}[Selection diagram]
    Given a collection of discrepancies $\boldsymbol{\Delta} = \{\Delta^{\ast,i}\}_{i=1}^n$ with regard to $\mc{G} = \langle \mf{V}, \mf{E} \rangle$, let $\mf{S}^i = \{S_V \mid V \in \Delta^{\ast,i}\}$ be \textit{selection nodes}. The graph $\mc{G}^{\Delta^{\ast,i}} = \langle \mf{V}\cup \mf{S}^i, \mf{E} \cup \{S_V \to V\}_{S_V \in \mf{S}^i} \}$ is called a \textit{selection diagram}. Let $\mf{S} = \bigcup_{i=1}^n \mf{S}^i$. The \textit{collective selection diagram} $\mc{G}^{\boldsymbol{\Delta}} $ is defined as $ \langle \mf{V}\cup \mf{S}, \mf{E} \cup \{S_V \to V\}_{S_V \in \mf{S}} \}$. 
\end{definition}

We shorten $\Delta^{\ast,i}$ as $\Delta^{i}$. The collective selection diagram $\mc{G}^{\boldsymbol{\Delta}}$ encodes qualitative information about $\mc{M}^\ast$ and discrepancies $\boldsymbol{\Delta}$. The absence of a selection node pointing to a variable indicates that the causal mechanism responsible for assigning values to that variable is identical across all environments corresponding to $\pi^\ast$ and $\Pi = \{\pi^i\}_{i=1}^n$. One can view the selection nodes $\mf{S}$ as switches controlling environment shifts, and the collective selection diagram $\mc{G}^{\boldsymbol{\Delta}}$ as the causal diagram for a unified SCM representing heterogeneous SCMs following $P^{\boldsymbol{\Delta}}_\mf{x}(\mf{y} \mid \mf{w},\mf{s}^i = \mf{i}, \mf{s}^{-i} = \boldsymbol{0}) = P_\mf{x}^i(\mf{y} \mid \mf{w})$ and $P^{\boldsymbol{\Delta}}_\mf{x}(\mf{y} \mid \mf{w}, \mf{s} = \boldsymbol{0})=P^\ast_\mf{x}(\mf{y} \mid \mf{w})$ where $\mf{S}^{-i} = \mf{S} \setminus \mf{S}^i$. This representation enables probabilistic operations across the target environment $\pi^\ast$ and source environments $\Pi$.

We illustrate in \Cref{wrapfig: cardiovascular selection} a collective selection diagram $\mc{G}^{\boldsymbol{\Delta}}$ corresponding to the introductory example. In this scenario, the distributions of $W$ and $X_2$ may differ between $\mc{M}^\ast$ and $\mc{M}^1$ due to the difference in the distribution of unobserved confounder $U$, which influences both variables. This is consistent with $\boldsymbol{\Delta} = \{\Delta^1 = \{W,X_2\}\}$ and $\mf{S} = \mf{S}^1 = \{S_{W},S_{X_2}\}$. In the diagram, the selection nodes $S_W$ and $S_{X_2}$ are represented as red squares. 

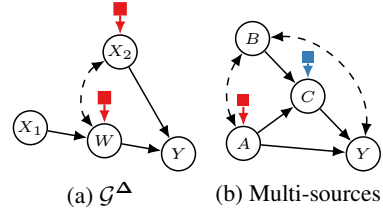
\begin{wrapfigure}[11]{r}{0.4\textwidth}
        \vspace{-1em}
        \begin{minipage}[b]{0.15\textwidth}\centering
        \subfloat[$\mc{G}^{\boldsymbol{\Delta}}$]{
        \begin{tikzpicture}[x=1cm,y=.8cm,>={Latex[width=1.4mm,length=1.7mm]},
        font=\sffamily\sansmath\tiny,
        line width=0.2mm,
        RR/.style={draw,circle,inner sep=0mm, minimum size=4.5mm,font=\sffamily\tiny},
        transp/.style={regime,fill=betterred, minimum size=1.5mm,draw=none,inner sep=0mm}, 
        regime/.style={rectangle,draw=betterred, thick,draw=betterred,minimum size=1mm},
        regime line/.style={draw=betterred, thick},
        rotate = -10]\centering

        \node[RR] (X1) at (0,0) {$X_1$};
        \node[RR] (X2) at (1,1.5) {$X_2$};
        \node[RR] (W) at (1,0) {$W$};
        \node[RR] (Y) at (2,0) {$Y$};
        \draw[->] (X1) -- (W);
        \draw[->] (X2) -- (Y);
        \draw[->] (W) -- (Y);
        \draw[<->,dashed] (W) to[bend left = 50] (X2);
        
        \node[transp, above = 2.5mm of W, draw = betterred, fill = betterred] (WT){};
        \draw[->,regime line, draw = betterred] (WT) -- (W);

        \node[transp, above = 2.5mm of X2, draw = betterred, fill = betterred] (X2T){};
        \draw[->,regime line, draw = betterred] (X2T) -- (X2);
    \end{tikzpicture}\label{wrapfig: cardiovascular selection}}
    \end{minipage}
    \hfill
    \begin{minipage}[b]{0.15\textwidth}\centering
        \subfloat[Multi-sources]{
        \begin{tikzpicture}[x=0.8cm,y=.7cm,>={Latex[width=1.4mm,length=1.7mm]},
        font=\sffamily\sansmath\tiny,
        line width=0.2mm,
        RR/.style={draw,circle,inner sep=0mm, minimum size=4.5mm,font=\sffamily\tiny},
        transp/.style={regime,fill=betterred, minimum size=1.5mm,draw=none,inner sep=0mm}, 
        regime/.style={rectangle,draw=betterred, thick,draw=betterred,minimum size=1mm},
        regime line/.style={draw=betterred,  thick},
        rotate = -5]
        
        \node[RR] (A) at (0,0) {$A$};
        \node[RR] (B) at (0,2) {$B$};
        \node[RR] (C) at (1,1) {$C$};
        \node[RR] (Y) at (2,0) {$Y$};
        \draw[->] (A) -- (Y);
        \draw[->] (A) -- (C);
        \draw[->] (B) -- (C);
        \draw[->] (C) -- (Y);


        \node[transp, above=2.5mm of A] (AT)
        {};
        \draw[->,regime line] (AT) -- (A);
        
        
        \node[transp, above=2mm of C, draw = none, fill = betterblue] (CT2)
        {};
        \draw[->,regime line, draw = betterblue] (CT2) -- (C);
        
        \draw[dashed, <->] (A) to[bend left = 30] (B);
        \draw[dashed, <->] (B) to[bend left = 50] (Y);

        \end{tikzpicture}\label{wrapfig: 4var selection}}
    \end{minipage}
    \hfill\null
    \caption{\small Collective selection diagrams for (a) the introductory example and (b) $\Delta^1 = \{A\}$ ({\color{betterred} red}) and $\Delta^2 = \{C\}$ ({\color{betterblue} blue}).}
\end{wrapfigure}
We have $P^\ast(x_1) =  P^{\boldsymbol{\Delta}}(x_1 \mid \mf{s} = \boldsymbol{0}) =P^{\boldsymbol{\Delta}}(x_1 \mid \mf{s} = \boldsymbol{1}) = P^1(x_1)$ due to the \textit{d-separation} relation \citep{pearl:95} $(\mf{S} \CI_d X_1)_{\mc{G}^{\boldsymbol{\Delta}}}$ and a similar equality can be derived for $Y$. 
In contrast, it may \textit{not} hold $P^\ast(w,x_2) = P^1(w,x_2)$ due to $(\mf{S} \not \CI_d W,X_2)_{\mc{G}^{\boldsymbol{\Delta}}}$. This result indicates that, given access to $P^1(\mf{v})$, the probability $P^\ast(x_1)$ is inferable via $P^1(x_1) = \sum_{\mf{v} \setminus \{x_1\}}P^1(\mf{v})$, whereas $P^\ast(w, x_2)$ is not. In this sense, we may say that $P^\ast(x_1)$ is transportable from the source environment $\pi^1$. We will provide, within the context of structural causal bandits, a formal definition of transportability and study it in detail in \Cref{sec: trans}. 




\begin{definition}[Structural causal bandits with transportability]
     Let $\mf{x}_t$ be the action taken at round $t \in \{ 1,\cdots, T\}$. The goal of \textit{structural causal bandits with transportability} is to minimize cumulative regret in the target environment $\pi^\ast$ defined as follows:
    \setlength{\belowdisplayskip}{0pt}%
    \setlength{\abovedisplayskip}{0pt}%
    \begin{align}
        R_T = \sum_{t=1}^T \mb{E}_{P^\ast_{\mf{x}^{\star}}} Y - \mb{E}_{P^\ast_{\mf{x}_t}} Y
        = \sum_{\mf{x}}\Delta_\mf{x} \mb{E}N_T(\mf{x}) \label{eq: def regret}
    \end{align}
    that compares the reward of the optimal arm $\mf{x}^{\star} =
        \arg\max_{ \mf{x} \in \Omega_{\mf{X}}, \mf{X} \subseteq \mf{V} \setminus \{Y\} \setminus \mf{N}}\mb{E}_{P^\ast_{\mf{x}}}Y$ with that of arm $\mf{x}_t$ in each round $t$. $\Delta_\mf{x}$ denotes suboptimal gap $\mb{E}_{P^\ast_{\mf{x}^{\star}}} Y - \mb{E}_{P^\ast_{\mf{x}}} Y$ and $N_T(\mf{x})$ denotes the number of times an action $\mf{x}$ was chosen up to round $T$.  
\end{definition}

\paragraph{Action space worth exploring.} In settings where variables exhibit causal relationships represented by a causal diagram, restricting attention to a subset of the entire action space can lead to improved performance. This implies that instead of exploring all the exponential subsets in $2^{\mf{V} \setminus \{Y\} \setminus \mf{N}}$, it suffices to consider some subspace. We denote by $\mf{x}^\ast = \arg \max_{\mf{x} \in \Omega_{\mf{X}}} \mb{E}_{P_{\mf{x}}} Y$ the best expected reward by intervening on $\mf{X}$, and $\mf{x}[\mf{X}']$ the values of $\mf{x}$ restricted to the subset of variables of $\mf{X} \cap \mf{X}'$. 

\begin{definition} [Minimality \citep{lee2018structural}] \label{def: mis}
    If $\mf{X} \subseteq \mf{V} \setminus \{Y\} \setminus \mf{N}$ be a set such that there is no $\mf{X}' \subseteq \mf{X}$ such that $\mb{E}_{P_{\mf{x}}}Y = \mb{E}_{P_{\mf{x}[\mf{X}']}}Y$\footnote{We refer to $\mf{X}$ and $\mf{X}'$ as \textit{equivalent} if the equality holds.}, we refer to $\mf{X}$ as a \textit{minimal} intervention set (MIS). 
\end{definition}

\begin{definition} [Possibly-optimal intervention set] \label{def: POIS}
    Let $\mf{X} \subseteq \mf{V} \setminus \{Y\} \setminus \mf{N}$ be a set of variables. We say that $\mf{X}$ is a \textit{possibly-optimal intervention set} (POIS) with respect to $\langle \mc{G}, Y, \mf{N} \rangle$ if there exists an SCM conforming to $\mc{G}$ such that $\mb{E}_{P_{\mf{x}^{\ast}}}Y  >\mb{E}_{P_{\mf{w}^\ast}}Y$ for all $\mf{W}\subseteq \mf{V} \setminus \{Y\} \setminus \mf{N}$ that are \textit{not} equivalent to $\mf{X}$. 
\end{definition}


Minimality implies that every variable $X\in \mf{X}$ affects the reward variable without passing through $\mf{X}\setminus \{X\}$ in $\mathcal{G}$.
We refer to a set as a \textit{possibly-optimal minimal intervention set} (POMIS) \citep{lee2018structural,lee2019structural} if it is both a POIS and minimal. We denote by $\mb{P}_{\mc{G},Y}^\mf{N}$ a set of POMISs with respect to $\langle \mc{G},Y,\mf{N} \rangle$\footnote{For readability, we omit $\mf{N}$ when $\mf{N} = \emptyset$ (e.g., $\mb{P}_{\mc{G},Y}$ and $\langle \mc{G},Y\rangle$), referring to this case as \textit{unconstrained}.}. By definition of POMIS, intervening on non-POMIS cannot yield a better outcome than the optimal one associated with POMIS. This means $\mf{x}^{\star}$ in \Cref{eq: def regret} can be equivalently expressed as 
$\mf{x}^{\star} =
        \arg\max_{ \mf{x} \in \Omega_{\mf{X}}, \mf{X} \in \mb{P}_{\mc{G},Y}^{\mf{N}^\ast}}\mb{E}_{P^\ast_{\mf{x}}}Y$ restricting the exploration space to POMISs. Therefore, an agent who is aware of POMIS should only explore and exploit actions consistent with those sets. All graphical characterizations for PO(M)IS are in \Cref{sec: app POIS}. 
\subsection{Dominance Relationships among Action Spaces}
We say an action space \textit{dominates} another when it behaves better than or equal to another with respect to maximum achievable expected reward. For example, it is immediate from the definition of POIS (\Cref{def: POIS}) that all POISs with respect to $\langle \mc{G}, Y, \mf{N} \rangle$ dominate any non-POISs under the same constraint. Let $\mf{W} \subseteq \mf{V} \setminus \{Y\} \setminus \mf{N}^\ast$ be a set that is \textit{not} a POIS with respect to $\langle \mc{G}, Y, \mf{N}^\ast \rangle$.
According to \Cref{def: POIS}, we can derive that $\mf{W}$ cannot outperform the target POMISs with respect to $\langle \mc{G},Y,\mf{N}^\ast \rangle$ (i.e., POMISs in the target environment). The following inequality states that \textit{the target POMIS dominates sets which are non-POISs}. 

\setlength{\belowdisplayskip}{3pt}%
\setlength{\abovedisplayskip}{5pt}%
\begin{align}
\mathbb{E}_{P^\ast_{\mf{x}^{\star}}} Y
\geq 
\mathbb{E}_{P^\ast_{\mf{w}^{\ast}}} Y \label{eq: dominace lower}
\end{align}
 
This inequality implies that if there exists at least one non-POIS action $\mf{w} \in \Omega_{\mf{W}}$ whose expected reward is greater than that of any target POMIS action, then such an action cannot be the true optimal action. Beyond such trivial cases, we now turn our attention to more nuanced dominance relations that arise between constrained and unconstrained PO(M)ISs. Let $\mf{r}^\star$ be an optimal POIS action with respect to $\langle \mc{G}, Y \rangle$. The following inequality says \textit{unconstrained POISs dominates target POMISs}. 
 
    \setlength{\belowdisplayskip}{0pt}%
    \setlength{\abovedisplayskip}{3pt}%
    \begin{align}
    \mathbb{E}_{P^\ast_{\mf{x}^{\star}}} Y
    \leq
    \mathbb{E}_{P^\ast_{\mf{r}^{\star}}} Y. \label{eq: dominace upper} 
    \end{align}

This implies that the expected rewards of target POMIS arms are upper bounded by the right-hand side of \Cref{eq: dominace upper}. To witness, consider the ongoing cardiovascular example (\Cref{wrapfig: cardiovascular selection}). Suppose blood pressure $W$ is non-manipulable, i.e., $\mf{N}^\ast = \{W\}$. The set of POMISs is then given by $\mb{P}_{\mc{G},Y}^{\mf{N}^\ast} = \{\{X_1\},\{X_1,X_2\} \}$, which implies that the optimal action $\mf{x}^\star$ must be consistent with $do(x_1^\ast)$ or $do(\{x_1,x_2\}^\ast)$ but not with $do(\emptyset)$ or $do(x_2^\ast)$. According to the dominance relationship, $do(x_1)$ and $do(x_1,x_2)$ can be interpreted as the \textit{best alternative} plans under the constraint $\mf{N}^\ast= \{W\}$. For concreteness, we consider an unconstrained (i.e., $\mf{N} = \emptyset$) POIS $\mf{R} = \{W,X_2\}$. The expected reward under $do(x_1^\ast)$ can be decomposed as $\mb{E}_{P^\ast_{x_1^\ast}} Y = \sum_{\mf{r}} \mb{E}_{P^\ast_{x_1^\ast}}[Y\mid \mf{r}] P^\ast_{x_1^\ast}(\mf{r}) 
\overset{\text{(a)}}{=} 
\sum_{\mf{r}} \mb{E}_{P^\ast_{x_1^\ast,\mf{r}}} [Y] P^\ast_{x_1^\ast}(\mf{r}) 
\overset{\text{(b)}}{=} 
\sum_{\mf{r}} \mb{E}_{P^\ast_{\mf{r}}} [Y] P^\ast_{x_1^\ast}(\mf{r}) 
\leq
\mb{E}_{P^\ast_{\mf{r}^\ast}} Y = \mb{E}_{P^\ast_{\mf{r}^\star}} Y$ where (a) follows from Rule~2 and (b) follows from Rule~3 of do-calculus. This inequality shows that $\mb{E}_{P^\ast_{x_1^\ast}} Y \leq \mb{E}_{P^\ast_{\mf{r}^\star}} Y$, and a similar argument applies to $do(\{x_1,x_2\}^\ast)$ implying that $\mb{E}_{P^\ast_{\mf{x}^\star}} Y \leq \mb{E}_{P^\ast_{\mf{r}^\star}} Y$. In fact, one can observe $\mb{E}_{P_{\{x_1,x_2\}^\ast}^\ast} = \mb{E}_{P_{x_1^\ast}^\ast} Y = 0.7 \leq \mb{E}_{P_{\mf{r}^\star}^\ast} = 1$. We now turn to the dominance relations involving non-POIS actions in \Cref{eq: dominace lower}. Consider $\mf{W} = \{X_2\}$ a non-POIS with respect to $\langle \mc{G}, Y, \mf{N}^\ast \rangle$. We find $\mb{E}_{P^\ast_{\mf{w}^\ast}} Y = 0.5$, which implies that the optimal action $\mf{x}^\star$ must satisfy $\mb{E}_{P^\ast_{\mf{x}^\star}} \geq 0.5$. Indeed, as previously shown, $\mb{E} _{P^\ast_{\mf{x}^\star}} = 0.7$. Based on the dominance relations in \Cref{eq: dominace upper,eq: dominace lower}, we introduce the following dominance relationship.  
\begin{restatable}[Dominance relationship]{theorem}{thmDominanceRelations}\label{thm: dominance relations}
Let $\mf{r}^\star$ be an optimal action with respect to $\langle \mc{G}, Y, \mf{N}\rangle$ where $\mf{N}$ is a subset of $\mf{N}^\ast$. Let $\mf{W}$ be a non-POIS with respect to $\langle \mc{G}, Y, \mf{N}^\ast \rangle$. Then $\mathbb{E}_{P^\ast_{\mf{x}^\star}} Y$ is bounded by 
\setlength{\belowdisplayskip}{3pt}%
\setlength{\abovedisplayskip}{5pt}%
\begin{align}
\mathbb{E}_{P^\ast_{\mf{w}^{\ast}}} Y \leq 
\mathbb{E}_{P^\ast_{\mf{x}^{\star}}} Y
\leq
\mathbb{E}_{P^\ast_{\mf{r}^\star}} Y .\label{eq: dominace relations} 
\end{align}
\end{restatable}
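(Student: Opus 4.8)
The plan is to prove \Cref{thm: dominance relations} by treating its two inequalities independently, since the chain in \Cref{eq: dominace relations} is precisely the lower bound \Cref{eq: dominace lower} glued to the upper bound \Cref{eq: dominace upper}, now stated for an arbitrary non-manipulable set $\mf{N}\subseteq\mf{N}^\ast$ rather than only for the unconstrained case $\mf{N}=\emptyset$. The one fact I will reuse is the reduction recorded after \Cref{def: POIS}: for any non-manipulable set $\mf{N}'$, maximizing $\mb{E}_{P^\ast_{\mf{x}}}Y$ over all $do(\mf{x})$ with $\mf{X}\subseteq\mf{V}\setminus\{Y\}\setminus\mf{N}'$ coincides with maximizing it over the POMISs with respect to $\langle\mc{G},Y,\mf{N}'\rangle$, because no non-POMIS can beat the best POMIS. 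Consequently both $\mf{x}^\star$ and $\mf{r}^\star$ may be read as global maximizers over their respective full families of feasible interventions; in particular the target optimum is attained on some POMIS $\mf{X}\in\mb{P}_{\mc{G},Y}^{\mf{N}^\ast}$, i.e.\ $\mf{x}^\star\in\Omega_\mf{X}$ with $\mb{E}_{P^\ast_{\mf{x}^\star}}Y=\max_{\mf{x}\in\Omega_\mf{X}}\mb{E}_{P^\ast_{\mf{x}}}Y$.

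For the lower bound I would simply observe that a non-POIS $\mf{W}$ with respect to $\langle\mc{G},Y,\mf{N}^\ast\rangle$ is in particular a subset of $\mf{V}\setminus\{Y\}\setminus\mf{N}^\ast$, so $do(\mf{w}^\ast)$ is a feasible intervention in the target. Since $\mf{x}^\star$ maximizes $\mb{E}_{P^\ast_{\mf{x}}}Y$ over all such feasible interventions, $\mb{E}_{P^\ast_{\mf{w}^\ast}}Y\le\mb{E}_{P^\ast_{\mf{x}^\star}}Y$. The hypothesis that $\mf{W}$ is a \emph{non}-POIS is not used in this inequality — it only ensures the bound is a meaningful certificate rather than the optimum itself — which is why \Cref{eq: dominace relations} is stated with $\le$.

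For the upper bound the shortest route is set-theoretic: from $\mf{N}\subseteq\mf{N}^\ast$ we get $\mf{V}\setminus\{Y\}\setminus\mf{N}^\ast\subseteq\mf{V}\setminus\{Y\}\setminus\mf{N}$, so the POMIS $\mf{X}$ that attains the target optimum is itself a feasible intervention set for $\langle\mc{G},Y,\mf{N}\rangle$, hence $\mb{E}_{P^\ast_{\mf{x}^\star}}Y=\max_{\mf{x}\in\Omega_\mf{X}}\mb{E}_{P^\ast_{\mf{x}}}Y\le\mb{E}_{P^\ast_{\mf{r}^\star}}Y$ by the reduction. To also recover the sharper reading illustrated by the cardiovascular computation — that a \emph{specific} richer set $\mf{R}$ (not merely $\mf{X}$) upper-bounds the target optimum, which is what the algorithm will exploit — I would instead fix a set $\mf{R}\subseteq\mf{V}\setminus\{Y\}\setminus\mf{N}$ (ideally a POIS with respect to $\langle\mc{G},Y,\mf{N}\rangle$) through which every $\mf{X}$-to-$Y$ influence factors and write
\[
\mb{E}_{P^\ast_{\mf{x}^\star}}Y
=\sum_{\mf{r}}\mb{E}_{P^\ast_{\mf{x}^\star}}[Y\mid\mf{r}]\,P^\ast_{\mf{x}^\star}(\mf{r})
\overset{(a)}{=}\sum_{\mf{r}}\mb{E}_{P^\ast_{\mf{x}^\star,\mf{r}}}[Y]\,P^\ast_{\mf{x}^\star}(\mf{r})
\overset{(b)}{=}\sum_{\mf{r}}\mb{E}_{P^\ast_{\mf{r}}}[Y]\,P^\ast_{\mf{x}^\star}(\mf{r})
\le\max_{\mf{r}\in\Omega_\mf{R}}\mb{E}_{P^\ast_{\mf{r}}}Y
\le\mb{E}_{P^\ast_{\mf{r}^\star}}Y,
\]
where $(a)$ is Rule~2 and $(b)$ is Rule~3 of do-calculus, and the last step again uses $\mf{N}\subseteq\mf{N}^\ast$ together with the reduction.

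The one genuine obstacle is steps $(a)$ and $(b)$: to make the argument sharper than the degenerate choice $\mf{R}=\mf{X}$, I must exhibit a set $\mf{R}$ disjoint from $\{Y\}\cup\mf{N}$ for which the separations demanded by Rule~2 and Rule~3 — roughly, $Y$ separated from $\mf{R}$ given $\mf{X}$, and $Y$ separated from $\mf{X}$ given $\mf{R}$, in the respective mutilated graphs such as $\Gio{\mf{X}}{\mf{R}}$ — hold for \emph{every} SCM conforming to $\mc{G}$, not just $\mc{M}^\ast$. I expect this to be a purely graph-theoretic check that goes through by taking $\mf{R}$ built from $\Pa(Y)$ (or the interventional border of $Y$) after removing $\mf{N}$, as in the example where $\mf{R}=\Pa(Y)=\{W,X_2\}$; with such an $\mf{R}$ in hand, the remaining mixture-below-maximum step is immediate, and assembling the two halves gives \Cref{eq: dominace relations}.
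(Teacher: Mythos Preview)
Your proposal is correct. The set-theoretic argument you give first --- both $\mf{x}^\star$ and $\mf{r}^\star$ are global maximizers over their full feasible families, and $\mf{N}\subseteq\mf{N}^\ast$ makes the target family a subset of the weaker-constraint family --- already proves both inequalities in \Cref{eq: dominace relations}; the do-calculus route you sketch afterwards is not needed for the theorem as stated, and you correctly flag that its separation conditions are unverified.

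The paper's own proof uses the same underlying idea but packages it differently: it first reduces to the case $\mf{N}=\emptyset$ via the latent projection $\mc{G}\langle\mf{V}\setminus\mf{N}\rangle$ (\Cref{prop: thm4 in lee2019}), then invokes \Cref{lem: MIS to MIS} to assert that every POMIS under $\mf{N}^\ast$ is an MIS under the unconstrained setting, and finally appeals to \Cref{cor: equivalence} to align POISs with POMISs. Your direct set-inclusion argument is more elementary and sidesteps the MIS/latent-projection machinery entirely; the paper's route is a bit more structural but buys nothing additional for this particular theorem --- it mainly reuses language that the paper needs anyway for the algorithmic sections. The do-calculus decomposition you attempt at the end mirrors the paper's \emph{motivating example} (the cardiovascular calculation before \Cref{eq: dominace upper}), not its proof; that example works because of a specific graph, and the general version would require the POIS characterization in \Cref{prop: characterization POIS}, but none of that is required to establish \Cref{thm: dominance relations}.
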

%
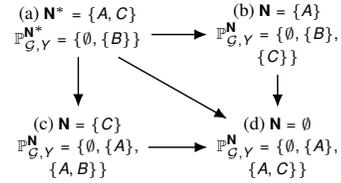
\begin{wrapfigure}[12]{r}{0.35\textwidth}


    \vspace{-1.3em}
    \begin{minipage}[b]{0.35\textwidth}\centering
        \begin{tikzpicture}[x=2.65cm,y=1.5cm,>={Latex[width=1.4mm,length=1.7mm]},
        font=\sffamily\sansmath\tiny,
        line width=0.19mm,
       every node/.style={scale=1},
        RR/.style={draw,circle,inner sep=0mm, minimum size=4.5mm,font=\sffamily\tiny}]\centering
        \node[align=center] (C) at (0,0) {
          {\rm \scriptsize (c)} $\mf{N} = \{C\}$\\
          $\mb{P}_{\mc{G},Y}^{\mf{N}} = \{\emptyset,\{A\},$\\
          $ \{A,B\}\}$
        };
        
        \node[align = center] (E) at (1,0) {
          {\rm \scriptsize (d)} $\mf{N} = \emptyset$\\
          $\mb{P}_{\mc{G},Y}^{\mf{N}} = \{\emptyset,\{A\},$\\
          $ \{A,C\}\}$
        };
        
        \node[align = center] (A) at (1,1) {
          {\rm \scriptsize (b)} $\mf{N} = \{A\}$\\
          $\mb{P}_{\mc{G},Y}^{\mf{N}} = \{\emptyset,\{B\},$\\
          $\{C\}\}$
        };
        
        \node[align = center] (AC) at (0,1) {
          {\rm \scriptsize (a)} $\mf{N}^\ast = \{A,C\}$\\
          $\mb{P}_{\mc{G},Y}^{\mf{N}^\ast} = \{\emptyset,\{B\}\}$\\\,
        };
        
        \draw[->] (AC) -- (A);
        \draw[->,shorten <=-2mm] (AC) -- (C);
        
        \draw[->,shorten <=-4mm,shorten >=-3mm] (AC) -- (E);

        \draw[->] (A) -- (E);
        \draw[->] (C) -- (E);

        \end{tikzpicture}
    \end{minipage}
    \caption{Hierarchical relationships between POMISs under different constraints. Arrows indicate the direction of dominance relations. }
    \label{wrapfig: hierarchy}
\end{wrapfigure}
The target POMISs dominate non-POISs under the same constraint, while being dominated by PO(M)ISs defined under a \textit{weaker constraint}. The second inequality in \Cref{eq: dominace relations} can be interpreted as a generalized version of \Cref{eq: dominace upper} where the target POMISs is regarded as POMISs with respect to $\langle \mc{G} \langle \mf{V} \setminus \mf{N} \rangle, Y, \mf{N}^\ast \setminus \mf{N} \rangle$. Therefore, each $\mb{E}_{P_{\mf{r}^\star}^\ast} Y$ obtained under a weaker constraint (any subset of $\mf{N}^\ast$) provides a valid upper bound for the target POMISs. While one might be concerned that evaluating POMISs for \textit{all} weaker constraints is computationally exhaustive, it is worth noting that any proper subset of $\mf{N}$ corresponds to a strictly weaker constraint than $\mf{N}$, and the associated bound is \textit{looser}; and thus the proper subsets of $\mf{N}$ dominate $\mf{N}$. We illustrate this in \Cref{wrapfig: hierarchy}, where the target POMISs (a) are defined under $\mf{N}^\ast = \{A,C\}$ for the causal diagram corresponding to \Cref{wrapfig: 4var selection}. In this setting, each optimal action under a weaker constraint, (b--d), dominates (a); furthermore, (d) dominates both (b) and (c). This means that (d) is not tighter than either (b) or (c); thus if (b) or (c) is known, estimating (d) is unnecessary to upper bound (a). 

Equipping with this dominance knowledge, the corresponding algorithm (Alg.~\ref{alg: UDB} in \Cref{sec: app dominance bound}) hierarchically traverses the space of constrained POMISs and stops \textit{early} when a transferable $\mathbb{E}_{P^\ast_{\mf{r}^\star}} Y$ is found, thereby avoiding unnecessary computation. However, it is possible that while $\mathbb{E}_{P^\ast_{\mf{r}^\star}} Y$ is \textit{not} transferable from the sources $\Pi$, its upper bound $u_{\mf{r}^\star}$ can still be transferred; thus, it provides valid bounds when integrated with \Cref{thm: dominance relations}, $\mathbb{E}_{P^\ast_{\mf{x}^{\star}}} Y \leq u_{\mf{r}^\star}$ (we will refer to this as \textit{dominance bounds}). In the following section, we describe how to compute such valid bounds of expected rewards from the sources $\Pi$ (\Cref{subsec: trans1}) and how to leverage the dominance bounds in online interaction (\Cref{subsec: trans3}).

\section{Transporting Actions from Source Environments}\label{sec: trans}

In this section, we investigate transportability of expected rewards in detail and present a method leveraging this knowledge. Consider the collective selection diagram $\mc{G}^{\boldsymbol{\Delta}}$ in \Cref{wrapfig: 4var selection}, and suppose that the first dataset from $\pi^1$ is collected under observation $\mb{Z}^1 = \{\emptyset\}$ while dataset from $\pi^2$ is obtained under \textit{randomized controlled trial} (RCT) with respect to $\mb{Z}^2 = \{\{A,C\}\}$. The target environment $\pi^\ast$ is assumed to be subject to the constraint $\mf{N}^\ast = \{C\}$. Our objective is to identify transportable quantities to minimize unnecessary exploration by leveraging prior information from $\pi^1$ and $\pi^2$. As a representative instance, the causal effect of the POMIS action $do(a)$ in $\pi^\ast$ can be written as: 
\setlength{\belowdisplayskip}{-2pt}%
\setlength{\abovedisplayskip}{-2pt}%
    \setlength{\jot}{-4pt}
\begin{align}
    P^\ast_a(y) &= \sum_{b, c}P^\ast_a(b)P^\ast_a(c \mid b)P^\ast_a(y \mid b,c) \overset{\text{(a)}}{=} \sum_{b, c} P^\ast_{a,c}(b) P^\ast_\emptyset(c \mid a,b) P^\ast_{a,c}(y \mid b)\nonumber\\
    &=\sum_{b, c} P_\emptyset^\ast(c \mid a,b) P^\ast_{a,c}(b,y)
    \overset{\text{(b)}}{=}\sum_{b, c} P_\emptyset^1(c \mid a,b) P^\ast_{a, c}(b,y)
    \overset{\text{(c)}}{=}\sum_{b, c} P_\emptyset^1(c \mid a,b) P^2_{a, c}(b,y).\label{eq: transport example}
\end{align}
Specifically, note that equality (a) follows from applications of Rule 2 (converting conditioning on $c$ to $do(c)$ and $do(a)$ to conditioning on $a$) and Rule 3 (adding $do(c)$) of do-calculus. Equation (b) holds since the discrepancy between $\pi^\ast$ and $\pi^1$ is irrelevant due to $(S_A \CI_d C \mid A,B)_{\mc{G}^{\boldsymbol{\Delta}}}$. The final equality (c) is derived from the indifference to the disparities in mechanisms $f_B$ and $f_Y$ between $\pi^\ast$ and $\pi^2$. We observe that the causal effect $P^\ast_a(y)$ can be expressed in terms of $P^1_\emptyset$ and $P^2_{a, c}$, indicating that $\mb{E}_{P_a^\ast} Y {=} \sum_{y}P_{a}^\ast(y)$ can be estimated from the sources $\Pi$. Accordingly, we say $P^\ast_a(y)$ (or equivalently $\mb{E}_{P^\ast_a} Y$) is \textit{transportable}. We now provide a formal definition of transportability. 


\begin{definition}[Transportability \citep{lee2020general}] Let $\mc{G}^{\boldsymbol{\Delta}}$ be a collective selection diagram with respect to $\Pi = \{\pi_1, \cdots, \pi_n\}$ with a target domain $\pi^\ast$. Let $\mb{Z} = \{\mb{Z}^i\}_{i=1}^n$ be a specification of available prior $\mb{Z}^i$ conducted in source environment $\pi^i$. We say that $P^\ast_\mf{x}(y)$ is \textit{transportable} with respect to $\langle \mc{G}^{\boldsymbol{\Delta}}, \mb{Z} \rangle$ if $P^\ast_\mf{x}(y)$ is uniquely computable from $\mc{P}^{\Pi}_\mb{Z} = \{P_\mf{z}^i \mid \mf{z} \in \Omega_{\mf{Z}
}, \mf{Z} \in \mb{Z}^i \in \mb{Z} \}$ in any collection of models that induce $\mc{G}^{\boldsymbol{\Delta}}$. 
\end{definition}




We introduce the following graphical concepts, which are widely used in the causality literature. Let $\cc_\mc{G} = \{\mf{C}_q\}_{q=1}^m$ be the collection of \textit{c-components} of $\mc{G}$. For $\mf{C} \subseteq \mf{V}$, we define the quantity $Q[\mf{C}](\mf{v}) = P_{\mf{v}\setminus \mf{c}}(\mf{c})$, which corresponds to the post-intervention distribution and is referred to as a \textit{c-factor}. For convenience, we omit input $\mf{v}$ and write $Q[\mf{C}]$. We denote the quantities for the target environment $\pi^\ast$ as $Q^\ast[\mf{C}] =  P_{\mf{v} \setminus \mf{c}}^{\boldsymbol{\Delta}}(\mf{c} \mid \mf{s} = \boldsymbol{0})$ and for sources $\pi^i$ as $Q^i[\mf{C}] = P_{\mf{v} \setminus \mf{c}}^{\boldsymbol{\Delta}}(\mf{c}  \mid \mf{s}^i = \mf{i}, \mf{s}^{-i} = \boldsymbol{0})$ consistently. We denote by $\mf{Y}^{\boldsymbol{+}} = \An(Y)_{\mc{G}_{\underline{\mf{X}}}}$ the set of variables that affect the reward $Y$ under the intervention on $\mf{X}$. 
The expected reward in the target environment $\mb{E}_{P^\ast_{\mf{x}}} Y$ can be uniquely expressed using $m$ c-factors ${\mf{C}_q \in \cc_{\mc{G}[\mf{Y}^{\boldsymbol{+}}]}}$ as follows:  
\setlength{\belowdisplayskip}{-1pt}%
\setlength{\abovedisplayskip}{-4pt}%
\begin{align}
    \mb{E}_{P^\ast_{\mf{x}}} Y = 
    \sum_{y} y P^\ast_{\mf{x}}(y) 
    = \sum_{\mf{y}^{\boldsymbol{+}} } y \prod_{q=1}^mQ^\ast[\mf{C}_q]. \label{eq: decompose reward}
\end{align}    
When a c-component $\mf{C}$ satisfies $\mf{C} \cap \Delta^i = \emptyset$ and there exists a c-component $\mf{C} \subseteq \mf{C}'$ such that $Q^i[\mf{C}]$ is \textit{identifiable}\footnote{This means that the quantity $Q^i[\mf{C}] = P_{\mf{v} \setminus \mf{c}}^{\boldsymbol{\Delta}}(\mf{c}  \mid \mf{s}^i = \mf{i}, \mf{s}^{-i} = \boldsymbol{0})$ is uniquely computable from $\mc{G}[\mf{C}']$. We provide the formal definition of identifiability in \Cref{def: identification}, along with further detailed description in \Cref{sec: app background}. } from $\mc{G}[\mf{C}']$, we refer to $Q^\ast[\mf{C}]$ as being \textit{transportable} from $\pi^i$ \citep{lee2020general}. 
Furthermore, ${P^\ast_\mf{x}} (Y)$ and $\mb{E}_{P^\ast_\mf{x}} Y$ are transportable with respect to $\langle \mc{G}^{\boldsymbol{\Delta}}, \mb{Z} \rangle$ \textit{if and only if} all c-factors $Q^\ast[\mf{C}_q]$ in the right-hand side of \Cref{eq: decompose reward} are transportable from some source environment $\pi^i \in \Pi$. 
%
%
To illustrate, we now reformulate \Cref{eq: transport example} in terms of c-factors, following \Cref{eq: decompose reward}: $\mb{E}_{P^\ast_a} Y = \sum_{y} y P^\ast_a(y) = \sum_{b,c,y} y Q^\ast[C]Q^\ast[B,Y]$. Since $\{C\} \cap \Delta_1 = \emptyset$ and $\{B, Y\} \cap \Delta_2 = \emptyset$, and each corresponding c-factor is identifiable from $\pi^1$ and $\pi^2$, respectively, each c-factor is transportable from $\pi^1$ and $\pi^2$. Therefore, $\mb{E}_{P^\ast_a} Y$ is transportable and is given by $\mb{E}_{P^\ast_a} Y = \sum_{b,c,y} y Q^1[C]Q^2[B,Y]$.  

When $\mb{E}_{P^\ast_\mf{x}} Y$ is not transportable, it may seem that a learning agent cannot obtain any assistance from sources and must estimate outcomes entirely from scratch, engaging in \textit{cold} exploration. However, while determining the exact values may be infeasible, the learner may still extrapolate partial knowledge from the prior to improve estimates within a feasible interval. 

\subsection{Bounding Non-transportable Actions}\label{subsec: trans1}



Our next result concerns the derivation of bounds for the target c-factor $Q^\ast[\mf{C}]$ from a non-identifiable source quantity $Q^i[\mf{C}]$ in terms of an identifiable source quantity $Q^i[\mf{C}']$ where $\mf{C} \subseteq \mf{C}'$. 

\begin{restatable}[]{proposition}{propQTransBound}\label{prop: Q trans bound}
    Let $\mf{C}$ be a c-component in $\mc{G} \setminus \mf{X}$ satisfying $\mf{C} \cap \Delta^i = \emptyset$ and $\mf{D}$ be a c-component satisfying $\mf{C} \subseteq \mf{D}$. Let $\mf{C}' =  \An(\mf{C})_{\mc{G}[\mf{D}]}$. The target c-factor $Q^\ast[\mf{C}]$ is bounded in $[\ell,
        u]$ where (i) if $\mf{C} = \mf{C}'$, then $\ell = u = \sum_{\mf{c}' \setminus \mf{c}} Q^i[\mf{C}']$; (ii) otherwise, $\ell = Q^i[\mf{C'}] $
            and $u = Q^i[\mf{C}'] +1 - \sum_{\mf{c}}Q^i[\mf{C}']$. 
\end{restatable}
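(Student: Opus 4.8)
The plan is to decompose the argument into a \emph{transportability} step, which replaces the target c-factor $Q^\ast[\mf{C}]$ by its source counterpart $Q^i[\mf{C}]$, and a \emph{partial-identification} step, which sandwiches $Q^i[\mf{C}]$ between functionals of $Q^i[\mf{C}']$ using only the consistency property of structural causal models. For the first step, note that in $Q[\mf{C}](\mf{v}) = P_{\mf{v}\setminus\mf{c}}(\mf{c})$ every endogenous variable outside $\mf{C}$ is intervened upon, and since $\mf{C}\cap\Delta^i=\emptyset$ this includes every variable in $\Delta^i$; hence in the diagram obtained from $\mc{G}^{\boldsymbol{\Delta}}$ by deleting all edges incoming to $\mf{V}\setminus\mf{C}$, every selection edge $S_V\to V$ with $S_V\in\mf{S}^i$ is deleted, so $\mf{S}^i$ becomes isolated and $(\mf{S}^i\CI_d\mf{C})$ holds there. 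Since the target and source-$i$ regimes differ only in the value of $\mf{S}^i$ (both set $\mf{S}^{-i}=\boldsymbol{0}$), this $d$-separation gives $Q^\ast[\mf{C}] = P^{\boldsymbol{\Delta}}_{\mf{v}\setminus\mf{c}}(\mf{c}\mid\mf{s}=\boldsymbol{0}) = P^{\boldsymbol{\Delta}}_{\mf{v}\setminus\mf{c}}(\mf{c}\mid\mf{s}^i=\mf{i},\mf{s}^{-i}=\boldsymbol{0}) = Q^i[\mf{C}]$, the standard c-factor transportability fact \citep{lee2020general} already used in the discussion preceding the proposition. It therefore suffices to bound $Q^i[\mf{C}]$ in terms of $Q^i[\mf{C}']$ within the single model $\mc{M}^i$, pointwise in $\mf{v}$.

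For the second step, fix a value $\mf{v}$ and set $\mf{M} := \mf{C}'\setminus\mf{C}$, with $\mf{c},\mf{m}$ the $\mf{C}$- and $\mf{M}$-coordinates of $\mf{v}$. Because $\mf{C}\subseteq\mf{C}'$ (as $\mf{C}\subseteq\An(\mf{C})_{\mc{G}[\mf{D}]}=\mf{C}'$), the intervention $\mf{v}\setminus\mf{c}$ splits as $(\mf{v}\setminus\mf{c}')\cupdot\mf{m}$, so in the submodel $\mc{M}^i_{\mf{v}\setminus\mf{c}'}$ --- whose free endogenous variables are exactly $\mf{C}'=\mf{C}\cup\mf{M}$ and whose observational law we call $P'$ --- we have $Q^i[\mf{C}'](\mf{v}) = P'(\mf{C}=\mf{c},\mf{M}=\mf{m})$ and $Q^i[\mf{C}](\mf{v}) = P^i_{\mf{v}\setminus\mf{c}}(\mf{c}) = P'_{\mf{M}=\mf{m}}(\mf{c}) = P'(\mf{C}_{\mf{m}}=\mf{c})$, where $\mf{C}_{\mf{m}}$ is the response of $\mf{C}$ to $do(\mf{M}=\mf{m})$ in $\mc{M}^i_{\mf{v}\setminus\mf{c}'}$. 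Consistency yields $\{\mf{M}=\mf{m}\}\subseteq\{\mf{C}=\mf{C}_{\mf{m}}\}$ under $P'$, hence $P'(\mf{c},\mf{m}) = P'(\mf{C}_{\mf{m}}=\mf{c},\mf{M}=\mf{m})$; splitting $\{\mf{C}_{\mf{m}}=\mf{c}\}$ on $\{\mf{M}=\mf{m}\}$ and its complement then gives
\[
P'(\mf{c},\mf{m}) \;\le\; P'(\mf{C}_{\mf{m}}=\mf{c}) \;\le\; P'(\mf{c},\mf{m}) + 1 - P'(\mf{M}=\mf{m}).
\]
Since $P'(\mf{M}=\mf{m}) = \sum_{\mf{c}}P'(\mf{c},\mf{m}) = \sum_{\mf{c}}Q^i[\mf{C}'](\mf{v})$, this is exactly $\ell = Q^i[\mf{C}'] \le Q^i[\mf{C}] \le Q^i[\mf{C}'] + 1 - \sum_{\mf{c}}Q^i[\mf{C}'] = u$, which is case~(ii). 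In case~(i) the hypothesis $\mf{C}=\mf{C}'$ forces $\mf{M}=\emptyset$, both inequalities collapse, and $Q^\ast[\mf{C}] = Q^i[\mf{C}] = Q^i[\mf{C}']$, which is $\sum_{\mf{c}'\setminus\mf{c}}Q^i[\mf{C}']$ as an empty marginalization; equivalently, $\mf{C}=\mf{C}'$ is precisely when $\mf{C}$ is ancestral in $\mc{G}[\mf{C}']$, so Tian's marginalization lemma yields this equality directly. Combining with the transportability step gives $Q^\ast[\mf{C}]\in[\ell,u]$.

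The mathematical core --- the consistency sandwich, which is just the sharp Manski/Fréchet-type interval for $P(\mf{C}\mid do(\mf{M}))$ given the joint $P(\mf{C},\mf{M})$ under unrestricted $\mf{C}$--$\mf{M}$ confounding --- is two lines; I expect the delicate part to be the bookkeeping of $Q[\cdot]$ as a function of $\mf{v}$: checking that $\mf{v}\setminus\mf{c}$ decomposes as $(\mf{v}\setminus\mf{c}')\cupdot\mf{m}$, that the free variables of $\mc{M}^i_{\mf{v}\setminus\mf{c}'}$ are exactly $\mf{C}'$ so that $Q^i[\mf{C}'](\mf{v})=P'(\mf{c}')$ with $\mf{c}'=(\mf{c},\mf{m})$, and that $\sum_{\mf{c}}Q^i[\mf{C}']$ in $u$ holds the $\mf{C}'\setminus\mf{C}$- (and all remaining) coordinates of $\mf{v}$ fixed while summing only over the $\mf{C}$-coordinates, which is what reproduces $P'(\mf{M}=\mf{m})$. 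Note that the c-component status of $\mf{C}$ and the role of $\mf{D}$ are not needed for the bound itself but for the surrounding claim that $Q^i[\mf{C}']$ is identifiable --- it equals $\sum_{\mf{d}\setminus\mf{c}'}Q^i[\mf{D}]$ because $\mf{C}'=\An(\mf{C})_{\mc{G}[\mf{D}]}$ is ancestral in both $\mc{G}[\mf{D}]$ and $\mc{G}[\mf{C}']$, so identifiability of $Q^i[\mf{D}]$ transfers --- and that no assumption is placed on $\mf{C}'\cap\Delta^i$, which is exactly why a non-transportable $Q^\ast[\mf{C}']$ can still underwrite a valid bound on the transportable-but-unidentified $Q^\ast[\mf{C}]$.
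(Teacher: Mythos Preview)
Your proposal is correct and takes essentially the same approach as the paper: transport $Q^\ast[\mf{C}]$ to $Q^i[\mf{C}]$ via $\mf{C}\cap\Delta^i=\emptyset$, then derive the natural (Manski) bounds using consistency. The paper writes the second step in counterfactual notation $P^i(\mf{c}_{\mf{v}\setminus\mf{c}})$ and invokes Rule~1 of the ctf-calculus, while you work in the submodel $\mc{M}^i_{\mf{v}\setminus\mf{c}'}$ with $P'$; your $\mf{M}$ is the paper's $\mf{T}=\mf{C}'\setminus\mf{C}$, but the underlying argument is identical.
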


We now revisit the collective selection diagram $\mc{G}^{\boldsymbol{\Delta}}$ in \Cref{wrapfig: 4var selection} under a more challenging setting, where the source is modified by supposing that the given distributions correspond to $\mb{Z}^1 = \{\emptyset, \{C\}\}$ and $\mb{Z}^2 = \{\{B\}\}$. From $\pi^1$, although $\{B, Y\} \cap \Delta_1 = \emptyset$, $Q^1[B,Y]$ is non-identifiable; consequently, $Q^\ast[B,Y]$ is non-transportable from $\pi^1$. In $\pi^2$, there exists no c-component involving $B$, rendering the target c-factor non-transportable. This structural limitation prevents the learner from identifying $Q^\ast[B, Y]$, failing transportability of $\mb{E}_{P^\ast_a} Y$. While $Q^\ast[B,Y]$ is not transportable from either $\pi^1$ or $\pi^2$, it is important to note that $Q^\ast[B,Y] = Q^1[B,Y]$ holds due to $\{B,Y\} \cap \Delta_1 = \emptyset$. This implies that any valid bound on $\ell \leq Q^1[B, Y] \leq u$ also induces a valid bound on $ Q^\ast[B, Y]$. Following \Cref{prop: Q trans bound}, we consider $\mf{C}' = \{A, B, Y\}$; the lower bound $\ell$ is given by $Q^1[A, B, Y]$, and the upper bound $u$ is $Q^1[A, B, Y] + 1 - \sum_{b,y} Q^1[A, B, Y]$. Therefore, bound on $\mb{E}_{P^\ast_a} Y$ is written as 
\setlength{\belowdisplayskip}{-2pt}%
\setlength{\abovedisplayskip}{-2pt}%
\[{\sum_{b,c,y} y P^1_{\emptyset}(c \mid a,b)P^1_c(a,b,y) \leq\mb{E}_{P^\ast_a} Y \leq \sum_{b,c,y} y P^1_{\emptyset}(c \mid a,b)\left\{P^1_c(a,b,y) + 1 - P^1_c(a)\right\}}\] 
which is derived by $Q^1[C] = P^1(c\mid a,b)$ and $Q^1[A,B,Y] = P^1_c(a,b,y)$. With this in hand, we are ready to formally construct valid bounds of expected rewards of target actions given $\langle \mc{G}^{\boldsymbol{\Delta}}, \mb{Z} \rangle$. 
%
\begin{restatable}[Causal bounds]{theorem}{thmCausalBound}\label{thm: causal bound}
    Given $\langle \mc{G}^{\boldsymbol{\Delta}}, \mb{Z} \rangle$, the target expected reward $\mb{E}_{P^\ast_\mf{x}} Y$ can be bounded by $[\ell_\mf{x},u_\mf{x}]$ if for all c-factors $Q^\ast[\mf{C}_q]$ in the right-hand side of \Cref{eq: decompose reward}, there exists a source $\pi^i \in \Pi$  satisfying $\mf{C}_q \cap \Delta^i = \emptyset$ and a computable ancestral c-component $\mf{C}_q \subseteq \mf{C}_q'$ from $\mc{G} \setminus \mf{Z}$ where $\mf{Z} \in \mb{Z}^i \in \mb{Z}$. The bound $[\ell_\mf{x},u_\mf{x}]$ is defined as: 
    \setlength{\belowdisplayskip}{0pt}%
    \setlength{\abovedisplayskip}{0pt}%
    \setlength{\jot}{-4pt}
    \begin{align}
    \ell_{\mf{x}} \triangleq 
    \sum_{\mf{y}^{\boldsymbol{+}}} y&
        \textcolor{NavyBlue}{\prod_{q=1}^k
        Q^{i_q}[\mf{C}_q] }
        {
        \prod_{q = k+1}^m
        Q^{j_{q}}[\mf{C}'_{q}]} 
        \leq 
        \sum_{\mf{y}^{\boldsymbol{+}}} y
        \textcolor{NavyBlue}{\prod_{q=1}^k
        Q^{i_q}[\mf{C}_q] }
        {
        \prod_{q = k+1}^m
        Q^{\ast}[\mf{C}_{q}]}\label{eq: casual lower bound}\\
    &\leq 
     \sum_y y \min\Bigl\{1, \sum_{\mf{y}^{\boldsymbol{+}} \setminus \{y\}}
        \textcolor{NavyBlue}{\prod_{q=1}^k
        Q^{i_q}[\mf{C}_q] }
        {
        \prod_{q = k+1}^m
        \big\{ Q^{j_{q}}[\mf{C}'_{q}]  +1 - \sum_{\mf{c}_{q}}Q^{j_{q}}[\mf{C}'_{q}] \big \}} \Bigr\}
     \triangleq u_{\mf{x}} \label{eq: casual upper bound}
    \end{align}
    where $\textcolor{NavyBlue}{\prod_{q=1}^{k}Q^{i_q}[\mf{C}_q]}$ denotes transportable c-factors.
\end{restatable}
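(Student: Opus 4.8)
The plan is to expand $\mb{E}_{P^\ast_\mf{x}} Y$ through the c-factor decomposition of \Cref{eq: decompose reward}, namely $\mb{E}_{P^\ast_\mf{x}} Y = \sum_{\mf{y}^{\boldsymbol{+}}} y \prod_{q=1}^m Q^\ast[\mf{C}_q]$, and then replace each target c-factor $Q^\ast[\mf{C}_q]$ by a quantity estimable from the available priors $\mc{P}^{\Pi}_\mb{Z}$. I would reindex the $m$ c-factors so that $q = 1,\dots,k$ are those that are transportable from some source $\pi^{i_q}$ and $q = k+1,\dots,m$ are those that are only boundable. For the first block, $\mf{C}_q \cap \Delta^{i_q} = \emptyset$ means no selection node of $\mf{S}^{i_q}$ points into $\mf{C}_q$, so the c-factor is invariant across $\pi^{i_q}$ and $\pi^\ast$, yielding the exact identity $Q^\ast[\mf{C}_q] = Q^{i_q}[\mf{C}_q]$ with $Q^{i_q}[\mf{C}_q]$ computable from $P^{i_q}_\mf{z}$ \citep{lee2020general}. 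For the second block, I would invoke \Cref{prop: Q trans bound} with the ancestral c-component $\mf{C}'_q \supseteq \mf{C}_q$ promised by the hypothesis, obtaining the pointwise two-sided bound $Q^{j_q}[\mf{C}'_q] \le Q^\ast[\mf{C}_q] \le Q^{j_q}[\mf{C}'_q] + 1 - \sum_{\mf{c}_q} Q^{j_q}[\mf{C}'_q]$ (the degenerate case $\mf{C}_q = \mf{C}'_q$ of that proposition simply folds into the first block). The only structural facts the remainder of the argument needs are that all c-factors and all these surrogate bounds are non-negative and that the reward weights $y$ are non-negative.

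Given these per-factor substitutions, lifting to $\mb{E}_{P^\ast_\mf{x}} Y$ reduces to monotonicity. For the lower direction: applying the $k$ exact identities shows that the quantity $\sum_{\mf{y}^{\boldsymbol{+}}} y \prod_{q \le k} Q^{i_q}[\mf{C}_q] \prod_{q > k} Q^\ast[\mf{C}_q]$ appearing in \Cref{eq: casual lower bound} equals $\mb{E}_{P^\ast_\mf{x}} Y$; since every factor and every weight $y$ in that sum is non-negative, replacing each remaining $Q^\ast[\mf{C}_q]$ ($q>k$) by the smaller $Q^{j_q}[\mf{C}'_q]$ can only shrink the sum, and the shrunken sum is exactly $\ell_\mf{x}$. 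For the upper direction I would first refold the sum as $\mb{E}_{P^\ast_\mf{x}} Y = \sum_y y \bigl( \sum_{\mf{y}^{\boldsymbol{+}} \setminus \{y\}} \prod_{q=1}^m Q^\ast[\mf{C}_q] \bigr) = \sum_y y\, P^\ast_\mf{x}(y)$, and then bound the inner marginal two independent ways: trivially $P^\ast_\mf{x}(y) \le 1$, and by the same monotonicity $P^\ast_\mf{x}(y) \le \sum_{\mf{y}^{\boldsymbol{+}} \setminus \{y\}} \prod_{q \le k} Q^{i_q}[\mf{C}_q] \prod_{q > k} \bigl( Q^{j_q}[\mf{C}'_q] + 1 - \sum_{\mf{c}_q} Q^{j_q}[\mf{C}'_q] \bigr)$. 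Hence $P^\ast_\mf{x}(y) \le \min\{1, \cdot\}$ of the two, and multiplying by $y \ge 0$ and summing over $y$ gives $\mb{E}_{P^\ast_\mf{x}} Y \le u_\mf{x}$. The truncation $\min\{1,\cdot\}$ is what keeps the upper bound usable: although each per-factor upper surrogate is itself at most $1$, the sum $\sum_{\mf{y}^{\boldsymbol{+}} \setminus \{y\}}$ of their products can overshoot $1$, and refolding into $P^\ast_\mf{x}(y)$ is precisely what licenses re-clipping the total at $1$ after the substitution.

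I expect the genuine obstacle to be bookkeeping rather than analysis, concentrated in the case $\mf{C}'_q \supsetneq \mf{C}_q$, where the surrogate $Q^{j_q}[\mf{C}'_q]$ carries coordinates that $Q^\ast[\mf{C}_q]$ does not. Making the substitution rigorous requires writing the outer sum over $\mf{Y}^{\boldsymbol{+}}$ enlarged to include the coordinates of every $\mf{C}'_q$, using the c-factor marginalization identity $Q^\ast[\mf{C}_q] = \sum_{\mf{c}'_q \setminus \mf{c}_q} Q^\ast[\mf{C}'_q]$ (valid because $\mf{C}'_q$ is an ancestral subset of a c-component of $\mc{G}\setminus\mf{X}$, which is also what underlies the pointwise inequality used by \Cref{prop: Q trans bound}), and then threading the per-factor inequalities through the product term-by-term without disturbing the shared summation variables. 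A secondary check is that the theorem's hypothesis — for each $q$, a source with $\mf{C}_q \cap \Delta^i = \emptyset$ together with an ancestral c-component computable from $\mc{G}\setminus\mf{Z}$ for some available $\mf{Z} \in \mb{Z}^i \in \mb{Z}$ — is exactly what feeds \Cref{prop: Q trans bound} and the transportability criterion, so every replacement is sound; and that when $k = m$ both bounds collapse to $\mb{E}_{P^\ast_\mf{x}} Y$, recovering plain transportability as a corner case.
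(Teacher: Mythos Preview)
Your proposal is correct and follows essentially the same route as the paper: decompose via \Cref{eq: decompose reward}, split the c-factors into transportable ($Q^\ast[\mf{C}_q]=Q^{i_q}[\mf{C}_q]$) and non-transportable, plug the two-sided inequality from \Cref{prop: Q trans bound} into the latter, and propagate through the non-negative sum--product; the paper's own proof is a terse two-sentence version of exactly this substitution-and-monotonicity argument, including the $\min\{1,\cdot\}$ clip on the upper side.

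One small caution on your third paragraph: the marginalization identity $Q^\ast[\mf{C}_q]=\sum_{\mf{c}'_q\setminus\mf{c}_q}Q^\ast[\mf{C}'_q]$ you propose as a bookkeeping device does \emph{not} hold in the non-transportable case (ii) of \Cref{prop: Q trans bound}, precisely because there $\mf{C}_q$ is \emph{not} ancestral in $\mc{G}[\mf{C}'_q]$ (rather $\mf{C}'_q=\An(\mf{C}_q)_{\mc{G}[\mf{D}]}$ is the ancestral closure), so \Cref{lem: ancestral} is unavailable. Fortunately you don't need it: the inequality in \Cref{prop: Q trans bound} is already pointwise in $\mf{v}$, so the per-factor replacement goes through directly without enlarging the summation domain. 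The paper handles the same worry by simply asserting that all terms in \Cref{eq: casual lower bound,eq: casual upper bound} are functions of $\mf{X}\cup\mf{Y}^{\boldsymbol{+}}$, which sidesteps rather than resolves the coordinate question; neither treatment is fully rigorous on this point, but the core argument is the same.
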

The upper bounds of $P_{\mf{x}}^\ast(y)$ may exceed one due to sum-product operations over non-transportable terms. Thus, we take the minimum of the value and one. The corresponding algorithms (Algs.~\ref{alg: paTR} and ~\ref{alg: causal bound}) are presented in \Cref{sec: app partial trans}. First, the algorithm \textsc{paTR} (Alg.~\ref{alg: paTR}) outputs \textit{all} expressions for bounds of $P_{\mf{x}}^\ast(y)$, based on \Cref{prop: Q trans bound} and \Cref{thm: causal bound}. Then, \textsc{CausalBound} (Alg.~\ref{alg: causal bound}) computes the bounds on $\mb{E}_{P^\ast_\mf{x}} Y$ over the sources $\Pi$, and returns the tightest lower and upper bounds as the causal bound.

\subsection{Upper Confidence Bound Algorithm with Transport Bounds}\label{subsec: trans3}

In this section, we now incorporate obtained bounds into a bandit problem. Dominance relationships (\Cref{thm: dominance relations}) and causal bounds (\Cref{thm: causal bound}) can refine the upper confidence bound (UCB) \citep{auer2002finite} estimates during online learning. To illustrate this, we follow the \textit{clipped upper confidence bound} approach \citep{zhang2017transfer,Zhang2023towards} as our index strategy. We denote the empirical reward estimate at round $t$ for each arm in the target environment $\pi^\ast$ as $ \hat{\mb{E}}_{P_{\mf{x}}^\ast,t} Y = \frac{1}{N_t(\mf{x})} \sum_{t'=1}^t Y_{\mf{x},t'}  \mathds{I}\{\mf{X}_{t'} = \mf{x}\}$. The standard UCB index is defined as $U_\mf{x}(t) = \hat{\mb{E}}_{P^\ast_{\mf{x}},t}Y + \sqrt{\frac{\ln(1/\delta)}{2N_t(\mf{x})}}$ where $\delta = t^{-4}$. Our index policy is defined as $\bar{U}_{\mf{x}}(t) = \min\{ \max\{ U_{\mf{x}}(t), \ell_\mf{x}\} ,u_\mf{x}\}$ which constrains the standard UCB index within the final transport bounds $[\ell_\mf{x},u_\mf{x}]$. Our algorithm \textsc{trUCB} is presented in Alg.~\ref{alg: trUCB}, which begins by initializing the target action space with POMISs: $\mc{I}^\ast \triangleq \{\mf{x} \in \Omega_{\mf{X}} \mid \mf{X} \in \mb{P}_{\mc{G},Y}^{\mf{N}^\ast}\}$ (Line 1)\footnote{Each subroutine, $\MISs$ and $\POMISs$, is an algorithm that returns MISs and POMISs, respectively \citep{lee2018structural}. Given $\mc{G} \langle \mf{V} \setminus \mf{N}^\ast \rangle$ and $Y$, these algorithms compute the corresponding sets for $\langle \mc{G}, Y, \mf{N}^\ast \rangle$. }. 



\paragraph{Dominance bounds and causal bounds.} The next part (Lines 2--4), the algorithm attempts to determine whether the causal bounds can be identified and compute them using \Cref{thm: causal bound} (corresponding to \textsc{CausalBound}). All causal bounds are initialized as $[0,\infty)$. Using the resulting causal bounds for each action and dominance relationship (\Cref{thm: dominance relations}), the algorithm computes the \textit{upper dominance bound} $u^\star$ by minimizing expected rewards of POMIS actions (or upper causal bounds if non-transportable) for weaker constraints, which may result in a tighter bound. The next step is to obtain the \textit{lower dominance bound} $\ell^{\star}$ by taking the maximum of causal lower bounds $\ell_{\mf{w}}$ over minimal actions under the same constraint (i.e., MISs with respect to $\langle \mc{G},Y,\mf{N}^\ast \rangle$). Since such actions can never outperform the optimal arm $\mf{x}^\star$, we can safely exclude any actions $\mf{x} \in \mc{I}^\ast$ whose upper causal bound $u_{\mf{x}}$ is lower than the lower dominance bound $\ell^\star$.  We defer the technical details of the subroutines---\textsc{CausalBound} and \textsc{udb}---to \Cref{sec: app partial trans,sec: app dominance bound}, respectively. After completion of this phase, we refer to $[\ell_{\mf{x}}, u_{\mf{x}}]$ as \textit{transport bounds}, as they incorporate both dominance and causal bounds. 


\paragraph{Clipped UCB.} In the last part (Lines 5--7), the algorithm enters the online interaction phase with the final actions space $\mc{I}^\ast$. At each round $t$, it computes the clipped UCB $\bar{U}_{\mf{x}}(t)$ for every arm by combining the empirical rewards collected up to round $t$ with the transport bounds $[\ell_{\mf{x}},u_{\mf{x}}]$. The agent then selects an arm with the highest index and receives the corresponding reward. Not surprisingly, this strategy ensures that the cumulative regret grows sublinearly with the number of rounds $T$. 

\begin{algorithm}[t]
\small
\SetAlgoNoEnd
\DontPrintSemicolon
\SetKwInput{KwInput}{Input}

\KwInput{$Y$: reward; $\mc{G}$: causal diagram; $\mf{N}^\ast$: non-manipulable variables; $\boldsymbol{\Delta}$: discrepancies; $\mb{Z}$: a specification of priors;
$\Pi$: sources;
$\mc{P}_{\mb{Z}}^{\Pi}$: available distributions;
}

\setcounter{AlgoLine}{0}

Initialize the target action space $\mc{I}^\ast \gets \{\mf{x} \in \Omega_{\mf{X}} \mid \mf{X} \in \textsf{POMISs}(\mc{G} \langle \mf{V} \setminus \mf{N}^\ast \rangle ,Y)\}$ 




Set the causal bounds $[\ell_{\mf{x}}, u_{\mf{x}}]$ for all actions using \textsc{CausalBound} (Alg.~\ref{alg: causal bound} in \Cref{sec: app partial trans})


Set the upper dominance bound $u^\star$ using \textsc{udb} (Alg.~\ref{alg: UDB} in App.~\ref{sec: app dominance bound}); and $u_\mf{x} \gets \min\{u_\mf{x}, u^\star\}$ for all $\mf{x} \in \mc{I}^\ast$.  


Compute $\ell^{\star} = \max_{\mf{w} \in \Omega_{\mf{W}}, \mf{W} \in \MISs(\mc{G}\langle \mf{V} \setminus \mf{N}^\ast \rangle , Y)} \ell_{\mf{w}}$; and remove actions from $\mc{I}^\ast$ such that $u_\mf{x} < \ell^\star$. 


\For{\rm each trial $t \leq T$}{
    Choose an arm $\mf{x}_t =
        {\mathrm{argmax~}}_{\mf{x} \in \mc{I}^\ast}
        \bar{U}_{\mf{x}}(t)$ where $\bar{U}_{\mf{x}}(t) = \min\{
    \max\{ U_{\mf{x}}(t), \ell_\mf{x}\}
    ,u_\mf{x}\}$. 
        
    Intervene on $\mf{X}_t = \mf{x}_t$ for round $t$ and receive reward $Y_t$ from $P_{\mf{x}_t}^\ast$. 
}
\caption{TRansport bounds Upper Confidence Bound (\textsc{trUCB})}
\label{alg: trUCB}

\end{algorithm}

\begin{corollary}
    The expected number of pulls, $\mathbb{E}N_t(\mf{x})$, is zero for all actions $\mf{x}$ satisfying $u_{\mf{x}} < \ell^\star$. 
\end{corollary}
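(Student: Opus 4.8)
The conclusion is essentially read off from the pruning step (Line~4) of \Cref{alg: trUCB}, but to present it as a meaningful corollary I would first certify that this step is \emph{sound} --- it never removes the optimal arm $\mf{x}^\star$. The plan has three steps: (i) verify that after Lines~2--4 every surviving action carries a valid enclosure $\ell_\mf{x} \le \mathbb{E}_{P^\ast_\mf{x}}Y \le u_\mf{x}$ and that $\ell^\star$ is a valid lower bound on $\mathbb{E}_{P^\ast_{\mf{x}^\star}}Y$; (ii) deduce that any $\mf{x}$ with $u_\mf{x} < \ell^\star$ is strictly suboptimal, hence is legitimately dropped from $\mc{I}^\ast$; (iii) observe that the online loop plays only arms in $\mc{I}^\ast$, so such $\mf{x}$ is never played and therefore $\mathbb{E}N_t(\mf{x}) = 0$.

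For step (i): \textsc{CausalBound} initializes each $[\ell_\mf{x},u_\mf{x}]$ as a valid (possibly vacuous) interval by \Cref{thm: causal bound}; Line~3 only replaces $u_\mf{x}$ by $\min\{u_\mf{x},u^\star\}$, and by \Cref{thm: dominance relations} $u^\star$ upper-bounds $\mathbb{E}_{P^\ast_{\mf{x}^\star}}Y$, which in turn dominates $\mathbb{E}_{P^\ast_\mf{x}}Y$ for every POMIS action $\mf{x}\in\mc{I}^\ast$, so the updated $u_\mf{x}$ stays valid. For $\ell^\star = \max_{\mf{w}\in\Omega_\mf{W},\,\mf{W}\in\MISs(\mc{G}\langle\mf{V}\setminus\mf{N}^\ast\rangle,Y)}\ell_\mf{w}$: each $\ell_\mf{w}$ lower-bounds $\mathbb{E}_{P^\ast_\mf{w}}Y$; since every POMIS is an MIS and, by the defining property of POMISs (\Cref{def: POIS}), intervening on any set $\mf{W}$ under $\langle\mc{G},Y,\mf{N}^\ast\rangle$ cannot beat the best POMIS action, we have $\mathbb{E}_{P^\ast_\mf{w}}Y \le \mathbb{E}_{P^\ast_{\mf{w}^\ast}}Y \le \mathbb{E}_{P^\ast_{\mf{x}^\star}}Y$ for every such $\mf{w}$; taking the maximum over these $\mf{w}$ yields $\ell^\star \le \mathbb{E}_{P^\ast_{\mf{x}^\star}}Y$.

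Step (ii) is then a one-line chain: $u_\mf{x} < \ell^\star$ implies $\mathbb{E}_{P^\ast_\mf{x}}Y \le u_\mf{x} < \ell^\star \le \mathbb{E}_{P^\ast_{\mf{x}^\star}}Y$, so $\mf{x}\ne\mf{x}^\star$ and Line~4 correctly discards it. For step (iii): after Line~4 we have $\mf{x}\notin\mc{I}^\ast$ (either $\mf{x}$ was never a POMIS action and so was already excluded in Line~1, or it was removed in Line~4); the loop of Lines~5--7 sets $\mf{x}_t = \arg\max_{\mf{x}'\in\mc{I}^\ast}\bar U_{\mf{x}'}(t)\in\mc{I}^\ast$ at every round $t\le T$, hence $\mathds{I}\{\mf{X}_t=\mf{x}\}=0$ surely; therefore $N_t(\mf{x}) = \sum_{t'\le t}\mathds{I}\{\mf{X}_{t'}=\mf{x}\}=0$ deterministically and $\mathbb{E}N_t(\mf{x})=0$.

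The only genuinely non-routine ingredient --- the ``main obstacle'' such as it is --- is step (i): one must check that the bound-tightening operations of Lines~2--4 preserve validity of the enclosures and, more importantly, that $\ell^\star$ bounds the target \emph{optimum} rather than merely some MIS reward, which is precisely where the POMIS optimality characterization and the dominance relations of \Cref{thm: dominance relations} are invoked. Once the pruning is established as both valid and applied, the counting conclusion is immediate.
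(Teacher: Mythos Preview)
Your proposal is correct, and its core coincides with the paper's: both observe that Line~4 of \Cref{alg: trUCB} explicitly deletes every action with $u_\mf{x} < \ell^\star$ from $\mc{I}^\ast$, and since the online loop only ever selects from $\mc{I}^\ast$, such actions are never pulled.

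The difference is in the auxiliary content. Your steps~(i)--(ii) certify \emph{soundness} of the pruning --- that $\mf{x}^\star$ survives --- which is good hygiene but not actually required by the corollary as stated (the claim is only that these arms get zero pulls, not that the algorithm remains correct). The paper skips soundness and instead appends a complementary observation via the clipped index: setting $\bar{\mf{x}} = \arg\max_{\mf{x}'\in\mc{I}^\ast}\ell_{\mf{x}'}$, any arm with $u_\mf{x} < \ell_{\bar{\mf{x}}}$ satisfies $\bar{U}_\mf{x}(t) \le u_\mf{x} < \ell_{\bar{\mf{x}}} \le \bar{U}_{\bar{\mf{x}}}(t)$ at every round, so it would lose the argmax even without explicit removal. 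Your extra work buys a safety guarantee; the paper's buys the insight that the pruning in Line~4 is in fact redundant with the clipping mechanism in Line~6.
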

\begin{proof}
   All actions $\mf{x}$ such that $u_{\mf{x}} < \ell^\star$ are removed from $\mc{I}^\ast$ in Line 4. Let $\bar{\mf{x}} = \arg\max_{\mf{x} \in \mc{I}^\ast} \ell_\mf{x}$. If $u_\mf{x} < \ell_{\bar{\mf{x}}}$, then for all rounds $t$, it holds that 
    $\bar{U}_{\mf{x}}(t) \leq u_{\mf{x}} < \ell_{\bar{\mf{x}}} \leq \bar{U}_{\bar{\mf{x}}}(t)$. Hence, any action $\mf{x}$ satisfying $u_{\mf{x}} < \ell^\star$ will never be selected throughout the learning process. 
\end{proof}

\section{Experiments}\label{sec: experiments}  
In this section, we present empirical results demonstrating that exploration over the action space $\mc{I}^\ast$ and the clipped index within the transport bounds $U_{\mf{x}}(t)$ lead to lower cumulative regret (CR). We compare \textsc{trUCB} (Alg.~\ref{alg: trUCB}) with standard UCB over all combinations of arms (UCB) and over POMISs (\textsc{poUCB}), focusing primarily on \textsc{poUCB} to ensure a fair evaluation of transportability. The number of trials is set to 50k, which is sufficient to observe the performance differences, as shown in \Cref{fig: experiment}. Further detailed explanations and settings regarding experiments are provided in \Cref{sec: app experimental details}.

 

\begin{wrapfigure}[6]{r}{0.18\textwidth}
    \vspace{-1em}
    \centering
        \begin{tikzpicture}[x=1cm,y=.8cm,>={Latex[width=1.4mm,length=1.7mm]},
        font=\sffamily\sansmath\tiny,
        line width=0.19mm,
        RR/.style={draw,circle,inner sep=0mm, minimum size=4.5mm,font=\sffamily\tiny},
        transp/.style={regime,fill=betterred,minimum size=1mm,draw=none,
        inner sep=1mm}, 
        regime/.style={rectangle,draw=betterred, thick,draw=betterred,minimum size=2.5mm},
        regime line/.style={draw=betterred, thick},
      regime induced/.style={draw=betterpurple, very thick},
      rotate = 0]
        \node[RR] (A) at (0,0) {$A$};
        \node[RR] (B) at (1,1) {$B$};
        \node[RR] (Y) at (2,0) {$Y$};
        \draw[->] (A) -- (Y);
        \draw[->] (B) -- (Y);

        \draw[->] (B) -- (A);

        \draw[dashed, <->] (A) to[bend right = 30] (Y);

        \node[transp, above=2.5mm of A] (AT)
        {};
        \draw[->,regime line] (AT) -- (A);

        \end{tikzpicture}
    \vspace{-1em}
    \caption{}
    \label{wrapfig: 3var selection}
\end{wrapfigure}
\paragraph{Task 1.} We start with a simple structural causal bandit problem where $\mf{N}^\ast = \emptyset$ represented by \Cref{wrapfig: 3var selection}. A decision maker has an RCT prior $\mb{Z}^1 = \{\{B\}\}$ from $\Delta^1 = \{A\}$. The action space without prior information corresponds to $\mb{P}_{\mc{G},Y}^{\mf{N}^\ast} = \{\{B\}, \{A,B\}\}$. In this setting, we have $\mb{E}_{P^\ast_b} Y = \sum_{a,y} y Q^\ast[A,Y]$ by \Cref{eq: decompose reward}. However, there is no c-factor satisfying \Cref{thm: causal bound} since $\{A\} \cap \Delta^1 \neq \emptyset$. In contrast, consider $\mb{E}_{P^\ast_{a,b}} Y = \sum_{y} y Q^\ast[Y] = \sum_{y} y Q^1[Y]$ according to $\{Y\} \cap \Delta^1 = \emptyset$ and we can find $Q^1[A,Y] = P^1_b(a,y)$. This implies that $\sum_{y} y  P^1_b(a,y) \leq \mb{E}_{P^\ast_{a,b}} Y \leq \sum_{y} y  \{P^1_b(a,y) + 1 - P^1_b(a) \}$. Using these expressions, the decision maker can estimate causal bounds for four actions corresponding to the POMISs: $do(A=0,B=0): [0.1675, 1]$, $ do(A=0,B=1): [0.2965, 0.7940]$, $ do(A=1,B=0): [0.8325, 1]$ and $ do(A=1,B=1): [0.2935, 0.7960]$. The algorithm computes the dominance bounds as $\ell^\star = 0.8325$ and $u^\star = \infty$. Among the four actions, the upper causal bounds $u_{do(A = 0,B = 1)}$ and $u_{do(A = 1,B = 1)}$ are lower than $\ell^\star$, leading them to be pruned by \textsc{trUCB}. Therefore, the algorithm begins the online interaction with the final action space $\mc{I}^\ast$ excluding these two actions. We observe that the mean cumulative regret at the final trial is $\boldsymbol{47.94}$ for \textsc{trUCB} and $\boldsymbol{130.16}$ for \textsc{poUCB}, which is $\frac{\text{CR for \textsc{trUCB}}}{\text{CR for \textsc{poUCB}}} = \boldsymbol{36.83\%}$ of the latter. 


\begin{wrapfigure}[8]{r}{0.19\textwidth}
    \centering
        \begin{tikzpicture}[x=1cm,y=.8cm,>={Latex[width=1.4mm,length=1.7mm]},
        font=\sffamily\sansmath\tiny,
        line width=0.19mm,
        RR/.style={draw,circle,inner sep=0mm, minimum size=4.5mm,font=\sffamily\tiny},
        transp/.style={regime,fill=betterred,minimum size=1mm,draw=none,
        inner sep=1mm}, 
        regime/.style={rectangle, draw=betterred,very thick, draw=betterred,minimum size=2.5mm},
        regime line/.style={draw=betterred, very thick},
      regime induced/.style={draw=betterpurple, very thick},
      rotate = -5]
        \node[RR,fill=gray!30] (A) at (0,0) {$A$};
        \node[RR] (B) at (0,2) {$B$};
        \node[RR,fill=gray!30] (C) at (1,1) {$C$};
        \node[RR] (Y) at (2,0) {$Y$};
        \draw[->] (A) -- (Y);
        \draw[->] (A) -- (C);
        \draw[->] (B) -- (C);
        \draw[->] (C) -- (Y);


        \node[transp, above=2.5mm of A] (AT)
        {};
        \draw[->,regime line] (AT) -- (A);
        

        \node[transp, above=2mm of B, draw = none, fill = betterblue] (BT2)
        {};
        \draw[->,regime line, draw = betterblue] (BT2) -- (B);
        
        \draw[dashed, <->] (C) to[bend left = 30] (B);
        \draw[dashed, <->] (B) to[bend left = 50] (Y);

        \end{tikzpicture}
    \vspace{-0.7em}
    \caption{}
    \label{wrapfig: 4var experiment selection}
\end{wrapfigure}
\paragraph{Task 2.} We consider the setting in \Cref{wrapfig: 4var experiment selection} where $\Delta^1 = \{A\}$ and $\Delta^2 = \{B\}$ with priors $\mb{Z}^1 = \{\emptyset\}$ and $\mb{Z}^2 = \{\{C\}\}$ and constraint $\mf{N}^\ast = \{A,C\}$. The initial action space corresponds to $\mb{P}_{\mc{G},Y}^{\mf{N}^\ast}$. The action $do(\emptyset)$ is transportable since $\mb{E}_{P^\ast_{\emptyset}}Y = \sum_{a,b,c,y}yQ^2[A]Q^1[B,C,Y] = \sum_{a,b,c,y}y P^2_c(a)P^1(b,c,y \mid a)  = 0.4844$. On the other hand, $do(b)$ is \textit{not} transportable, as $P^\ast_{b}(y) = \sum_{a,c}{\color{NavyBlue}Q^2[A]Q^2[Y]}Q^\ast[C]$. By \Cref{thm: causal bound}, $Q^\ast[C]$ is bounded by $Q^1[B,C]$, leading to $\ell_b = \sum_{a, c, y}y {\color{NavyBlue} P^2_{c}(a)P^2_{c}(y | a)}P^1(c | a,b)P^1(b)$. This expression yields $\ell_{do(B{=}0)} = 0.2097$ and $\ell_{do(B{=}1)} = 0.2752$. The upper causal bound for $do(b)$ is given by $\sum_y y \min\{1,\sum_{a, c} {\color{NavyBlue} P^2_{c}(a)P^2_{c}(y | a)}\{P^1(b) P^1(c | a,b) + 1 - P^1(b)\}\}$. We thus have the upper causal bounds, $u_{do(B{=}0)} = 0.6783$ and $u_{do(B{=}1)} = 0.8066$. The dominance bound is $[\ell^\star, u^\star] = [0.4844, 0.7697]$ with $u^\star = 0.7697$ derived from the expected reward of the transportable action $\mb{E}_{P^\ast_{do(A{=}1,C{=}1)}}Y = \sum_y yP^2_{do(C{=}1)}(y | A{=}1) = 0.7697$, which is an unconstrained POMIS. Since $u_{do(B=1)} {=} 0.8066 > 0.7697 {=} u^\star$, the final transport upper bound of $do(B{=}1)$ is updated to $u^\star$. The resulting transport bounds for $do(B{=}0)$ and $do(B{=}1)$ are $[0.2097, 0.6783]$ and $[0.2752, 0.7697]$, respectively. Although the size of the action space remains unchanged (i.e., no action is removed from $\mc{I}^\ast$, implying that the action spaces of all three algorithms are identical), we observe that accounting for the transport bounds improves performance, with CR of \textsc{trUCB} reduced to $\boldsymbol{38.6\%}$ of \textsc{poUCB}.

\begin{wrapfigure}[10]{r}{0.25\textwidth}
    \vspace{-.75em}
    \centering
        \begin{tikzpicture}[x=1cm,y=1cm,>={Latex[width=1.4mm,length=1.7mm]},
        font=\sffamily\sansmath\tiny,
        line width=0.19mm,
        RR/.style={draw,circle,inner sep=0mm, minimum size=4.5mm,font=\sffamily\tiny},
        transp/.style={regime,fill=betterred,minimum size=1mm,draw=none,
        inner sep=1mm}, 
        regime/.style={rectangle,draw=betterred,very thick,draw=betterred,minimum size=2.5mm},
        regime line/.style={draw=betterred, very thick},
      regime induced/.style={draw=betterpurple, very thick},
      rotate = -20]

        \node[RR, fill=gray!30] (W) at (0,0.2) {$W$};
        \node[RR] (X) at (1,0.7) {$X$};
        \node[RR] (Z) at (2,1) {$Z$};
        \node[RR] (Y) at (3,1) {$Y$};
        \node[RR, fill=gray!30] (T) at (0,2) {$T$};
        \node[RR] (R) at (1,2) {$R$};

        \draw[->] (W) -- (X);
        \draw[->] (X) -- (Z);
        \draw[->] (Z) -- (Y);
        \draw[->] (R) -- (Z);
        \draw[->] (T) -- (R);

        \draw[<->, dashed] (W) to [bend right=20] (X);
        \draw[<->, dashed] (W) to [bend left=30] (R);
        \draw[<->, dashed] (W) to [bend right=30] (Z);
        \draw[<->, dashed] (X) to [bend right=30] (Y);
        \draw[<->, dashed] (R) to [bend left=30] (Y);

    

        \node[transp, above=2.5mm of T, draw = betterred, fill = betterred] (TT)
        {};
        \draw[->,regime line, draw = betterred] (TT) -- (T);

        \node[transp, above=2.5mm of R, draw = betterblue, fill = betterblue] (RT)
        {};
        \draw[->,regime line, draw = betterblue] (RT) -- (R);

        \end{tikzpicture}
    \caption{}
    \label{wrapfig: correa selection}
\end{wrapfigure}
\paragraph{Task 3.} We consider a more involved scenario (\Cref{wrapfig: correa selection}) to validate our result. Let $\Delta^1 = \{T\}$ and $\Delta^2 = \{R\}$, with priors $\mb{Z}^1 = \{\emptyset, \{Z\}\}$ and $\mb{Z}^2 = \{\{Z\}\}$, and constraint $\mf{N}^\ast = \{T,W\}$. The algorithm starts by initializing the action space as $\mb{P}_{\mc{G},Y}^{\mf{N}^\ast} = \{\emptyset, \{R\}, \{X\}, \{Z\}\}$ where $do(\emptyset)$ and $do(z)$ are transportable while $do(x)$ and $do(r)$ are \textit{not}. In this setting, the transportable target POMIS action yields $\mb{E}_{P^\ast_{do(Z=1)}}Y = 1$, leading to $\ell^\star = u^\star = 1$. Furthermore, we obtain the upper causal bounds $u_{do(\emptyset)} = 0.5514$, $u_{do(X=0)} = 0.7901$ and $u_{do(Z=0)} = 0.034$, which are lower than $\ell^\star$. Consequently, the three actions are eliminated from $\mc{I}^\ast$ by the algorithm. We observe mean cumulative regrets of $\boldsymbol{173.62}$ (\textsc{poUCB}) and $\boldsymbol{95.69}$ (\textsc{trUCB}), achieving a $\boldsymbol{55.1\%}$ regret ratio. These results demonstrate performance improvements when transported causal knowledge from source environments is taken into account.

\begin{figure}[t]
    \hfill
    \begin{minipage}[b]{0.31\textwidth}\centering
        \subfloat[Task 1]{
            \centering
            \includegraphics[width=\textwidth]{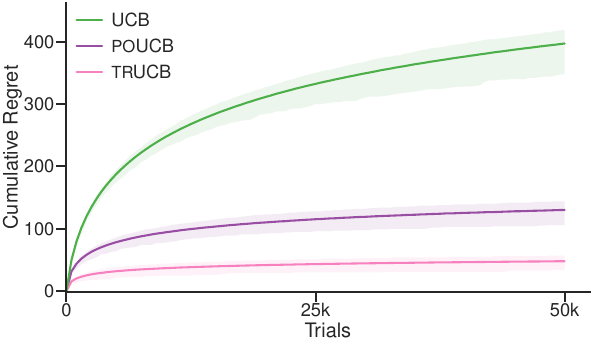}%
            }
    \end{minipage}
    \hspace{0.5em}
    \begin{minipage}[b]{0.31\textwidth}\centering
        \subfloat[Task 2]{
            \centering
            \includegraphics[width=\textwidth]{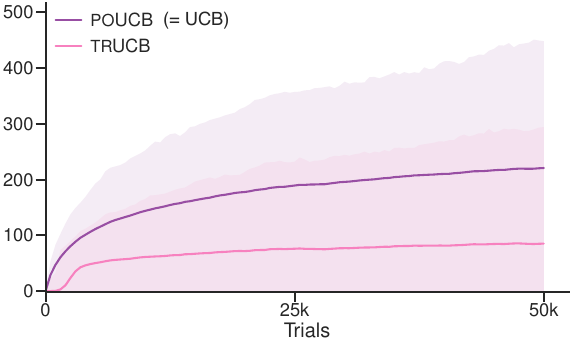}%
            }
    \end{minipage}
    \hspace{0.5em}
    \begin{minipage}[b]{0.31\textwidth}\centering
        \subfloat[Task 3]{
            \centering 
            \includegraphics[width=\textwidth]{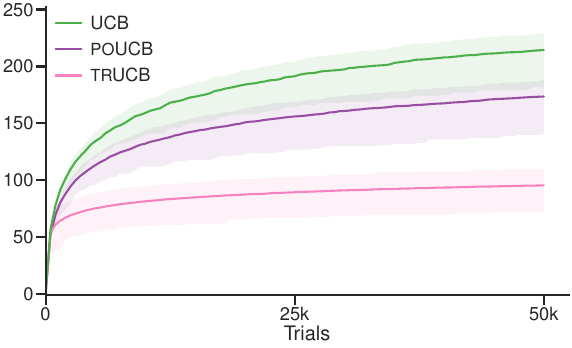}%
            }
    \end{minipage}
   \hfill\null
   \hfill
    \caption{Cumulative regrets of \textsc{trUCB} ({\color{betterpink}pink}) compared with standard UCB over all combinations of arms ({\color{bettergreen}green}) and over POMISs ({\color{betterpurple}purple}). Each simulation is repeated 1,000 times to ensure consistency, and the shaded regions indicate the 2.5th and 97.5th percentiles of the empirical cumulative regrets. }
    \label{fig: experiment}
\end{figure}

\section{Conclusion}  

We investigated a structured causal bandit strategy that can utilize prior data from related heterogeneous environments. Since source environments may differ, some knowledge of the underlying structure and potential discrepancies was necessary to enable consistent extrapolation. To address this, we proposed a strategy that exploits transportable causal knowledge by incorporating bounds equipped with dominance relations and causal structure. We demonstrated that the resulting bandit algorithm, leveraging causal knowledge, enjoys a sub-linear regret bound that depends on the extent of such knowledge. We believe that these results have practical implications for designing intelligent agents, providing a foundation for optimizing the action space when historical data is available.



\newpage

\begin{ack}
This work was supported by the IITP (RS-2022-II220953/50\%) and NRF (RS-2023-00211904/50\%) grant funded by the Korean government.
\end{ack}

\bibliographystyle{plainnat} 
\bibliography{reference}

\begin{thebibliography}{74}
\providecommand{\natexlab}[1]{#1}
\providecommand{\url}[1]{\texttt{#1}}
\expandafter\ifx\csname urlstyle\endcsname\relax
  \providecommand{\doi}[1]{doi: #1}\else
  \providecommand{\doi}{doi: \begingroup \urlstyle{rm}\Url}\fi

\bibitem[Aglietti et~al.(2020)Aglietti, Lu, Paleyes, and Gonz{\'a}lez]{aglietti2020causal}
Virginia Aglietti, Xiaoyu Lu, Andrei Paleyes, and Javier Gonz{\'a}lez.
\newblock Causal bayesian optimization.
\newblock In \emph{International Conference on Artificial Intelligence and Statistics}, pages 3155--3164. PMLR, 2020.

\bibitem[Auer et~al.(2002)Auer, Cesa-Bianchi, and Fischer]{auer2002finite}
Peter Auer, Nicolo Cesa-Bianchi, and Paul Fischer.
\newblock Finite-time analysis of the multiarmed bandit problem.
\newblock \emph{Machine learning}, 47:\penalty0 235--256, 2002.

\bibitem[Balke and Pearl(1995)]{balke1995counterfactuals}
Alexander Balke and Judea Pearl.
\newblock Counterfactuals and policy analysis in structural models.
\newblock In \emph{Proceedings of the Eleventh conference on Uncertainty in artificial intelligence}, pages 11--18, 1995.

\bibitem[Balke and Pearl(1997)]{balke1997bounds}
Alexander Balke and Judea Pearl.
\newblock Bounds on treatment effects from studies with imperfect compliance.
\newblock \emph{Journal of the American statistical Association}, 92\penalty0 (439):\penalty0 1171--1176, 1997.

\bibitem[Bareinboim and Pearl(2012{\natexlab{a}})]{bareinboim2012causal}
Elias Bareinboim and Judea Pearl.
\newblock Causal inference by surrogate experiments: z-identifiability.
\newblock In \emph{In Proceedings of the 28th Conference on Uncertainty in Artificial Intelligence}, pages 113--120. AUAI Press, 2012{\natexlab{a}}.

\bibitem[Bareinboim and Pearl(2012{\natexlab{b}})]{bareinboim2012transportability}
Elias Bareinboim and Judea Pearl.
\newblock Transportability of causal effects: Completeness results.
\newblock In \emph{Proceedings of the 26th AAAI Conference on Artificial Intelligence}, pages 698--704, 2012{\natexlab{b}}.

\bibitem[Bareinboim and Pearl(2016)]{bareinboim2016causal}
Elias Bareinboim and Judea Pearl.
\newblock Causal inference and the data-fusion problem.
\newblock \emph{Proceedings of the National Academy of Sciences}, 113\penalty0 (27):\penalty0 7345--7352, 2016.

\bibitem[Bareinboim et~al.(2013)Bareinboim, Lee, Honavar, and Pearl]{bareinboim2013transportability}
Elias Bareinboim, Sanghack Lee, Vasant Honavar, and Judea Pearl.
\newblock Transportability from multiple environments with limited experiments.
\newblock \emph{Advances in Neural Information Processing Systems}, 26, 2013.

\bibitem[Bareinboim et~al.(2015)Bareinboim, Forney, and Pearl]{bareinboim2015bandits}
Elias Bareinboim, Andrew Forney, and Judea Pearl.
\newblock Bandits with unobserved confounders: A causal approach.
\newblock In \emph{Proceedings of the 28th Annual Conference on Neural Information Processing Systems}, pages 1342--1350, 2015.

\bibitem[Bareinboim et~al.(2024)Bareinboim, Lee, and Zhang]{bareinboim2024introduction}
Elias Bareinboim, Sanghack Lee, and Junzhe Zhang.
\newblock An introduction to causal reinforcement learning.
\newblock Technical Report R-65, Causal Artificial Intelligence Lab, Columbia University, Dec 2024.
\newblock URL \url{https://causalai.net/r65.pdf}.

\bibitem[Bellot and Chiappa(2024)]{bellot2024towards}
Alexis Bellot and Silvia Chiappa.
\newblock Towards estimating bounds on the effect of policies under unobserved confounding.
\newblock \emph{Advances in Neural Information Processing Systems}, 37:\penalty0 104556--104594, 2024.

\bibitem[Bellot et~al.(2023)Bellot, Malek, and Chiappa]{bellot2023transportability}
Alexis Bellot, Alan Malek, and Silvia Chiappa.
\newblock Transportability for bandits with data from different environments.
\newblock \emph{Advances in Neural Information Processing Systems}, 36:\penalty0 44356--44381, 2023.

\bibitem[Bhatija et~al.(2025)Bhatija, Zuercher, Thumm, and Bohn{\'e}]{bhatija2025multi}
Shriya Bhatija, Paul-David Zuercher, Jakob Thumm, and Thomas Bohn{\'e}.
\newblock Multi-objective causal bayesian optimization.
\newblock \emph{arXiv preprint arXiv:2502.14755}, 2025.

\bibitem[Bilodeau et~al.(2022)Bilodeau, Wang, and Roy]{bilodeau2022adaptively}
Blair Bilodeau, Linbo Wang, and Dan Roy.
\newblock Adaptively exploiting d-separators with causal bandits.
\newblock \emph{Advances in Neural Information Processing Systems}, 35:\penalty0 20381--20392, 2022.

\bibitem[Carey et~al.(2024)Carey, Lee, and Evans]{carey2024toward}
Ryan Carey, Sanghack Lee, and Robin~J Evans.
\newblock Toward a complete criterion for value of information in insoluble decision problems.
\newblock \emph{Transactions on Machine Learning Research}, 2024.

\bibitem[Correa and Bareinboim(2020)]{correa2020general}
Juan Correa and Elias Bareinboim.
\newblock General transportability of soft interventions: Completeness results.
\newblock \emph{Advances in Neural Information Processing Systems}, 33:\penalty0 10902--10912, 2020.

\bibitem[Correa and Bareinboim(2025)]{correa2025counterfactual}
Juan~D Correa and Elias Bareinboim.
\newblock Counterfactual graphical models: Constraints and inference.
\newblock In \emph{Forty-second International Conference on Machine Learning}, 2025.

\bibitem[Correa et~al.(2022)Correa, Lee, and Bareinboim]{correa2022counterfactual}
Juan~D Correa, Sanghack Lee, and Elias Bareinboim.
\newblock Counterfactual transportability: a formal approach.
\newblock In \emph{International Conference on Machine Learning}, pages 4370--4390. PMLR, 2022.

\bibitem[Deng et~al.(2025)Deng, Kyrki, and Baumann]{deng2025transfer}
Mingwei Deng, Ville Kyrki, and Dominik Baumann.
\newblock Transfer learning in latent contextual bandits with covariate shift through causal transportability.
\newblock \emph{arXiv preprint arXiv:2502.20153}, 2025.

\bibitem[Du et~al.(2024)Du, Qin, Wang, and Zhou]{du2024avoiding}
Wen-Bo Du, Tian Qin, Tian-Zuo Wang, and Zhi-Hua Zhou.
\newblock Avoiding undesired future with minimal cost in non-stationary environments.
\newblock \emph{Advances in Neural Information Processing Systems}, 37:\penalty0 135741--135769, 2024.

\bibitem[Du et~al.(2025)Du, Lei, Tao, Wang, and Zhou]{duen2025abling}
Wen-Bo Du, Hao-Yi Lei, Lue Tao, Tian-Zuo Wang, and Zhi-Hua Zhou.
\newblock Enabling optimal decisions in rehearsal learning under care condition.
\newblock In \emph{Forty-second International Conference on Machine Learning}, 2025.

\bibitem[Elahi et~al.(2024)Elahi, Ghasemi, and Kocaoglu]{elahi2024partial}
Muhammad~Qasim Elahi, Mahsa Ghasemi, and Murat Kocaoglu.
\newblock Partial structure discovery is sufficient for no-regret learning in causal bandits.
\newblock \emph{arXiv preprint arXiv:2411.04054}, 2024.

\bibitem[Everitt et~al.(2021)Everitt, Carey, Langlois, Ortega, and Legg]{everitt2021agent}
Tom Everitt, Ryan Carey, Eric~D Langlois, Pedro~A Ortega, and Shane Legg.
\newblock Agent incentives: A causal perspective.
\newblock In \emph{Proceedings of the AAAI Conference on Artificial Intelligence}, volume~35, pages 11487--11495, 2021.

\bibitem[Feng and Chen(2023)]{feng2023combinatorial}
Shi Feng and Wei Chen.
\newblock Combinatorial causal bandits.
\newblock In \emph{Proceedings of the AAAI Conference on Artificial Intelligence}, volume~37, pages 7550--7558, 2023.

\bibitem[Ferrannini and Cushman(2012)]{ferrannini2012diabetes}
Ele Ferrannini and William~C Cushman.
\newblock Diabetes and hypertension: the bad companions.
\newblock \emph{The Lancet}, 380\penalty0 (9841):\penalty0 601--610, 2012.

\bibitem[Forney et~al.(2017)Forney, Pearl, and Bareinboim]{pmlr-v70-forney17a}
Andrew Forney, Judea Pearl, and Elias Bareinboim.
\newblock Counterfactual data-fusion for online reinforcement learners.
\newblock In Doina Precup and Yee~Whye Teh, editors, \emph{Proceedings of the 34th International Conference on Machine Learning}, volume~70 of \emph{Proceedings of Machine Learning Research}, pages 1156--1164. PMLR, 06--11 Aug 2017.
\newblock URL \url{https://proceedings.mlr.press/v70/forney17a.html}.

\bibitem[Hoeffding(1994)]{hoeffding1994probability}
Wassily Hoeffding.
\newblock Probability inequalities for sums of bounded random variables.
\newblock \emph{The collected works of Wassily Hoeffding}, pages 409--426, 1994.

\bibitem[Huang and Valtorta(2006)]{huang2006pearl}
Yimin Huang and Marco Valtorta.
\newblock Pearl's calculus of intervention is complete.
\newblock In \emph{Proceedings of the 22nd Conference on Uncertainty in Artificial Intelligence}, pages 217--224. AUAI Press, 2006.

\bibitem[Huang and Valtorta(2008)]{huang2008completeness}
Yimin Huang and Marco Valtorta.
\newblock On the completeness of an identifiability algorithm for semi-{M}arkovian models.
\newblock \emph{Annals of Mathematics and Artificial Intelligence}, 54\penalty0 (4):\penalty0 363--408, 2008.

\bibitem[Hwang et~al.(2024)Hwang, Kwak, Choi, Zhang, and Lee]{hwang2024fine}
Inwoo Hwang, Yunhyeok Kwak, Suhyung Choi, Byoung-Tak Zhang, and Sanghack Lee.
\newblock Fine-grained causal dynamics learning with quantization for improving robustness in reinforcement learning.
\newblock In \emph{International Conference on Machine Learning}, pages 20842--20870. PMLR, 2024.

\bibitem[Jalaldoust and Bareinboim(2024)]{jalaldoust2024transportable}
Kasra Jalaldoust and Elias Bareinboim.
\newblock Transportable representations for domain generalization.
\newblock In \emph{Proceedings of the AAAI Conference on Artificial Intelligence}, volume~38, pages 12790--12800, 2024.

\bibitem[Jalaldoust et~al.(2024)Jalaldoust, Bellot, and Bareinboim]{jalaldoust2024partial}
Kasra Jalaldoust, Alexis Bellot, and Elias Bareinboim.
\newblock Partial transportability for domain generalization.
\newblock In \emph{The Thirty-eighth Annual Conference on Neural Information Processing Systems}, 2024.

\bibitem[Joshi et~al.(2024)Joshi, Zhang, and Bareinboim]{joshi2024towards}
Shalmali Joshi, Junzhe Zhang, and Elias Bareinboim.
\newblock Towards safe policy learning under partial identifiability: A causal approach.
\newblock In \emph{Proceedings of the AAAI Conference on Artificial Intelligence}, volume~38, pages 13004--13012, 2024.

\bibitem[Jung and Bellot(2024)]{jung2024efficient}
Yonghan Jung and Alexis Bellot.
\newblock Efficient policy evaluation across multiple different experimental datasets.
\newblock \emph{Advances in Neural Information Processing Systems}, 37:\penalty0 136361--136392, 2024.

\bibitem[Jung et~al.(2024)Jung, Park, and Lee]{jung2024complete}
Yonghan Jung, Min~Woo Park, and Sanghack Lee.
\newblock Complete graphical criterion for sequential covariate adjustment in causal inference.
\newblock \emph{Advances in Neural Information Processing Systems}, 37:\penalty0 19813--19838, 2024.

\bibitem[Kivva et~al.(2022)Kivva, Mokhtarian, Etesami, and Kiyavash]{kivva2022revisiting}
Yaroslav Kivva, Ehsan Mokhtarian, Jalal Etesami, and Negar Kiyavash.
\newblock Revisiting the general identifiability problem.
\newblock In \emph{The 38th Conference on Uncertainty in Artificial Intelligence}, 2022.

\bibitem[Kumor et~al.(2021)Kumor, Zhang, and Bareinboim]{kumor2021sequential}
Daniel Kumor, Junzhe Zhang, and Elias Bareinboim.
\newblock Sequential causal imitation learning with unobserved confounders.
\newblock \emph{Advances in Neural Information Processing Systems}, 34:\penalty0 14669--14680, 2021.

\bibitem[Lai and Robbins(1985)]{lai1985asymptotically}
Tze~Leung Lai and Herbert Robbins.
\newblock Asymptotically efficient adaptive allocation rules.
\newblock \emph{Advances in applied mathematics}, 6\penalty0 (1):\penalty0 4--22, 1985.

\bibitem[Lattimore et~al.(2016)Lattimore, Lattimore, and Reid]{lattimore2016causal}
Finnian Lattimore, Tor Lattimore, and Mark~D Reid.
\newblock Causal bandits: Learning good interventions via causal inference.
\newblock \emph{Advances in Neural Information Processing Systems}, 29, 2016.

\bibitem[Lattimore and Szepesv{\'a}ri(2020)]{lattimore2020bandit}
Tor Lattimore and Csaba Szepesv{\'a}ri.
\newblock \emph{Bandit algorithms}.
\newblock Cambridge University Press, 2020.

\bibitem[Lee and Bareinboim(2018)]{lee2018structural}
Sanghack Lee and Elias Bareinboim.
\newblock Structural causal bandits: Where to intervene?
\newblock \emph{Advances in Neural Information Processing Systems}, 31, 2018.

\bibitem[Lee and Bareinboim(2019)]{lee2019structural}
Sanghack Lee and Elias Bareinboim.
\newblock Structural causal bandits with non-manipulable variables.
\newblock In \emph{Proceedings of the AAAI Conference on Artificial Intelligence}, pages 4164--4172, 2019.

\bibitem[Lee and Bareinboim(2020)]{lee2020characterizing}
Sanghack Lee and Elias Bareinboim.
\newblock Characterizing optimal mixed policies: Where to intervene and what to observe.
\newblock \emph{Advances in Neural Information Processing Systems}, 33:\penalty0 8565--8576, 2020.

\bibitem[Lee et~al.(2019)Lee, Correa, and Bareinboim]{lee2019general}
Sanghack Lee, Juan~D Correa, and Elias Bareinboim.
\newblock General identifiability with arbitrary surrogate experiments.
\newblock In \emph{Proceedings of the 35th Conference on Uncertainty in Artificial Intelligence}. AUAI Press, 2019.

\bibitem[Lee et~al.(2020)Lee, Correa, and Bareinboim]{lee2020general}
Sanghack Lee, Juan Correa, and Elias Bareinboim.
\newblock General transportability--synthesizing observations and experiments from heterogeneous domains.
\newblock In \emph{Proceedings of the AAAI Conference on Artificial Intelligence}, volume~34, pages 10210--10217, 2020.

\bibitem[Lu et~al.(2020)Lu, Meisami, Tewari, and Yan]{lu2020regret}
Yangyi Lu, Amirhossein Meisami, Ambuj Tewari, and William Yan.
\newblock Regret analysis of bandit problems with causal background knowledge.
\newblock In \emph{Conference on Uncertainty in Artificial Intelligence}, pages 141--150. PMLR, 2020.

\bibitem[Manski(1990)]{manski1990nonparametric}
Charles~F Manski.
\newblock Nonparametric bounds on treatment effects.
\newblock \emph{The American Economic Review}, 80\penalty0 (2):\penalty0 319--323, 1990.

\bibitem[Park et~al.(2025)Park, Arditi, Bareinboim, and Lee]{park2025structural}
Min~Woo Park, Andy Arditi, Elias Bareinboim, and Sanghack Lee.
\newblock Structural causal bandits under markov equivalence.
\newblock 2025.

\bibitem[Pearl(1995)]{pearl:95}
Judea Pearl.
\newblock Causal diagrams for empirical research.
\newblock \emph{Biometrika}, 82\penalty0 (4):\penalty0 669--710, 1995.

\bibitem[Pearl(2000)]{pearl:2k}
Judea Pearl.
\newblock \emph{Causality: Models, Reasoning, and Inference}.
\newblock Cambridge University Press, New York, 2000.
\newblock 2nd edition, 2009.

\bibitem[Pearl and Bareinboim(2011)]{pearl2011transportability}
Judea Pearl and Elias Bareinboim.
\newblock Transportability of causal and statistical relations: A formal approach.
\newblock In \emph{Proceedings of the AAAI Conference on Artificial Intelligence}, volume~25, pages 247--254, 2011.

\bibitem[Pearl and Robins(1995)]{pearl1995SBD}
Judea Pearl and James Robins.
\newblock Probabilistic evaluation of sequential plans from causal models with hidden variables.
\newblock In \emph{Proceedings of the Eleventh conference on Uncertainty in artificial intelligence}, pages 444--453, 1995.

\bibitem[Qin et~al.(2023)Qin, Wang, and Zhou]{qin2023rehearsal}
Tian Qin, Tian-Zuo Wang, and Zhi-Hua Zhou.
\newblock Rehearsal learning for avoiding undesired future.
\newblock \emph{Advances in Neural Information Processing Systems}, 36:\penalty0 80517--80542, 2023.

\bibitem[Qin et~al.(2025)Qin, Wang, and Zhou]{qin2025gradient}
Tian Qin, Tian-Zuo Wang, and Zhi-Hua Zhou.
\newblock Gradient-based nonlinear rehearsal learning with multivariate alterations.
\newblock In \emph{Proceedings of the AAAI Conference on Artificial Intelligence}, volume~39, pages 26859--26867, 2025.

\bibitem[Robbins(1952)]{robbins1952some}
Herbert Robbins.
\newblock Some aspects of the sequential design of experiments.
\newblock \emph{Bull. Amer. Math. Soc.}, 58\penalty0 (5):\penalty0 527--535, 1952.

\bibitem[Robins(1989)]{robins1989analysis}
James~M Robins.
\newblock The analysis of randomized and non-randomized aids treatment trials using a new approach to causal inference in longitudinal studies.
\newblock \emph{Health service research methodology: a focus on AIDS}, pages 113--159, 1989.

\bibitem[Ruan et~al.(2024)Ruan, Zhang, Di, and Bareinboim]{ruan2024causal}
Kangrui Ruan, Junzhe Zhang, Xuan Di, and Elias Bareinboim.
\newblock Causal imitation for markov decision processes: A partial identification approach.
\newblock \emph{Advances in Neural Information Processing Systems}, 37:\penalty0 87592--87620, 2024.

\bibitem[Shpitser and Pearl(2006)]{shpitser2006identification}
Ilya Shpitser and Judea Pearl.
\newblock Identification of joint interventional distributions in recursive semi-{M}arkovian causal models.
\newblock In \emph{Proceedings of the 21st AAAI Conference on Artificial Intelligence}, page 1219. Menlo Park, CA; Cambridge, MA; London; AAAI Press; MIT Press; 1999, 2006.

\bibitem[Tao et~al.(2025)Tao, Wang, Jiang, and Zhou]{Lue2025avoding}
Lue Tao, Tian-Zuo Wang, Yuan Jiang, and Zhi-Hua Zhou.
\newblock Avoiding undesired future with sequential decisions.
\newblock pages 6245--6253, 09 2025.
\newblock \doi{10.24963/ijcai.2025/695}.

\bibitem[Tian and Pearl(2002)]{tian2002general}
Jin Tian and Judea Pearl.
\newblock A general identification condition for causal effects.
\newblock In \emph{Proceedings of the 18th National Conference on Artificial Intelligence}, pages 567--573, 2002.

\bibitem[Tian and Pearl(2003)]{tian:02}
Jin Tian and Judea Pearl.
\newblock On the identification of causal effects.
\newblock Technical Report R-290-L, 2003.

\bibitem[Varici et~al.(2023)Varici, Shanmugam, Sattigeri, and Tajer]{varici2023causal}
Burak Varici, Karthikeyan Shanmugam, Prasanna Sattigeri, and Ali Tajer.
\newblock Causal bandits for linear structural equation models.
\newblock \emph{Journal of Machine Learning Research}, 24\penalty0 (297):\penalty0 1--59, 2023.

\bibitem[Verma and Pearl(1990)]{verma1990equivalence}
Thomas Verma and Judea Pearl.
\newblock Equivalence and synthesis of causal models.
\newblock In \emph{Proceedings of the Sixth Annual Conference on Uncertainty in Artificial Intelligence}, pages 255--270, 1990.

\bibitem[Wei et~al.(2023)Wei, Elahi, Ghasemi, and Kocaoglu]{wei2023approximate}
Lai Wei, Muhammad~Qasim Elahi, Mahsa Ghasemi, and Murat Kocaoglu.
\newblock Approximate allocation matching for structural causal bandits with unobserved confounders.
\newblock \emph{Advances in Neural Information Processing Systems}, 36:\penalty0 68810--68832, 2023.

\bibitem[Xia et~al.(2021)Xia, Lee, Bengio, and Bareinboim]{xia2021causal}
Kevin Xia, Kai-Zhan Lee, Yoshua Bengio, and Elias Bareinboim.
\newblock The causal-neural connection: Expressiveness, learnability, and inference.
\newblock \emph{Advances in Neural Information Processing Systems}, 34, 2021.

\bibitem[Zhang(2023)]{Zhang2023towards}
Junzhe Zhang.
\newblock \emph{Towards Causal Reinforcement Learning}.
\newblock PhD thesis, columbia university, 2023.

\bibitem[Zhang and Bareinboim(2017)]{zhang2017transfer}
Junzhe Zhang and Elias Bareinboim.
\newblock Transfer learning in multi-armed bandit: a causal approach.
\newblock In \emph{Proceedings of the 16th Conference on Autonomous Agents and MultiAgent Systems}, pages 1778--1780, 2017.

\bibitem[Zhang and Bareinboim(2019)]{zhang2019near}
Junzhe Zhang and Elias Bareinboim.
\newblock Near-optimal reinforcement learning in dynamic treatment regimes.
\newblock \emph{Advances in Neural Information Processing Systems}, 32, 2019.

\bibitem[Zhang and Bareinboim(2020)]{zhang2020designing}
Junzhe Zhang and Elias Bareinboim.
\newblock Designing optimal dynamic treatment regimes: A causal reinforcement learning approach.
\newblock In \emph{International conference on machine learning}, pages 11012--11022. PMLR, 2020.

\bibitem[Zhang and Bareinboim(2021)]{zhang2021bounding}
Junzhe Zhang and Elias Bareinboim.
\newblock Bounding causal effects on continuous outcome.
\newblock In \emph{Proceedings of the AAAI Conference on Artificial Intelligence}, volume~35, pages 12207--12215, 2021.

\bibitem[Zhang and Bareinboim(2022)]{zhang2022online}
Junzhe Zhang and Elias Bareinboim.
\newblock Online reinforcement learning for mixed policy scopes.
\newblock \emph{Advances in Neural Information Processing Systems}, 35:\penalty0 3191--3202, 2022.

\bibitem[Zhang and Bareinboim(2024)]{zhang2024eligibility}
Junzhe Zhang and Elias Bareinboim.
\newblock Eligibility traces for confounding robust off-policy evaluation.
\newblock \emph{OpenReview preprint}, 2024.

\bibitem[Zhang et~al.(2020)Zhang, Kumor, and Bareinboim]{zhang2020causal}
Junzhe Zhang, Daniel Kumor, and Elias Bareinboim.
\newblock Causal imitation learning with unobserved confounders.
\newblock \emph{Advances in Neural Information Processing Systems}, 33:\penalty0 12263--12274, 2020.

\bibitem[Zhang et~al.(2022)Zhang, Tian, and Bareinboim]{zhang2022partial}
Junzhe Zhang, Jin Tian, and Elias Bareinboim.
\newblock Partial counterfactual identification from observational and experimental data.
\newblock In \emph{International conference on machine learning}, pages 26548--26558. PMLR, 2022.

\end{thebibliography}

\clearpage
\newpage
\appendix 

\renewcommand{\baselinestretch}{0.5}\normalsize
\tableofcontents
\renewcommand{\baselinestretch}{1.05}\normalsize
\newpage

\section{Related Works}\label{sec: app related works}

\paragraph{Identification.}  Causal effect identification, also known as the \textit{identification problem} \citep{pearl:95}, concerns whether the causal effect of an intervention on a set of variables can be uniquely computed from the observational distribution over observed variables and a given causal diagram. Foundational works \citep{tian2002general, shpitser2006identification,huang2006pearl,huang2008completeness} culminated in a complete graphical and algorithmic characterization of the problem. Beyond purely observational sources, there has been growing interest in generalizing the identification problem to settings where both observational and experimental data are available.  
\citet{bareinboim2012causal} studied the conditions under which the causal effect is uniquely computable, given a causal diagram and a collection of observational and experimental distributions over \textit{all} subsets of a given set. \citet{lee2019general} and \citet{kivva2022revisiting} investigated the identification problem over arbitrary collections of distributions and established necessary and sufficient graphical conditions for generalized identification.

\paragraph{Partial identification.}  
Given a causal diagram, one can express a causal effect in terms of the observational distribution using standard identification algorithms. However, challenges related to non-identifiability may arise, and the target effect may not be uniquely computable from observational data. The framework of \textit{partial identification} \citep{balke1995counterfactuals, balke1997bounds} addresses this issue by constraining the parameter space of causal effects within an interval. \citet{zhang2022partial} proposed a polynomial programming approach to solve partial identification problems for arbitrary causal diagrams.  Partial identification has also been applied in the causal decision-making literature to estimate dynamic treatment regimes \citep{zhang2019near, zhang2020designing, zhang2022online}, reinforcement learning \citep{zhang2024eligibility, bareinboim2024introduction}, bandit algorithms \citep{zhang2017transfer, zhang2021bounding, joshi2024towards}, and other domains \citep{jalaldoust2024partial, bellot2024towards, ruan2024causal}.

\paragraph{Transportability.}
In the causal inference literature, the problem of identifying causal effects under potential environment discrepancies has been extensively studied through the theory of \textit{transportability} \citep{pearl2011transportability, bareinboim2012transportability, bareinboim2013transportability, bareinboim2016causal}. These works focus on determining whether a causal effect can be identified across environments, and which aspects of causal knowledge can be transferred. \citet{lee2020general} investigated transportability under arbitrary combinations of experiments conducted in both the source and target environments. \citet{correa2020general} extended this framework to handle soft interventions, while \citet{correa2022counterfactual} further generalized it to the setting of counterfactual effects. More recently, \citet{jalaldoust2024partial} proposed a parameterization model approach \citep{zhang2022partial, xia2021causal} to bound non-transportable causal effects. \citet{zhang2017transfer} explored the integration of prior knowledge into the multi-armed bandit (MAB) framework under restrictive graph structures such as the Bow and IV settings. Building upon this, \citet{bellot2023transportability} extended these ideas to arbitrary causal diagrams, and \citet{deng2025transfer} further generalized them to settings where contextual information is available. 

\paragraph{Structural causal bandits.}  
\citet{lee2018structural} formalized the \textit{structural causal bandit} framework, in which a bandit instance is structured by an SCM, and each action corresponds to an intervention on a subset of variables. The authors proposed a sound and complete graphical characterization for identifying actions that could be part of an optimal strategy, enabling an agent to avoid unnecessary exploration \textit{a priori}, without any actual interaction. \citet{lee2019structural} extended the framework to settings involving non-manipulable variables. \citet{lee2020characterizing} established the framework under stochastic policies and demonstrated the informativeness of such policies. \citet{everitt2021agent} and \citet{carey2024toward} further investigated the completeness of the graphical characterization of optimal policy spaces, although the general completeness remains an open problem. \citet{wei2023approximate} proposed a parameterization-based approach to incorporate shared information among possibly-optimal actions. \citet{elahi2024partial} extended the SCM-MAB framework to settings where no causal graph is assumed to be accessible, requiring their algorithm to perform causal discovery---i.e., to construct the causal structure---during online interaction. Recently, \citet{park2025structural} investigates SCM-MAB in settings where the available information is not a full partial ancestral graph representing the Markov equivalence class of the true causal diagram. 

\section{Background}\label{sec: app background}

\paragraph{D-separation.} In a causal diagram $\mc{G}$, a path $p$ between vertices $X$ and $Y$ is a \textit{d-connecting} path relative to a set $\mf{Z}$ if (i) every non-collider on $p$ is not a member of $\mf{Z}$; and (ii) every collider on $p$ is an ancestor of some member of $\mf{Z}$. Two variables $\mf{X}$ and $\mf{Y}$ are said to be \textit{d-separated} by $\mf{Z}$ if there is no d-connecting path between $X$ and $Y$ relative to $\mf{Z}$. Two disjoint sets $\mf{X}$ and $\mf{Y}$ are said to be d-separated by $\mf{Z}$ if every variable in $\mf{X}$ is d-separated from every variable in $\mf{Y}$ by $\mf{Z}$ and denoted as $(\mf{X} \CI_d \mf{Y} \mid \mf{Z})_{\mc{G}}$. 
\paragraph{Do-calculus.} \citet{pearl:95} devised \textit{do-calculus} which acts as a bridge between observational and interventional distributions from a causal diagram without relying on any parametric assumptions. 
\begin{theorem}[Do-calculus \citep{pearl:95}]
    Let $\mc{G}$ be a causal diagram compatible with a structural causal model $\mc{M}$, with endogenous variables $\mf{V}$. For any disjoint $\mf{X}$, $\mf{Y}$, $\mf{W}$, $\mf{Z}$ $\subseteq \mf{V}$, the following rules are valid.
    
    {\small
    \begin{align*}
        &\textbf{ Rule 1. }
            P(\mf{y} \mid do(\mf{w}), \mf{x}, \mf{z}) = P(\mf{y} \mid do(\mf{w}), \mf{z})
                 &&\text{if }\mf{X} \text{ and } \mf{Y} \text{ are d-separated by }\mf{W} \cup \mf{Z} \text{ in } \mc{G}_{\overline{\mf{W}}}\\[.5em]
        &\textbf{ Rule 2. } 
            P(\mf{y} \mid do(\mf{w}), do(\mf{x}), \mf{z}) = P(\mf{y} \mid do(\mf{w}), \mf{x}, \mf{z}) 
                &&\text{if }\mf{X} \text{ and } \mf{Y} \text{ are d-separated by }\mf{W} \cup \mf{Z} \text{ in } \mc{G}_{\overline{\mf{W}},\underline{\mf{X}}}\\[.5em]
        &\textbf{ Rule 3. }
            P(\mf{y} \mid do(\mf{w}), do(\mf{x}), \mf{z}) = P(\mf{y} \mid do(\mf{w}), \mf{z}) 
                &&\text{if }\mf{X} \text{ and } \mf{Y} \text{ are d-separated by }\mf{W} \cup \mf{Z} \text{ in } \mc{G}_{\overline{\mf{W}},\overline{\mf{X(Z)}}}\\[.3em]
        &\text{ where } \mf{X(Z)} \triangleq \mf{X} \setminus \An(\mf{Z})_{\mc{G}[\mf{V} \setminus \mf{W}]}.
    \end{align*}}    
\end{theorem}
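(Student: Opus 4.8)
The plan is to reduce all three rules to a single workhorse fact---the soundness of d-separation for the distribution induced by a structural causal model (the global Markov property)---applied inside the appropriately mutilated graph, exploiting the submodel semantics of interventions. Recall that intervening with $do(\mf{w})$ produces the submodel $\mc{M}_\mf{w}$ whose causal diagram is $\Gi{\mf{W}}$ and whose observational law is exactly $P(\cdot \mid do(\mf{w}))$; hence any statement conditioned on $do(\mf{w})$ may be read as a purely observational statement inside $\mc{M}_\mf{w}$. First I would establish (or cite) the foundational lemma: for any SCM $\mc{M}$ with diagram $\mc{H}$, if $(\mf{A} \CI_d \mf{B} \mid \mf{C})_{\mc{H}}$ then $\mf{A} \CI \mf{B} \mid \mf{C}$ in $P_\mc{M}$. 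This follows from the recursive factorization of $P_\mc{M}$ along $\mc{H}$ (with bidirected edges absorbed via latent projection) together with the standard equivalence between d-separation and the ordered local Markov property.

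Rule~1 then follows immediately. Working inside $\mc{M}_\mf{w}$ with diagram $\Gi{\mf{W}}$, the hypothesis $(\mf{X} \CI_d \mf{Y} \mid \mf{W} \cup \mf{Z})_{\Gi{\mf{W}}}$ yields $\mf{X} \CI \mf{Y} \mid \mf{Z}$ after fixing the constant $\mf{W}$, i.e. $P_\mf{w}(\mf{y} \mid \mf{x},\mf{z}) = P_\mf{w}(\mf{y} \mid \mf{z})$, which is precisely the claimed equality.

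Rules~2 and 3 require an augmented SCM. For each $X \in \mf{X}$ I would adjoin a regime indicator $F_X$ with the single edge $F_X \to X$, ranging over $\{\texttt{idle}\} \cup \Omega_X$: when $F_X = \texttt{idle}$, $X$ keeps its original mechanism, and when $F_X = x$ it overrides $X \gets x$. Call the resulting diagram $\mc{G}^+$. By construction, $P(\mf{y} \mid do(\mf{w}), do(\mf{x}), \mf{z})$ equals the augmented law conditioned on $\mf{F}_\mf{X} = \mf{x}$, whereas $P(\mf{y} \mid do(\mf{w}), \mf{x}, \mf{z})$ equals it conditioned on $\mf{F}_\mf{X} = \texttt{idle}$ together with the observation $\mf{X} = \mf{x}$, and $P(\mf{y} \mid do(\mf{w}), \mf{z})$ corresponds to $\mf{F}_\mf{X} = \texttt{idle}$ with $\mf{X}$ left unobserved. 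Consequently Rule~2 reduces to verifying $(\mf{F}_\mf{X} \CI_d \mf{Y} \mid \mf{X}, \mf{W}, \mf{Z})_{\mc{G}^+}$ and Rule~3 to $(\mf{F}_\mf{X} \CI_d \mf{Y} \mid \mf{W}, \mf{Z})_{\mc{G}^+}$ within the augmented diagram of $\mc{M}_\mf{w}$, after which the foundational lemma delivers the desired equalities.

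The crux---and the step I expect to be hardest---is the purely graphical correspondence between these augmented separations and the mutilation conditions as stated, namely that $(\mf{F}_\mf{X} \CI_d \mf{Y} \mid \mf{X}, \mf{W}, \mf{Z})_{\mc{G}^+}$ holds iff $(\mf{X} \CI_d \mf{Y} \mid \mf{W}, \mf{Z})_{\Gio{\mf{W}}{\mf{X}}}$, and that $(\mf{F}_\mf{X} \CI_d \mf{Y} \mid \mf{W}, \mf{Z})_{\mc{G}^+}$ holds iff $(\mf{X} \CI_d \mf{Y} \mid \mf{W}, \mf{Z})_{\mc{G}_{\overline{\mf{W}},\overline{\mf{X(Z)}}}}$. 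The argument is a path-surgery analysis of any path leaving $F_X$: it must open with $F_X \to X$, so its status is dictated by how $X$ is treated in the conditioning set. When $\mf{X}$ is conditioned (Rule~2), only collider traversals through $X$ survive, matching the deletion of $X$'s outgoing edges in $\Go{\mf{X}}$ (with $X$ becoming a path endpoint on the mutilated side). When $\mf{X}$ is not conditioned (Rule~3), the chain traversal $F_X \to X \to \cdots$ is active, while a collider at $X$ is live only if $X$ has a descendant in $\mf{Z}$, i.e. only for $X \in \An(\mf{Z})$; this is exactly why incoming edges are removed solely for $\mf{X(Z)} = \mf{X} \setminus \An(\mf{Z})_{\mc{G}[\mf{V} \setminus \mf{W}]}$. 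Careful bookkeeping of paths that revisit other members of $\mf{X}$, which are all simultaneously conditioned-on or mutilated, closes the correspondence, and this case analysis of collider activation under the $\mf{X(Z)}$ refinement is the principal obstacle.
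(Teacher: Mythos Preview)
The paper does not prove this theorem; it is stated in the background appendix as a classical result attributed to Pearl (1995), with no accompanying proof. Your sketch---reducing everything to the global Markov property in a suitably mutilated or augmented graph, via regime indicators $F_X$ for Rules~2 and~3---is the standard argument from the literature and is correct in outline, with the path-surgery correspondence for the $\mf{X(Z)}$ refinement in Rule~3 being exactly where the care is needed.
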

\paragraph{Latent projection.} The \textit{latent projection} \citep{verma1990equivalence} of a causal diagram $\mc{G}$ over $\mf{V}$ on $\mf{X}$, denoted by $\mc{G}\langle \mf{X} \rangle$ is a causal diagram over $\mf{X}$ such that, in addition to including edges in $\mc{G}[\mf{X}]$, for every pair of distinct vertices $V_i, V_j \in \mf{X}$, (i) add a directed edge $V_i \to V_j$ in $\mc{G}\langle \mf{X} \rangle$ if there exists a directed path from $V_i$ to $V_j$ in $\mc{G}$ such that every non-endpoint vertex on the path is not in $\mf{X}$, and (ii) add a bidirected edge $V_i \bedge V_j$ in $\mc{G}\langle \mf{X} \rangle$ if there exists a divergent path between $V_i$ and $V_j$ in $\mc{G}$ such that every non-endpoint vertex on the path is not in $\mf{X}$. 
\paragraph{Identification.}  Causal effect identification \citep{pearl:95} concerns whether the causal effect of an intervention on a set of variables can be uniquely computed from the observational distribution over observed variables and a given causal diagram. 

\begin{definition}[Identifiability \citep{pearl:2k}]\label{def: identification}
    The causal effect of the intervention on $\mf{X} = \mf{x}$ is \textit{identifiable} in $\mc{G}$, if for any two positive models $\mc{M}_1$ and $\mc{M}_2$ that induce the causal diagram $\mc{G}$, $P^{\mc{M}_1}(\mf{V}) = P^{\mc{M}_2} (\mf{V}) > 0$ implies $P_{\mf{x}}^{\mc{M}_1}(\mf{y}) = P_{\mf{x}}^{\mc{M}_2}(\mf{y})$. 
\end{definition}
In the identification problem, a basic structural unit known as the \textit{c-component (confounded component)} plays a crucial role. Given a semi-Markovian graph $\mc{G}$ over a set of variables $\mf{V}$, there exists a unique partition such that each subgraph of $\mc{G}$ is a c-component. 
\begin{definition}[C-component \citep{tian:02}]
    Let $\mc{G}$ be a semi-Markovian graph such that a subset of its bidirected arcs forms a spanning tree over all vertices in $\mc{G}$. Then $\mc{G}$ is a \textit{c-component}. 
\end{definition}
We denote by $\cc_\mc{G}$ the collection of maximal c-components so that $\cc_\mc{G} = \{\mf{C}_j\}_{j=1}^l$ implies that $\mf{C}_i$ is a maximal c-component, for each $\mf{C}_i \subseteq \mf{V}$, and there is no bidirected edge between $\mf{C}_i$ and $\mf{C}_j$ in $\mc{G}$ for $i \neq j$. Following \citet{tian:02}, for any $\mf{C} \subseteq \mf{V}$, we define function $Q[\mf{C}](\mf{v})$ = $P_{\mf{v} \setminus \mf{c}}(\mf{c})$. Moreover, $Q[\mf{V}](\mf{v}) = P(\mf{v})$ and $Q[\emptyset](\mf{v}) = 1$. For convenience, we omit input $\mf{v}$ and write $Q[\mf{C}]$.The importance of c-components lies in the following lemma. 
\begin{lemma}[Lemma 3 in \citet{tian:02}]\label{lem: ancestral}
    Let $\mf{C}\subseteq \mf{C}' \subseteq \mf{V}$, if $\mf{C}$ is an ancestral set in $\mc{G}[\mf{C}']$, then
    \begin{equation}
        \sum_{\mf{c'\setminus c}} Q[\mf{C}'] = Q[\mf{C}].
    \end{equation}
\end{lemma}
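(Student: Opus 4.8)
The plan is to prove the identity by induction on $|\mf{C}' \setminus \mf{C}|$, marginalizing out one variable at a time in reverse topological order. The base case is $\mf{C} = \mf{C}'$, where the claimed sum ranges over the empty set of variables and the statement collapses to $Q[\mf{C}'] = Q[\mf{C}]$.

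For the inductive step, suppose $\mf{C} \subsetneq \mf{C}'$. Since $\mf{C}$ is an ancestral set in $\mc{G}[\mf{C}']$, no vertex of $\mf{C}' \setminus \mf{C}$ can be an ancestor of any vertex of $\mf{C}$ inside $\mc{G}[\mf{C}']$; hence there is no directed path (in particular no directed edge) from $\mf{C}' \setminus \mf{C}$ into $\mf{C}$ within $\mc{G}[\mf{C}']$. Consequently I can fix a topological order of $\mc{G}[\mf{C}']$ in which every variable of $\mf{C}$ precedes every variable of $\mf{C}' \setminus \mf{C}$ (concatenate a topological order of $\mc{G}[\mf{C}]$ with one of $\mc{G}[\mf{C}' \setminus \mf{C}]$; the only edges one must check, those from $\mf{C}' \setminus \mf{C}$ into $\mf{C}$, do not exist). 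Let $W$ be the last vertex in this order. Because $\mf{C}' \setminus \mf{C} \neq \emptyset$, we have $W \in \mf{C}' \setminus \mf{C}$, and $W$ has no children in $\mc{G}[\mf{C}']$, i.e. $W$ is a sink of that subgraph.

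The crux is the one-step marginalization claim $\sum_{w} Q[\mf{C}'] = Q[\mf{C}' \setminus \{W\}]$. Unfolding the definitions, this reads $\sum_w P_{\mf{v} \setminus \mf{c}'}(\mf{c}') = P_{(\mf{v} \setminus \mf{c}') \cup \{w\}}(\mf{c}' \setminus \{w\})$; that is, having already intervened on $\mf{V}\setminus\mf{C}'$, additionally intervening on $W$ does not change the joint law of $\mf{C}'\setminus\{W\}$. This is exactly Rule~3 of do-calculus with $do(\mf{w})$ instantiated as $do(\mf{V}\setminus\mf{C}')$, $\mf{X}=\{W\}$, $\mf{Y}=\mf{C}'\setminus\{W\}$, $\mf{Z}=\emptyset$: the required separation $(W \CI_d \mf{C}'\setminus\{W\})$ must hold in $\Gi{(\mf{V}\setminus\mf{C}')\cup\{W\}}$, and there $W$ is an isolated vertex, since its only possible outgoing directed edges lead to vertices of $\mf{C}'$ (edges into $\mf{V}\setminus\mf{C}'$ being cut), yet $W$ is a sink of $\mc{G}[\mf{C}']$, while every incoming and bidirected edge at $W$ is removed by $\overline{W}$. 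Intuitively: once $\mf{V}\setminus\mf{C}'$ is held fixed, the realized value of a childless-within-$\mf{C}'$ vertex $W$ enters no structural equation producing $\mf{C}'\setminus\{W\}$, so averaging over it and fixing it coincide.

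Finally, deleting $W$ preserves ancestrality: for each $V\in\mf{C}$ we have $\An(V)_{\mc{G}[\mf{C}'\setminus\{W\}]}\subseteq\An(V)_{\mc{G}[\mf{C}']}\subseteq\mf{C}$ and $W\notin\mf{C}$, so $\mf{C}$ is ancestral in $\mc{G}[\mf{C}'\setminus\{W\}]$, and $|(\mf{C}'\setminus\{W\})\setminus\mf{C}| = |\mf{C}'\setminus\mf{C}|-1$. The induction hypothesis applied to $\mf{C}\subseteq\mf{C}'\setminus\{W\}$ gives $\sum_{(\mf{c}'\setminus\{w\})\setminus\mf{c}} Q[\mf{C}'\setminus\{W\}] = Q[\mf{C}]$, and chaining with the marginalization claim, $\sum_{\mf{c}'\setminus\mf{c}} Q[\mf{C}'] = \sum_{(\mf{c}'\setminus\{w\})\setminus\mf{c}}\sum_w Q[\mf{C}'] = \sum_{(\mf{c}'\setminus\{w\})\setminus\mf{c}} Q[\mf{C}'\setminus\{W\}] = Q[\mf{C}]$, closing the induction. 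I expect the main obstacle to be making the one-step marginalization claim airtight---precisely justifying why a sink of $\mc{G}[\mf{C}']$ can be intervened on "for free" after fixing $\mf{V}\setminus\mf{C}'$, and carefully tracking which directed and bidirected edges survive the manipulations $\Gi{\,\cdot\,}$ and $\overline{W}$; the surrounding inductive scaffolding is routine bookkeeping.
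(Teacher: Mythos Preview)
Your proof is correct. The paper itself does not prove this lemma; it is quoted as background (Lemma~3 of \citet{tian:02}) with no accompanying argument, so there is no in-paper proof to compare against.

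Your induction on $|\mf{C}'\setminus\mf{C}|$, peeling off one sink at a time and invoking Rule~3 of do-calculus, is sound. One minor remark: the d-separation condition in Rule~3 formally requires $W$ and $\mf{C}'\setminus\{W\}$ to be separated \emph{given} $\mf{V}\setminus\mf{C}'$ (the set playing the role of $\mf{W}\cup\mf{Z}$ in the rule), not unconditionally. You stated the unconditional version. This is harmless here because, as you correctly argue, $W$ is an isolated vertex in $\Gi{(\mf{V}\setminus\mf{C}')\cup\{W\}}$ and is therefore d-separated from everything regardless of the conditioning set; but it is worth stating the premise exactly. For context, the original argument in Tian's work proceeds directly through the semi-Markovian factorization of $Q[\mf{C}']$ (summing out a sink simply drops its factor) rather than via do-calculus; the two routes are equivalent, with yours packaging the same graphical check into a single rule application.
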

\begin{lemma}[Lemma 4 in \citet{tian:02}]\label{lem: Qdecompose}
    Let $\mf{C}\subseteq \mf{V}$, and assume that $\mf{C}$ is partitioned into c-components $\mf{C}_1, \cdots, \mf{C}_l$ in $\mc{G}[\mf{C}]$. Then,
    \begin{enumerate}[label=(\roman*)]
        \item $Q[\mf{C}]$ can be decomposed as
            \begin{equation}
                Q[\mf{C}] = \prod_{j=1}^l Q[\mf{C}_j]
            \end{equation}
        \item Let $\prec$ be a topological order over the variables in $\mf{C}$ according to $\mc{G}[\mf{C}]$ such that $C_1 \prec C_2 \cdots \prec C_k$. Let $\mf{C}^{\preceq i}$ be the variables in $\mf{C}$ that ordered before $C_i$ including $C_i$. Let $\mf{C}^{\succ i}$ be the variables in $\mf{C}$ that ordered after $C_i$.
        Then each $Q[\mf{C}_i]$ is computable from $Q[\mf{C}]$ and is given by
        \begin{equation}
            Q[\mf{C}_j] = \prod_{C_i \in \mf{C}_j}\frac{Q[\mf{C}^{\preceq i}]}{Q[\mf{C}^{\preceq i-1}]}
        \end{equation}
        where each $Q[\mf{C}^{\preceq i}] = \sum_{\mf{C}^{\succ i}} Q[\mf{C}]$.
        
        \item Each $\frac{Q[\mf{C}^{\preceq i}]}{Q[\mf{C}^{\preceq i-1}]}$ is a function only of $\mf{T}_i \triangleq \Pa(\bar{\mf{C}_i})_\mc{G} \setminus \bar{\mf{C}_i}$, where $\bar{\mf{C}_i}$ is the c-component of $\mc{G}[\mf{C}^{\preceq i}]$ that contains $C_i$.
    \end{enumerate}    
\end{lemma}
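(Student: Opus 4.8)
Write $\mc{M}' \triangleq \mc{M}_{\mf{v}\setminus\mf{c}}$, so that the c-factor $Q[\mf{C}](\mf{v}) = P_{\mf{v}\setminus\mf{c}}(\mf{c})$ is exactly the joint over $\mf{C}$ in the submodel $\mc{M}'$. The plan is to run, inside $\mc{M}'$, the standard confounded-component factorization argument: the causal diagram of $\mc{M}'$ restricted to $\mf{C}$ retains precisely the bidirected structure of $\mc{G}[\mf{C}]$, hence its unique c-component partition is $\{\mf{C}_1,\dots,\mf{C}_l\}$. Two ingredients drive the argument. Ingredient \emph{(F)}: the c-factorization lemma of \citet{tian:02}---in any SCM the joint over the endogenous variables factorizes into a product of c-factors, one per c-component, and, once a topological order $V_1\prec\dots\prec V_n$ is fixed, the factor of a c-component $\mf{S}$ equals $\prod_{V_i\in\mf{S}}P(v_i\mid v_1,\dots,v_{i-1})$; moreover each c-factor depends only on its c-component together with that component's $\mc{G}$-parents. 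Ingredient \emph{(G)}: \Cref{lem: ancestral}, which marginalizes a c-factor over an ancestral set. I would invoke (F) from the cited work, or reprove it by partitioning the exogenous variables $\mf{U}$ according to which c-component they feed and splitting $\int_{\mf{u}}\prod_{V\in\mf{V}}\mathds{I}\{f_V(\mf{pa}_V,\mf{u}_V)=v\}\,dP(\mf{u})$ along that independent partition.

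\emph{Part (i).} Applying (F) inside $\mc{M}'$ gives $Q[\mf{C}] = \prod_{j=1}^{l}Q_{\mc{M}'}[\mf{C}_j]$, so it suffices to identify $Q_{\mc{M}'}[\mf{C}_j]$ with the global $Q[\mf{C}_j]$. By definition $Q_{\mc{M}'}[\mf{C}_j]$ is $P_{\mf{v}\setminus\mf{c}}$ further intervened by $\mf{c}\setminus\mf{c}_j$, and iterating interventions gives $(\mc{M}')_{\mf{c}\setminus\mf{c}_j} = \mc{M}_{\mf{v}\setminus\mf{c}_j}$; hence $Q_{\mc{M}'}[\mf{C}_j] = P_{\mf{v}\setminus\mf{c}_j}(\mf{c}_j) = Q[\mf{C}_j]$, proving (i).

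\emph{Parts (ii) and (iii).} Fix a topological order $C_1\prec\dots\prec C_k$ on $\mc{G}[\mf{C}]$, chosen to extend to one on all of $\mf{V}$. Every prefix $\mf{C}^{\preceq i}$ is an ancestral set of $\mc{G}[\mf{C}]$ (and of $\mc{G}[\mf{C}^{\preceq j}]$ for $i\le j$), so (G) yields $\sum_{\mf{C}^{\succ i}}Q[\mf{C}] = Q[\mf{C}^{\preceq i}]$; dividing consecutive instances identifies $Q[\mf{C}^{\preceq i}]/Q[\mf{C}^{\preceq i-1}]$ with the chain-rule conditional $P_{\mc{M}'}(c_i\mid c_1,\dots,c_{i-1})$. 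The chain-rule clause of (F), applied in $\mc{M}'$, says the c-factor $Q[\mf{C}_j]$ collects exactly the conditionals with $C_i\in\mf{C}_j$, so substituting the ratios gives $Q[\mf{C}_j] = \prod_{C_i\in\mf{C}_j}Q[\mf{C}^{\preceq i}]/Q[\mf{C}^{\preceq i-1}]$, which is (ii). For (iii), apply part (i) with $\mf{C}^{\preceq i}$ in place of $\mf{C}$: then $Q[\mf{C}^{\preceq i}]$ factors over the c-components of $\mc{G}[\mf{C}^{\preceq i}]$, of which only $\bar{\mf{C}_i}$ contains $C_i$ (a sink there), while $\sum_{c_i}Q[\mf{C}^{\preceq i}] = Q[\mf{C}^{\preceq i-1}]$ by (G); the ratio therefore collapses to $Q[\bar{\mf{C}_i}]/\sum_{c_i}Q[\bar{\mf{C}_i}]$, and since c-factors depend only on their c-component and its $\mc{G}$-parents, the sole external variables entering this ratio are those of $\mf{T}_i = \Pa(\bar{\mf{C}_i})_{\mc{G}}\setminus\bar{\mf{C}_i}$.

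The step I expect to be the main obstacle is ingredient (F): establishing both that the interventional distribution of $\mc{M}'$ factorizes over the c-components of its induced subgraph \emph{and} that each factor carries the stated chain-rule and locality forms. This is the genuinely technical part---it needs the partition of $\mf{U}$ into mutually independent bundles indexed by c-components, the expansion of the model's defining integral along that partition, and a topological argument aligning the chain-rule conditionals with c-component membership. Once (F) is granted, the remainder is bookkeeping: \Cref{lem: ancestral} discharges the prefix marginalizations, the iterated-intervention identity discharges the submodel identifications in (i), and the locality half of (F) discharges (iii). A minor care point is that the topological order chosen on $\mf{C}$ must be compatible with one on $\mf{V}$, so that the chain-rule manipulations carried out inside $\mc{M}'$ remain valid.
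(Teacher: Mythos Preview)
The paper does not supply its own proof of this statement: it appears in the background appendix as a citation of Lemma~4 in \citet{tian:02} and is used as a black box throughout. Your proposal is therefore not competing against anything in the paper; it is essentially a reconstruction of Tian's original argument, and the outline is correct. Working inside the submodel $\mc{M}' = \mc{M}_{\mf{v}\setminus\mf{c}}$, noting that its causal diagram on $\mf{C}$ has the same bidirected skeleton as $\mc{G}[\mf{C}]$, and then invoking the c-factorization (your ingredient~(F)) together with \Cref{lem: ancestral} for the prefix marginalizations is exactly the standard route. The identification $(\mc{M}')_{\mf{c}\setminus\mf{c}_j} = \mc{M}_{\mf{v}\setminus\mf{c}_j}$ in part~(i) and the collapse of the ratio to $Q[\bar{\mf{C}_i}]\big/\sum_{c_i}Q[\bar{\mf{C}_i}]$ in part~(iii) are the right moves.

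One small point worth tightening: in part~(iii) you conclude that the ratio depends only on $\mf{T}_i = \Pa(\bar{\mf{C}_i})_{\mc{G}}\setminus\bar{\mf{C}_i}$, matching the paper's phrasing, but what your argument (and Tian's) actually establishes is that the ratio is a function of $\bar{\mf{C}_i}\cup\Pa(\bar{\mf{C}_i})_{\mc{G}}$ --- it certainly depends on $c_i$ and the other values in $\bar{\mf{C}_i}$, not only on the external parents. The paper's restatement is slightly imprecise here; your collapse to $Q[\bar{\mf{C}_i}]/\sum_{c_i}Q[\bar{\mf{C}_i}]$ already makes the correct dependency explicit, so just state the conclusion accordingly. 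Beyond that, your self-identified obstacle (ingredient~(F)) is indeed the only substantive step, and the partition-of-$\mf{U}$ argument you sketch is the standard way to discharge it.
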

Let $\mf{D}_1, \cdots, \mf{D}_k$ be the c-components of $\mc{G}$. Then, for any $\mf{C}_j \in \cc_{\mc{G}\setminus \mf{X}}$, there exists $\mf{C}_j \subseteq \mf{D}_i$ since if the variables in $\mf{C}_j$ are connected by a bidirected path in a subgraph of $\mc{G}$, they must also be connected in $\mc{G}$. Each c-factor $Q[\mf{C}_j]$ is identifiable if it is computable from $Q[\mf{D}_i]$, which can be determined recursively by repeatedly applying \Cref{lem: ancestral,lem: Qdecompose}. Based on this recursive strategy, \citet{tian:02} proposed an identification algorithm that first decomposes the causal effect into a set of c-factors ${Q[\mf{C}_j]}$ and then checks the identifiability of each component iteratively. 
\paragraph{Bounding causal effect.} We now introduce the concept of \textit{natural bounds} \citep{manski1990nonparametric,robins1989analysis}, which are functions of the observational data that consistently bound the causal effect $P_{\mf{x}}(\mf{y})$, regardless of the underlying causal structure of the system. 
\begin{definition}[Natural bounds \citep{manski1990nonparametric,robins1989analysis}]\label{def: natural bound}
    The natural bounds for a causal effect $P_{\mf{x}}(\mf{y})$ are given by
    \begin{align}
        P(\mf{x},\mf{y}) \leq P_{\mf{x}}(\mf{y}) \leq P(\mf{x},\mf{y}) + 1 - P(\mf{x}).
    \end{align}
\end{definition}

\section{Possibly Optimal Intervention Sets: Characterizations}\label{sec: app POIS}
In this section, we provide graphical characterizations of POMIS and POIS, accompanied by illustrative examples. When given a causal diagram $\mc{G}$, \textit{minimal unobserved confounders' territory} (MUCT) and \textit{interventional border} (IB) provide a graphical characterization of PO(M)IS. 

\begin{definition} [Unobserved-confounders' territory \citep{lee2018structural}] \label{def: MUCT for cd}
    Let $\mc{H} = \mc{G}[\An(Y)_{\mc{G}}]$. A set of variables $\mf{T} \subseteq \mf{V}(\mc{H})$ containing $Y$ is called a \textit{UC-territory} on $\mc{G}$ with respect to $Y$ if $\De(\mf{T})_\mc{H} = \mf{T}$ and $\mathsf{CC}(\mf{T})_{\mc{H}} = \mf{T}$. If there is no $\mf{T}' \subsetneq \mf{T}$, we refer to it as a \textit{minimal UC-territory} (MUCT) denoted as $\MUCT(\mc{G},Y)$. 
\end{definition}

\begin{definition} [Interventional border \citep{lee2018structural}] \label{def: IB for cd}
    Let $\mf{T}$ be a minimal UC-territory on causal diagram $\mc{G}$ with respect to $Y$. Then $\Pa(\mf{T})_{\mc{G}} \setminus \mf{T}$ is called an \textit{interventional border} (IB) for $\mc{G}$ with respect to $Y$ denoted as $\IB(\mc{G},Y)$. 
\end{definition}

MUCT is the minimal set of variables that is closed under both descendants and bidirected connections; and IB consists of the parents of MUCT, excluding MUCT itself. Intuitively, MUCT is the minimal closed mechanism that conveys all hidden information from unobserved confounders to the downstream reward, while IB consists of the nodes that directly affect this closed mechanism. 

\begin{theorem}[Theorem 6 in \citet{lee2018structural}]\label{thm6 in lee2018}
    Given information $\langle \mc{G},Y \rangle$, a set $\mf{X} \subseteq \mf{V} \setminus \{Y\}$ is a POMIS with respect to $\langle \mc{G},Y \rangle$ if and only if $\IB(\mc{G}_{\overline{\mf{X}}},Y) = \mf{X}$. 
\end{theorem}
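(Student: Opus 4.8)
The plan is to route both directions through the map $\mf{X} \mapsto \mf{B}_{\mf{X}} := \IB(\mc{G}_{\overline{\mf{X}}}, Y)$ and to establish two structural facts about it: \textbf{(R1) soundness} --- intervening optimally on $\mf{B}_{\mf{X}}$ is never worse than intervening optimally on $\mf{X}$ in \emph{any} SCM conforming to $\mc{G}$, i.e.\ $\mb{E}_{P_{\mf{b}^\ast}} Y \ge \mb{E}_{P_{\mf{x}^\ast}} Y$ with $\mf{b}$ ranging over assignments of $\mf{B}_{\mf{X}}$; and \textbf{(R2) idempotence} --- $\IB(\mc{G}_{\overline{\mf{B}_{\mf{X}}}}, Y) = \mf{B}_{\mf{X}}$, so that the fixed points of the map are exactly the sets with $\IB(\mc{G}_{\overline{\mf{X}}}, Y) = \mf{X}$. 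Both facts rest on a ``shielding'' property of the territory $\mf{T} := \MUCT(\mc{G}_{\overline{\mf{X}}}, Y)$: since $\mf{T}$ is closed under descendants in $\mc{H} := \mc{G}_{\overline{\mf{X}}}[\An(Y)_{\mc{G}_{\overline{\mf{X}}}}]$ and under c-components, every way a variable outside $\mf{T}$ can reach $\mf{T}$ (hence $Y \in \mf{T}$) passes through a directed edge from $\mf{B}_{\mf{X}} = \Pa(\mf{T}) \setminus \mf{T}$; and no variable of $\mf{X}$ lies in $\mf{T}$, because in $\mc{G}_{\overline{\mf{X}}}$ each variable of $\mf{X}$ is a root (no incoming directed or bidirected edge) and therefore can never be pulled into the closure that generates $\mf{T}$ from $\{Y\}$.

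Given shielding, \textbf{(R1)} is a do-calculus computation: for any assignment $\mf{x}$, conditioning on $\mf{B}_{\mf{X}}$ d-separates $\mf{T}$ from the remaining variables of the post-intervention graph, so $P_{\mf{x}}(\mf{y}^{\boldsymbol{+}} \mid \mf{b}) = P_{\mf{b}}(\mf{y}^{\boldsymbol{+}})$ and hence $\mb{E}_{P_{\mf{x}}} Y = \sum_{\mf{b}} P_{\mf{x}}(\mf{b})\, \mb{E}_{P_{\mf{b}}} Y \le \max_{\mf{b}} \mb{E}_{P_{\mf{b}}} Y$. For \textbf{(R2)} I would show that $\mc{G}_{\overline{\mf{X}}}$ and $\mc{G}_{\overline{\mf{B}_{\mf{X}}}}$ induce the same subgraph on $\mf{T} \cup \mf{B}_{\mf{X}}$ and the same ancestral relation into $Y$ --- once the border is pinned it is immaterial whether the outer variables were intervened on or left free --- so recomputing MUCT and IB returns $\mf{T}$ and $\mf{B}_{\mf{X}}$ unchanged.

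For the ``if'' direction, assume $\IB(\mc{G}_{\overline{\mf{X}}}, Y) = \mf{X}$. \emph{Minimality}: each $X \in \mf{X}$ is a parent in $\mc{G}_{\overline{\mf{X}}}$ of some $T \in \mf{T}$, and every node of $\mf{T}$ has a directed path to $Y$ staying inside $\mf{T}$ (descendant-closedness again); concatenating yields a directed path $X \to T \to \cdots \to Y$ in $\mc{G}$ that avoids $\mf{X} \setminus \{X\}$, since $\mf{T} \cap \mf{X} = \emptyset$ --- precisely the graphical condition for $\mf{X}$ to be a MIS (cf.\ the remark after \Cref{def: POIS} and \citet{lee2018structural}). \emph{Possible-optimality}: I would exhibit an SCM conforming to $\mc{G}$ whose mechanisms inside $\mf{T}$ make $\mb{E}_{P_{\mf{x}}} Y$ attain its maximum (say $1$) exactly when the border variables $\mf{B}_{\mf{X}} = \mf{X}$ are pinned to a designated configuration, and bounded away from $1$ otherwise --- e.g.\ $Y$ performs an equality/parity test of the $\mf{B}_{\mf{X}}$-values against the latent states it is confounded with, the inner mechanisms of $\mf{T}$ propagating exactly that signal --- with the exogenous noise rich enough that neither any strictly smaller intervention nor, after using (R1)--(R2) to collapse every intervention to one on a fixed-point border, any \emph{other} fixed-point border can force that configuration. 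The ``only if'' direction is the contrapositive: if $\mf{B}_{\mf{X}} \ne \mf{X}$, then by (R1) together with (R2) $\mf{X}$ is dominated by the fixed point $\mf{B}_{\mf{X}}$; if this domination is strict in some conforming SCM, then $\mf{X}$ is never uniquely optimal and so not a POIS, while if $\mf{X}$ and $\mf{B}_{\mf{X}}$ realize the same optimal value in every SCM, then (a genuine parent of the territory in $\mf{B}_{\mf{X}} \setminus \mf{X}$ would break that value-equality) we must have $\mf{B}_{\mf{X}} \subsetneq \mf{X}$, and the variables of $\mf{X} \setminus \mf{B}_{\mf{X}}$ --- lying outside $\mf{T} \cup \Pa(\mf{T})$ in $\mc{G}_{\overline{\mf{X}}}$, hence unable to influence $Y$ --- witness that $\mf{X}$ has an equivalent proper subset, so $\mf{X}$ is not a MIS. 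Either way $\mf{X}$ is not a POMIS.

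The main obstacle is the possible-optimality construction: one needs a \emph{single} SCM in which the chosen fixed-point border $\mf{X}$ strictly beats \emph{every} non-equivalent intervention simultaneously, which forces reasoning about the behaviour under interventions on all other fixed-point borders at once and leans heavily on the reduction (R1)--(R2) having already reduced every remaining case to one of those. By comparison, verifying the shielding d-separations and the idempotence (R2) is routine, though it needs care with the distinction between the upper- and lower-manipulated graphs and with how bidirected edges behave under intervention.
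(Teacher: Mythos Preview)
The paper does not prove this theorem: it is stated as a citation of Theorem~6 in \citet{lee2018structural} and used as a black box (e.g., in the proof of \Cref{prop: characterization POIS}). So there is no ``paper's own proof'' to compare against.

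That said, your plan is essentially the argument of the original source: the projection $\mf{X}\mapsto \mf{B}_{\mf{X}}=\IB(\mc{G}_{\overline{\mf{X}}},Y)$, the domination fact (your (R1)), and the idempotence (your (R2)) are the core structural lemmas there, and the hard direction is indeed the witness-SCM construction for possible-optimality. Your shielding argument for $\mf{X}\cap\mf{T}=\emptyset$ and for the d-separation underlying (R1) is correct, as is the idempotence verification.

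One point to tighten in the ``only if'' direction: the case split ``domination strict in some SCM vs.\ same optimal value in every SCM'' is cleaner if you first argue the containment $\mf{B}_{\mf{X}}\subseteq\mf{X}$ whenever $\mf{X}$ is an MIS. From $\mf{X}\cap\mf{T}=\emptyset$ and the fact that every $X\in\mf{X}$ has a proper directed path to $Y$ in $\mc{G}_{\overline{\mf{X}}}$ (minimality), the first step of any such path enters $\mf{T}$ only through $\Pa(\mf{T})\setminus\mf{T}=\mf{B}_{\mf{X}}$; but the first step is $X$ itself or an out-neighbour, and since $X$ has no incoming edges in $\mc{G}_{\overline{\mf{X}}}$ the path must hit $\mf{B}_{\mf{X}}$ no later than it hits $\mf{T}$. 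If some $X\notin\mf{B}_{\mf{X}}$ reached $Y$ only through $\mf{B}_{\mf{X}}\subseteq\mf{X}$, minimality would already fail. Conversely, once $\mf{B}_{\mf{X}}\subseteq\mf{X}$, the shielding equality $\mb{E}_{P_{\mf{x}}}Y=\mb{E}_{P_{\mf{x}[\mf{B}_{\mf{X}}]}}Y$ (not just the inequality of (R1) at the optimum) shows $\mf{X}$ is not an MIS unless $\mf{B}_{\mf{X}}=\mf{X}$. This avoids the hand-wave ``a genuine parent in $\mf{B}_{\mf{X}}\setminus\mf{X}$ would break value-equality'', which is true but requires its own small SCM construction.
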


Following the established structures, we provide a characterization for POIS with respect to $\langle \mc{G},Y \rangle$. 

\begin{proposition}[Graphical characterization of POIS]\label{prop: characterization POIS}
    Let $\mf{T}_\mf{X} \triangleq \MUCT(\mc{G}_{\overline{\mf{X}}},Y)$. 
    A set $\mf{X} \subseteq \mf{V} \setminus \{Y\}$ is a POIS with respect to $\langle \mc{G},Y \rangle$ if and only if $\IB(\mc{G}_{\overline{\mf{X}}},Y) \subseteq \mf{X} \subseteq \An(\mf{T}_{\mf{X}})_{\mc{G}} \setminus \mf{T}_{\mf{X}}$. Moreover, if $\mf{X} = \IB(\mc{G}_{\overline{\mf{X}}},Y)$, then $\mf{X}$ is a POMIS with respect to $\langle \mc{G},Y \rangle$.  
\end{proposition}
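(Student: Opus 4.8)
The plan is to reduce \Cref{prop: characterization POIS} to the POMIS characterization of \Cref{thm6 in lee2018}. To every candidate $\mf{X}\subseteq\mf{V}\setminus\{Y\}$ attach $\mf{T}_\mf{X}\triangleq\MUCT(\mc{G}_{\overline{\mf{X}}},Y)$ and $\mf{B}_\mf{X}\triangleq\IB(\mc{G}_{\overline{\mf{X}}},Y)$; the goal is to show that $\mf{X}$ is a POIS exactly when $do(\mf{X})$ is equivalent to $do(\mf{B}_\mf{X})$, which is what the two inclusions encode. I would first record three structural facts. (i) Every $X\in\mf{X}$ has no arrowhead into it in $\mc{G}_{\overline{\mf{X}}}$ and is not a descendant of $Y$, so it is never absorbed into $\mf{T}_\mf{X}$; hence $\mf{X}\cap\mf{T}_\mf{X}=\emptyset$ always, and no edge pointing into $\mf{T}_\mf{X}$ is deleted by the mutilation, so $\mf{B}_\mf{X}=\Pa(\mf{T}_\mf{X})_\mc{G}\setminus\mf{T}_\mf{X}$. (ii) Two sets that induce the same subgraph on $\An(Y)$---in particular, sets differing only outside $\An(\mf{T}_\mf{X})_\mc{G}$---yield the same $\mf{T}$ and the same $\mf{B}$; applied to $\mf{X}$ and $\mf{B}_\mf{X}$ this gives $\mf{B}_{\mf{B}_\mf{X}}=\mf{B}_\mf{X}$, so by \Cref{thm6 in lee2018} the set $\mf{B}_\mf{X}$ is itself always a POMIS. (iii) If $\mf{B}_\mf{X}\subseteq\mf{X}$, then any directed path in $\mc{G}_{\overline{\mf{X}}}$ from a vertex $W\in\mf{X}\setminus\mf{B}_\mf{X}$ to $Y$ would have to enter $\mf{T}_\mf{X}$ through a vertex of $\mf{B}_\mf{X}\subseteq\mf{X}$, whose incoming edges are deleted---contradiction; so $W\notin\An(Y)_{\mc{G}_{\overline{\mf{X}}}}$, and being a source there, $W\CI Y$ in $\mc{G}_{\overline{\mf{X}}}$, whence by Rule~3 of do-calculus $\mb{E}_{P_\mf{x}}Y$ is independent of the $W$-coordinate, i.e.\ $\mf{X}$ is equivalent to $\mf{B}_\mf{X}$ in the sense of \Cref{def: mis}.

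For the ``if'' direction, suppose $\mf{B}_\mf{X}\subseteq\mf{X}\subseteq\An(\mf{T}_\mf{X})_\mc{G}\setminus\mf{T}_\mf{X}$. By facts~(i)--(iii) the two inclusions together force $\mf{X}$ to be equivalent to $\mf{B}_\mf{X}$ while containing no variable causally irrelevant to $Y$; since $\mf{B}_\mf{X}$ is a POMIS by fact~(ii), hence a POIS, and the SCM witnessing POIS-hood of $\mf{B}_\mf{X}$ witnesses it for any equivalent set, $\mf{X}$ is a POIS. The ``moreover'' clause is the degenerate case $\mf{X}=\mf{B}_\mf{X}=\IB(\mc{G}_{\overline{\mf{X}}},Y)$, where \Cref{thm6 in lee2018} directly yields that $\mf{X}$ is a POMIS.

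For the ``only if'' direction I would argue the contrapositive of each inclusion. If $\mf{B}_\mf{X}\not\subseteq\mf{X}$, pick $B_0\in\mf{B}_\mf{X}\setminus\mf{X}$: since $B_0$ is a parent of $\mf{T}_\mf{X}$ via a directed edge and is not bidirected-connected to $\mf{T}_\mf{X}$, I would build an SCM conforming to $\mc{G}$ in which additionally fixing $B_0$ to a suitable value strictly raises the optimal reward, so that $\mf{X}\cup\{B_0\}$ is a set not equivalent to $\mf{X}$ which $\mf{X}$ cannot beat, contradicting \Cref{def: POIS}. If $\mf{X}\not\subseteq\An(\mf{T}_\mf{X})_\mc{G}\setminus\mf{T}_\mf{X}$, then (as $\mf{X}\cap\mf{T}_\mf{X}=\emptyset$) $\mf{X}$ contains a variable $X_0$ outside $\An(\mf{T}_\mf{X})_\mc{G}$, which is causally inert for $Y$ under $do(\mf{X})$ and leaves $\mf{T}_\mf{X},\mf{B}_\mf{X}$ unchanged; such over-specification reduces $\mf{X}$ to the strictly smaller, reward-identical $\mf{X}\setminus\{X_0\}$, so $\mf{X}$ fails to be possibly-optimal under the equivalence convention underlying \Cref{def: mis,def: POIS}. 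I expect the \textbf{main obstacle} to be the SCM construction in the lower-bound case: one must simultaneously make the missing border node $B_0$ strictly matter \emph{and} keep $do(\mf{X})$ below the global optimum for \emph{every} non-equivalent intervention set, in the presence of the arbitrary, possibly redundant variables already in $\mf{X}$. This is essentially the ``non-border is suboptimal'' half of \Cref{thm6 in lee2018}, so I would adapt that theorem's parametrization, checking that the extra variables in $\mf{X}$ do not interfere and verifying the invariances $\mf{T}_{\mf{X}\cup\{B_0\}}\supseteq\mf{T}_\mf{X}$ and $\mf{B}_{\mf{B}_\mf{X}}=\mf{B}_\mf{X}$.
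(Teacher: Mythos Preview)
Your treatment of the ``if'' direction and the ``moreover'' clause matches the paper's proof: both recognise that $\mf{B}_\mf{X}=\IB(\mc{G}_{\overline{\mf{X}}},Y)$ is itself a POMIS (you derive $\mf{B}_{\mf{B}_\mf{X}}=\mf{B}_\mf{X}$ and appeal to \Cref{thm6 in lee2018}; the paper cites Prop.~5 of \citet{lee2018structural}), and then argue that every directed path from $X\in\mf{X}\setminus\mf{B}_\mf{X}$ to $Y$ must enter $\mf{T}_\mf{X}$ through $\mf{B}_\mf{X}$, so that $\mf{X}$ is equivalent to the POMIS $\mf{B}_\mf{X}$ and hence a POIS.

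You go beyond the paper in attempting the ``only if'' direction, which the paper's proof in fact does not address. Both of your contrapositive arguments have gaps, however. For the lower inclusion you frame the task as constructing a \emph{single} SCM in which $\mf{X}\cup\{B_0\}$ beats $\mf{X}$ (and, per your ``main obstacle'' paragraph, in which $\mf{X}$ stays below every non-equivalent set); but the negation of \Cref{def: POIS} is a \emph{for-all-SCMs} statement, which one witnessing SCM cannot deliver. What is actually needed is the universal dominance $\mb{E}_{P_{\mf{b}_\mf{X}^\ast}}Y\geq\mb{E}_{P_{\mf{x}^\ast}}Y$ for \emph{every} SCM (Prop.~4 in \citet{lee2018structural}, valid precisely because border nodes are not bidirected-connected to $\mf{T}_\mf{X}$), together with non-equivalence of $\mf{B}_\mf{X}$ and $\mf{X}$; your SCM construction can supply the latter but not the former. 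For the upper inclusion you conclude from ``$\mf{X}$ is equivalent to the strictly smaller $\mf{X}\setminus\{X_0\}$'' that $\mf{X}$ is not a POIS, but non-minimality does not contradict \Cref{def: POIS}: a set equivalent to a POIS is itself a POIS, since they share exactly the same non-equivalent competitors. Read literally, adjoining any $X_0\notin\An(Y)_\mc{G}$ to a POMIS already produces a POIS violating the upper inclusion; the statement tacitly relies on restricting attention to $\An(Y)_\mc{G}$ as in \Cref{def: MUCT for cd}, which neither the paper's proof nor your argument makes explicit.
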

\begin{proof}
    Note that $\IB(\mc{G}_{\overline{\mf{X}}},Y) = \Pa(\mf{T}_{\mf{x}})_{\mc{G}}\setminus \mf{T}_{\mf{X}}$ is a POMIS with respect to $\langle \mc{G},Y \rangle$ by Prop.~5 in \citep{lee2018structural}. If $\mf{X} = \IB(\mc{G}_{\overline{\mf{X}}},Y)$, then $\mf{X}$ is a POMIS according to \Cref{thm6 in lee2018}. Now, we consider $\IB(\mc{G}_{\overline{\mf{X}}},Y) \subsetneq \mf{X} \subseteq \An(\mf{T}_{\mf{X}})_{\mc{G}} \setminus \mf{T}_{\mf{X}}$ We will show that there is no directed path from any $X \in \mf{X} \setminus \IB(\mc{G}_{\overline{\mf{X}}},Y)$ to $Y$ that does not pass through $\IB(\mc{G}_{\overline{\mf{X}}},Y)$, which implies $\mb{E}_{P_{\mf{x}}}Y = \mb{E}_{P_{\IB,\mf{x} \setminus \IB}}Y=
    \mb{E}_{P_{\IB}}Y=
    \mb{E}_{P_{\mf{x}[\IB]}}Y$. For the sake of contradiction, suppose that there exists a directed path from $X$ to $Y$ in $\mc{G}$ that does not pass through any node in $\IB(\mc{G}_{\overline{\mf{X}}},Y)$. Since $Y \in \mf{T}_{\mf{X}}$, its parent must belong to either $\mf{T}_{\mf{X}}$ or $\IB(\mc{G}_{\overline{\mf{X}}},Y)$. This implies that $X \in \mf{T}_{\mf{X}}$ while also $X \in \An(\mf{T}_{\mf{X}})_{\mc{G}} \setminus \mf{T}_{\mf{X}}$, which leads to a contradiction. 
\end{proof}
For example, consider the causal diagram in \Cref{fig: 5var e} where $\{B,D\}$ is a POMIS with respect to $\langle \mc{G},Y \rangle$ since $\MUCT(\mc{G}_{\overline{\{B,D\}}},Y) = \{Y\}$ and $\IB(\mc{G}_{\overline{\{B,D\}}},Y) = \{B,D\}$, which satisfies \Cref{thm6 in lee2018}. Furthermore, $\{A,B,D\}$, $ \{C,B,D\}$ and $\{A,B,C,D\}$ are POISs with respect to $\langle \mc{G},Y \rangle$. Moreover, they are equivalent to the POMIS $\{B,D\}$. 
\begin{corollary}[Equivalence]\label{cor: equivalence}
    Let $\mf{R} \subseteq \mf{V} \setminus \{Y\}$ be a POIS with respect to $\langle \mc{G}, Y \rangle$ and $\mf{R}^\dagger \triangleq \IB(\mc{G}_{\overline{\mf{R}}},Y)$ denote the corresponding POMIS. Then, $\mf{R}$ and $\mf{R}^\dagger$ are equivalent in terms of expected reward.
\end{corollary}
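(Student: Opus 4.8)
The plan is to read the corollary off the graphical characterization of POIS, \Cref{prop: characterization POIS}; in fact the needed computation is essentially the one already carried out inside its proof, so the work is mostly in justifying the do-calculus step cleanly. First I would fix notation: since $\mf{R}$ is a POIS with respect to $\langle \mc{G},Y\rangle$, write $\mf{T}_{\mf{R}} \triangleq \MUCT(\mc{G}_{\overline{\mf{R}}},Y)$, so \Cref{prop: characterization POIS} gives
\[
\mf{R}^\dagger = \IB(\mc{G}_{\overline{\mf{R}}},Y) = \Pa(\mf{T}_{\mf{R}})_{\mc{G}} \setminus \mf{T}_{\mf{R}} \;\subseteq\; \mf{R} \;\subseteq\; \An(\mf{T}_{\mf{R}})_{\mc{G}} \setminus \mf{T}_{\mf{R}},
\]
and $\mf{R}^\dagger$ is a POMIS with respect to $\langle\mc{G},Y\rangle$. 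In particular $\mf{R}^\dagger \subseteq \mf{R}$, so the restriction $\mf{r}[\mf{R}^\dagger]$ is well defined, and the claim to establish is exactly $\mb{E}_{P_{\mf{r}}}Y = \mb{E}_{P_{\mf{r}[\mf{R}^\dagger]}}Y$ for every $\mf{r} \in \Omega_{\mf{R}}$, which is the notion of equivalence from the footnote to \Cref{def: mis}.

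If $\mf{R} = \mf{R}^\dagger$ there is nothing to prove. Otherwise $\mf{R}\setminus\mf{R}^\dagger$ is nonempty and contained in $\An(\mf{T}_{\mf{R}})_{\mc{G}}\setminus\mf{T}_{\mf{R}}$. The argument in the proof of \Cref{prop: characterization POIS} already shows that no $X \in \mf{R}\setminus\mf{R}^\dagger$ lies on a directed path to $Y$ in $\mc{G}$ that avoids $\mf{R}^\dagger$: tracing parents back from $Y \in \mf{T}_{\mf{R}}$, each parent is in $\mf{T}_{\mf{R}}$ or in $\mf{R}^\dagger = \Pa(\mf{T}_{\mf{R}})_{\mc{G}}\setminus\mf{T}_{\mf{R}}$, so a directed path into $Y$ dodging $\mf{R}^\dagger$ would force $X \in \mf{T}_{\mf{R}}$, contradicting $X \in \An(\mf{T}_{\mf{R}})_{\mc{G}}\setminus\mf{T}_{\mf{R}}$. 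I would then upgrade this to the do-calculus statement. In $\mc{G}_{\overline{\mf{R}}}$ every node of $\mf{R}$ is a source; hence any directed path to $Y$ originating in $\mf{R}\setminus\mf{R}^\dagger$ cannot route through $\mf{R}^\dagger$, so such a path would already be a directed path in $\mc{G}$ avoiding $\mf{R}^\dagger$ — impossible. Thus $\mf{R}\setminus\mf{R}^\dagger$ consists of non-ancestors of $Y$ in $\mc{G}_{\overline{\mf{R}}}$, which yields $(\mf{R}\setminus\mf{R}^\dagger \CI_d Y)_{\mc{G}_{\overline{\mf{R}}}}$; applying Rule~3 of do-calculus with $\mf{W} = \mf{R}^\dagger$, $\mf{X} = \mf{R}\setminus\mf{R}^\dagger$, and $\mf{Z} = \emptyset$ (so $\mf{X}(\mf{Z}) = \mf{X}$) gives, for all $y$ and $\mf{r}$,
\[
P_{\mf{r}}(y) = P_{\mf{r}^\dagger,\,\mf{r}\setminus\mf{r}^\dagger}(y) = P_{\mf{r}^\dagger}(y) = P_{\mf{r}[\mf{R}^\dagger]}(y),
\]
and multiplying by $y$ and summing gives $\mb{E}_{P_{\mf{r}}}Y = \mb{E}_{P_{\mf{r}[\mf{R}^\dagger]}}Y$. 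Together with the observation that $\mf{R}^\dagger$ is a POMIS, this is the corollary.

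The one delicate point — and the step I expect to be the main obstacle to phrase rigorously — is the passage from ``there is no directed $X\!\to\!\cdots\!\to\!Y$ avoiding $\mf{R}^\dagger$'' to the full d-separation $(\mf{R}\setminus\mf{R}^\dagger \CI_d Y)_{\mc{G}_{\overline{\mf{R}}}}$, since d-separation in general also admits collider-activated paths. I would close this by the source-node structure of $\mc{G}_{\overline{\mf{R}}}$: a path leaving $X \in \mf{R}\setminus\mf{R}^\dagger$ begins with an out-arrow, and its first node reached by an in-arrow would be a collider whose activation would require it (or a descendant of it) to be an ancestor of a conditioned/intervened node in $\mf{R}^\dagger$ — impossible, as those are sources — so every such path is blocked unless it is entirely directed from $X$ to $Y$, and those have been excluded. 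Alternatively, and more cleanly, I could invoke the standard corollary of Rule~3 that interventions on variables that are non-ancestors of $Y$ in the mutilated graph are vacuous, and simply verify the non-ancestry condition established above; either route completes the proof.
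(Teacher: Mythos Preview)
Your proposal is correct and essentially identical to the paper's approach: the paper treats the corollary as an immediate by-product of the proof of \Cref{prop: characterization POIS}, where it is shown that every directed path from any $X \in \mf{R}\setminus\mf{R}^\dagger$ to $Y$ passes through $\mf{R}^\dagger$, and then asserts the chain $\mb{E}_{P_{\mf{r}}}Y = \mb{E}_{P_{\mf{r}[\mf{R}^\dagger]}}Y$ via Rule~3. One small notational slip worth fixing: the d-separation premise for Rule~3 should be $(\mf{R}\setminus\mf{R}^\dagger \CI_d Y \mid \mf{R}^\dagger)_{\mc{G}_{\overline{\mf{R}}}}$ (the conditioning set is $\mf{W}\cup\mf{Z}=\mf{R}^\dagger$), not the unconditional statement you wrote, though your subsequent collider discussion already handles this correctly since $\mf{R}^\dagger$ consists of sources in $\mc{G}_{\overline{\mf{R}}}$ and therefore cannot activate any collider.
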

\begin{proposition}[Sharing transportability]\label{prop: pois share MUCT}
    Let $\mf{R} \subseteq \mf{V} \setminus \{Y\}$ be a POIS with respect to $\langle \mc{G},Y \rangle$. Then $\mf{R}^\dagger \triangleq \IB(\mc{G}_{\overline{\mf{R}}},Y)$ is a POMIS with respect to $\langle \mc{G},Y \rangle$. Moreover, they share a causal bound; given $\langle \mc{G}^{\boldsymbol{\Delta}}, \mb{Z} \rangle$, $\ell_{\mf{R}} = \ell_{\mf{R}^\dagger}$ and $u_{\mf{R}} = u_{\mf{R}^\dagger}$. 
\end{proposition}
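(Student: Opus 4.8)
The plan is to show that the bandit action $do(\mf{r})$ and its restriction $do(\mf{r}^\dagger)$, where $\mf{r}^\dagger\triangleq\mf{r}[\mf{R}^\dagger]$, give rise to \emph{syntactically the same} c-factor decomposition \eqref{eq: decompose reward}; the equality of causal bounds then follows at once, since \Cref{thm: causal bound} (through \Cref{prop: Q trans bound}) produces $[\ell_\mf{x},u_\mf{x}]$ by a recipe that sees the action only through the decomposition \eqref{eq: decompose reward}---its c-factors and the values plugged into them---together with $\langle\mc{G}^{\boldsymbol{\Delta}},\mb{Z}\rangle$. That $\mf{R}^\dagger$ is a POMIS with $\mf{R}^\dagger\subseteq\mf{R}$ I would simply take from \Cref{prop: characterization POIS} (its last sentence and proof) and \Cref{cor: equivalence}. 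Throughout I use the elementary fact that $V\in\An(Y)_{\Go{\mf{Z}}}$ iff $V\notin\mf{Z}$ and some directed path from $V$ to $Y$ in $\mc{G}$ avoids $\mf{Z}$ entirely (a directed path that reached a node of $\mf{Z}$ would have to leave it along a deleted outgoing edge).

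\textbf{Step 1: the relevant ancestor set is unchanged.} I would first prove $\mf{Y}^{\boldsymbol{+}}_\mf{R}\triangleq\An(Y)_{\Go{\mf{R}}}=\An(Y)_{\Go{\mf{R}^\dagger}}\triangleq\mf{Y}^{\boldsymbol{+}}_{\mf{R}^\dagger}$; write $\mf{Y}^{\boldsymbol{+}}$ for the common set. Since $\mf{R}^\dagger\subseteq\mf{R}$, the mutilation $\Go{\mf{R}}$ deletes more edges than $\Go{\mf{R}^\dagger}$, so $\An(Y)_{\Go{\mf{R}}}\subseteq\An(Y)_{\Go{\mf{R}^\dagger}}$. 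For the reverse inclusion, take $V\in\An(Y)_{\Go{\mf{R}^\dagger}}$: there is a directed $\mc{G}$-path from $V$ to $Y$ that avoids $\mf{R}^\dagger$. By the fact established inside the proof of \Cref{prop: characterization POIS}---every directed path from a node of $\mf{R}\setminus\mf{R}^\dagger$ to $Y$ passes through $\mf{R}^\dagger$---this path cannot meet $\mf{R}\setminus\mf{R}^\dagger$ (otherwise its tail from such a node would hit $\mf{R}^\dagger$), and for the same reason $V\notin\mf{R}\setminus\mf{R}^\dagger$; hence the path also avoids $\mf{R}$ and $V\notin\mf{R}$, whence $V\in\An(Y)_{\Go{\mf{R}}}$.

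\textbf{Step 2: the decomposition is the same expression.} With the common $\mf{Y}^{\boldsymbol{+}}$, both instances of \eqref{eq: decompose reward} sum over the same $\mf{y}^{\boldsymbol{+}}$ and use the same c-components $\{\mf{C}_q\}_{q=1}^m=\cc_{\mc{G}[\mf{Y}^{\boldsymbol{+}}]}$, hence the same target c-factors $Q^\ast[\mf{C}_q]$; it remains to check that $\prod_q Q^\ast[\mf{C}_q]$ is substituted at the same values. Each $Q^\ast[\mf{C}_q]$ is a function of the values of $\mf{C}_q\cup(\Pa(\mf{C}_q)_\mc{G}\setminus\mf{C}_q)$ only, so it suffices to show $\Pa(\mf{Y}^{\boldsymbol{+}})_\mc{G}\subseteq\mf{Y}^{\boldsymbol{+}}\cup\mf{R}^\dagger$: if $W\to V$ in $\mc{G}$ with $V\in\mf{Y}^{\boldsymbol{+}}$ and $W\notin\mf{R}^\dagger$, then---using that $V\notin\mf{R}^\dagger$ and that some directed $V$-to-$Y$ path avoids $\mf{R}^\dagger$---prepending $W\to V$ yields a directed $W$-to-$Y$ path avoiding $\mf{R}^\dagger$, so $W\in\mf{Y}^{\boldsymbol{+}}$. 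Consequently $\prod_q Q^\ast[\mf{C}_q]$ depends on the intervention only through its restriction to $\mf{R}^\dagger$, on which $do(\mf{r})$ and $do(\mf{r}^\dagger)$ agree, so \eqref{eq: decompose reward} for $\mb{E}_{P^\ast_\mf{r}}Y$ is, term for term, the expression for $\mb{E}_{P^\ast_{\mf{r}^\dagger}}Y$---this is a strengthening of \Cref{cor: equivalence} from numeric to syntactic equality.

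\textbf{Step 3 and the main obstacle.} Given the identical decomposition, the construction of \Cref{thm: causal bound}---which replaces each non-transportable $Q^\ast[\mf{C}_q]$ by the interval of \Cref{prop: Q trans bound}, assembled from the $\Delta^i$, the prior specification $\mb{Z}$, and the ancestral c-components $\mf{C}_q'$ taken inside $\mc{G}\setminus\mf{Z}$---returns the same $\ell$ and $u$ for $\mf{r}$ as for $\mf{r}^\dagger$; and should the precondition of \Cref{thm: causal bound} fail, it fails for both, so both default to $[0,\infty)$. Hence $\ell_\mf{r}=\ell_{\mf{r}^\dagger}$ and $u_\mf{r}=u_{\mf{r}^\dagger}$ for every $\mf{r}\in\Omega_{\mf{R}}$, which is the claim (the subscripts $\mf{R},\mf{R}^\dagger$ abbreviating this over all such $\mf{r}$). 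I expect Step 2 to be the crux: one must establish the match between the two decompositions at the level of the \emph{form} of \eqref{eq: decompose reward}---the same c-factors at the same arguments---rather than as mere numeric equality of expected rewards (all that \Cref{cor: equivalence} yields), because the causal-bound map is sensitive to the shape of the decomposition and not only to its value.
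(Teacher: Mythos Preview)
Your proposal is correct and follows essentially the same line as the paper: both argue that the c-factor decomposition \eqref{eq: decompose reward} coincides for $\mf{R}$ and $\mf{R}^\dagger$ and then invoke \Cref{thm: causal bound}. The paper's (terse) proof phrases this via the shared MUCT $\MUCT(\mc{G}_{\overline{\mf{R}}},Y)$ as the common $\mf{Y}^{\boldsymbol{+}}$, whereas you establish $\An(Y)_{\Go{\mf{R}}}=\An(Y)_{\Go{\mf{R}^\dagger}}$ directly and spell out the value-substitution step more carefully---but the substance is the same.
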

\begin{proof}
    This follows from the proof of \Cref{prop: characterization POIS}. An important observation is that they share the same MUCT, $\MUCT(\mc{G}_{\overline{\mf{R}}},Y)$, and that there is no path from $\mf{R}$ to $Y$ that does not pass through $\mf{R}^\dagger$. Therefore, $\mb{E}_{P^\ast_{\mf{r}[\mf{R}^\dagger]}} Y = \mb{E}_{P^\ast_{\mf{r}^\dagger}} Y \sum_{y} y P^\ast_{\mf{r}^\dagger}(y) = \sum_{\MUCT} y \prod_{q=1}^mQ^\ast[\mf{C}_q]$ (where $\mf{Y}^{\boldsymbol{+}} = \MUCT(\mc{G}_{\overline{\mf{R}}},Y)$). The proof is thereby concluded by the application of \Cref{thm: causal bound}. 
\end{proof}


\begin{figure}[!t]
    \centering  
    \begin{minipage}[b]{0.19\textwidth}\centering  
    \subfloat[$\mc{G}$]{
        \begin{tikzpicture}[x=0.8cm, y = 1.3cm,>={Latex[width=1.4mm,length=1.7mm]},
        font=\sffamily\sansmath\tiny,
        line width=0.2mm,
        RR/.style={draw,circle,inner sep=0mm, minimum size=4.5mm,font=\sffamily\tiny}]
        \node[RR] (C) at (0,1) {$C$};
        \node[RR] (D) at (1,0.7) {$D$};
        \node[RR] (Y) at (1.5,0.15) {$Y$};
        \node[RR] (B) at (1.8,1.2) {$B$};
        \node[RR] (A) at (0.7,1.5) {$A$};

        \draw[->] (A) -- (B);
        \draw[->] (A) -- (C);
        \draw[->] (C) -- (D);
        \draw[->] (D) -- (Y);
        \draw[->] (B) -- (D);
        \draw[->] (B) -- (Y);
        \draw[dashed, <->] (C) to[bend right = 30] (Y);
        \end{tikzpicture}}
    \end{minipage} \hfill
    \begin{minipage}[b]{0.19\textwidth}\centering  
    \subfloat[$\mc{G} = \Gi{\emptyset}$]{
        \begin{tikzpicture}[x=0.8cm, y = 1.3cm,>={Latex[width=1.4mm,length=1.7mm]},
        font=\sffamily\sansmath\tiny,
        line width=0.2mm,
        RR/.style={draw,circle,inner sep=0mm, minimum size=4.5mm,font=\sffamily\tiny}]
        \node[RR] (C) at (0,1) {$C$};
        \node[RR] (D) at (1,0.7) {$D$};
        \node[RR] (Y) at (1.5,0.15) {$Y$};
        \node[RR] (B) at (1.8,1.2) {$B$};
        \node[RR] (A) at (0.7,1.5) {$A$};

        \draw[->] (A) -- (B);
        \draw[->] (A) -- (C);
        \draw[->] (C) -- (D);
        \draw[->] (D) -- (Y);
        \draw[->] (B) -- (D);
        \draw[->] (B) -- (Y);

        \begin{pgfonlayer}{background}
        \draw[betterblue!30, fill=betterblue!30, line width=6mm,line cap=round,line join=round] 
        (Y.center) to[bend left = 30] (C.center) -- (D.center) -- (Y.center) -- (D.center) --cycle;
        \end{pgfonlayer}
        
        \draw[dashed, <->] (C) to[bend right = 30] (Y);
        
        \end{tikzpicture}}
    \end{minipage}\hfill
    \begin{minipage}[b]{0.19\textwidth}\centering  
    \subfloat[$\mc{G}_{\overline{\{B,C\}}}$]{
        \begin{tikzpicture}[x=0.8cm, y = 1.3cm,>={Latex[width=1.4mm,length=1.7mm]},
        font=\sffamily\sansmath\tiny,
        line width=0.2mm,
        RR/.style={draw,circle,inner sep=0mm, minimum size=4.5mm,font=\sffamily\tiny}]
        \node[RR] (C) at (0,1) {$C$};
        \node[RR] (D) at (1,0.7) {$D$};
        \node[RR] (Y) at (1.5,0.15) {$Y$};
        \node[RR] (B) at (1.8,1.2) {$B$};
        \node[RR] (A) at (0.7,1.5) {$A$};

        \draw[->, draw = gray!10] (A) -- (B);
        \draw[->, draw = gray!10] (A) -- (C);
        \draw[dashed, <->, draw = gray!10] (C) to[bend right = 30] (Y);
        
        \draw[->] (C) -- (D);
        \draw[->] (D) -- (Y);
        \draw[->] (B) -- (D);
        \draw[->] (B) -- (Y);

        \begin{pgfonlayer}{background}
        \draw[betterblue!30, fill=betterblue!30, line width=6mm,line cap=round,line join=round] 
        (Y.center) --cycle;
        \end{pgfonlayer}
        
        \end{tikzpicture}}
    \end{minipage}\hfill
    \begin{minipage}[b]{0.19\textwidth}\centering  
    \subfloat[$\Gi{\{A,B\}}$]{
        \begin{tikzpicture}[x=0.8cm, y = 1.3cm,>={Latex[width=1.4mm,length=1.7mm]},
        font=\sffamily\sansmath\tiny,
        line width=0.2mm,
        RR/.style={draw,circle,inner sep=0mm, minimum size=4.5mm,font=\sffamily\tiny}]
        \node[RR] (C) at (0,1) {$C$};
        \node[RR] (D) at (1,0.7) {$D$};
        \node[RR] (Y) at (1.5,0.15) {$Y$};
        \node[RR] (B) at (1.8,1.2) {$B$};
        \node[RR] (A) at (0.7,1.5) {$A$};

        \draw[->, draw = gray!10] (A) -- (B);
        \draw[->] (A) -- (C);
        \draw[-> ] (C) -- (D);
        \draw[-> ] (D) -- (Y);
        \draw[->] (B) -- (D);
        \draw[->] (B) -- (Y);

        \begin{pgfonlayer}{background}
        \draw[betterblue!30, fill=betterblue!30, line width=6mm,line cap=round,line join=round] 
        (Y.center) to[bend left = 30] (C.center) -- (D.center) -- (Y.center) -- (D.center) --cycle;
        \end{pgfonlayer}
        
        \draw[dashed, <->] (C) to[bend right = 30] (Y);
        \end{tikzpicture}}
    \end{minipage} \hfill
    \begin{minipage}[b]{0.19\textwidth}\centering  
    \subfloat[$\Gi{\{B,D\}}$]{
        \begin{tikzpicture}[x=0.8cm, y = 1.3cm,>={Latex[width=1.4mm,length=1.7mm]},
        font=\sffamily\sansmath\tiny,
        line width=0.2mm,
        RR/.style={draw,circle,inner sep=0mm, minimum size=4.5mm,font=\sffamily\tiny}]
        \node[RR] (C) at (0,1) {$C$};
        \node[RR] (D) at (1,0.7) {$D$};
        \node[RR] (Y) at (1.5,0.15) {$Y$};
        \node[RR] (B) at (1.8,1.2) {$B$};
        \node[RR] (A) at (0.7,1.5) {$A$};

        \draw[->, draw = gray!10] (A) -- (B);
        \draw[->] (A) -- (C);
        \draw[->, draw = gray!10] (C) -- (D);
        \draw[->] (D) -- (Y);
        \draw[->, draw = gray!10] (B) -- (D);
        \draw[->] (B) -- (Y);

        \begin{pgfonlayer}{background}
        \draw[betterblue!30, fill=betterblue!30, line width=6mm,line cap=round,line join=round] 
        (Y.center) --cycle;
        \end{pgfonlayer}
    
        \draw[dashed, <->] (C) to[bend right = 30] (Y);
        \end{tikzpicture}\label{fig: 5var e}}
    \end{minipage} \hfill
    \caption{The blue region illustrates MUCT. (b, c) are non-POMIS examples, while (d, e) correspond to POMIS. (e) Subsets $\{B, D\} \subseteq \mf{R} \subseteq \{A, B, C, D\}$ are POIS and share the same expected reward.}
    \label{fig: 5var}
\end{figure}
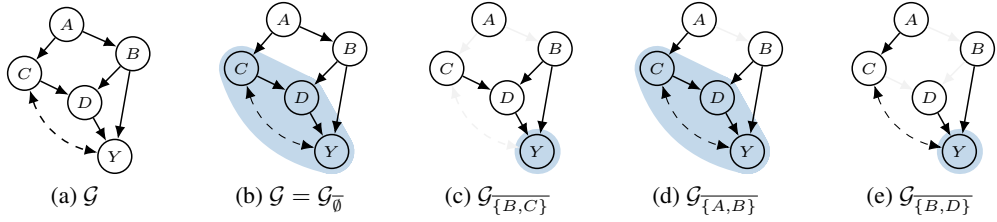

\begin{proposition}[Theorem 4 in \citet{lee2019structural}] \label{prop: thm4 in lee2019}
    Given $\langle \mc{G}, Y, \mf{N} \rangle $, we have $\mb{P}_{\mc{G},Y}^\mf{N} = \mb{P}_{\mc{H},Y}$ where $\mc{H} = \mc{G}\langle \mf{V} \setminus \mf{N} \rangle$ is the latent projection of $\mc{G}$ onto $\mf{V} \setminus \mf{N}$.    
\end{proposition}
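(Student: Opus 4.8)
The plan is to prove the two inclusions $\mb{P}_{\mc{G},Y}^{\mf{N}} \subseteq \mb{P}_{\mc{H},Y}$ and $\mb{P}_{\mc{H},Y} \subseteq \mb{P}_{\mc{G},Y}^{\mf{N}}$ simultaneously, by showing that the two conditions defining membership --- ``$\mf{X}$ is a POIS'' (\Cref{def: POIS}) and ``$\mf{X}$ is minimal'' (\Cref{def: mis}) --- are unchanged when $\mc{G}$ is replaced by its latent projection $\mc{H} = \mc{G}\langle \mf{V}\setminus\mf{N}\rangle$. Note first that the two candidate families of intervention sets coincide verbatim: $\{\mf{X} : \mf{X}\subseteq\mf{V}\setminus\{Y\}\setminus\mf{N}\} = \{\mf{X} : \mf{X}\subseteq\mf{V}(\mc{H})\setminus\{Y\}\}$, since $\mf{V}(\mc{H}) = \mf{V}\setminus\mf{N}$ and $Y\in\mf{V}\setminus\mf{N}$. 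So it suffices to verify that the two properties agree on each such $\mf{X}$; and since $\mc{H}$ carries no non-manipulable variables, ``POIS/minimal with respect to $\langle\mc{H},Y\rangle$'' is exactly the unconstrained notion.

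The workhorse will be a semantic invariance lemma for the latent projection: for every SCM $\mc{M}$ conforming to $\mc{G}$ there is an SCM $\mc{M}'$ conforming to $\mc{H}$ with $P^{\mc{M}}_{\mf{x}}(\mf{v}') = P^{\mc{M}'}_{\mf{x}}(\mf{v}')$ for all $\mf{v}'\in\Omega_{\mf{V}\setminus\mf{N}}$ and all $\mf{x}\in\Omega_{\mf{X}}$ with $\mf{X}\subseteq\mf{V}\setminus\mf{N}$; and conversely, every SCM conforming to $\mc{H}$ is matched in this sense by some SCM conforming to $\mc{G}$. I would prove the forward direction by composing the structural functions of $\mc{M}$ along the directed paths of $\mc{G}$ whose internal nodes all lie in $\mf{N}$ (these are exactly the directed edges the projection adds) and bundling the exogenous variables shared along divergent paths through $\mf{N}$ (exactly the bidirected edges it adds); because $\mf{X}\cap\mf{N}=\emptyset$, intervening on $\mf{X}$ never disturbs the nodes being marginalized, so the composition commutes with $do(\mf{x})$, and d-separation preservation under latent projection \citep{verma1990equivalence} guarantees that $\mc{M}'$ introduces no dependence forbidden by $\mc{H}$. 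For the converse I would ``inflate'' an $\mc{H}$-SCM by reinstating each $N\in\mf{N}$ with a deterministic pass-through mechanism realizing its projected directed edges and auxiliary noise realizing its projected bidirected edges, and check that the resulting model conforms to $\mc{G}$ and reproduces all the $do(\mf{x})$-laws for $\mf{X}\subseteq\mf{V}\setminus\mf{N}$.

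Granting the lemma, the conclusion follows by bookkeeping. The lemma says the family of joint interventional reward profiles realizable by SCMs conforming to $\mc{G}$ --- one value $\mb{E}_{P_{\mf{x}}}Y$ for each action $\mf{x}$ over every $\mf{X}\subseteq\mf{V}\setminus\{Y\}\setminus\mf{N}$ --- equals the one realizable by SCMs conforming to $\mc{H}$. Hence (i) two sets $\mf{X},\mf{X}'$ are equivalent (\Cref{def: mis}) with respect to $\mc{G}$ iff they are equivalent with respect to $\mc{H}$, so a set is minimal for $\mc{G}$ iff it is minimal for $\mc{H}$; and (ii) using (i) so that ``not equivalent to $\mf{X}$'' means the same for both graphs, there is an SCM conforming to $\mc{G}$ under which $\mf{x}^{\ast}$ strictly beats $\mf{w}^{\ast}$ for every $\mf{W}$ not equivalent to $\mf{X}$ iff there is such an SCM conforming to $\mc{H}$, i.e.\ $\mf{X}$ is a POIS (\Cref{def: POIS}) for $\langle\mc{G},Y,\mf{N}\rangle$ iff it is a POIS for $\langle\mc{H},Y\rangle$. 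Since a POMIS is precisely a minimal POIS, combining (i) and (ii) yields $\mf{X}\in\mb{P}_{\mc{G},Y}^{\mf{N}}$ iff $\mf{X}\in\mb{P}_{\mc{H},Y}$, which is the claim. (An alternative, purely graphical route is to invoke \Cref{thm6 in lee2018} and \Cref{prop: characterization POIS}: show that $\mf{X}$-upper-manipulation commutes with the latent projection onto $\mf{V}\setminus\mf{N}$ whenever $\mf{X}\cap\mf{N}=\emptyset$, i.e.\ $(\Gi{\mf{X}})\langle\mf{V}\setminus\mf{N}\rangle = \mc{H}_{\overline{\mf{X}}}$, and then observe that $\MUCT$ and $\IB$ depend only on ancestry, descendant, and c-component connectivity, all preserved by latent projection.)

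The step I expect to be the main obstacle is the converse half of the semantic invariance lemma. Latent projection is not invertible at the level of SCMs, so one must exhibit a concrete lift to $\mc{G}$ and argue that a \emph{single} lifted model reproduces the target interventional distributions \emph{simultaneously} for every intervention set $\mf{X}\subseteq\mf{V}\setminus\mf{N}$, while introducing no confounding beyond what $\mc{G}$ licenses; the pass-through/shared-noise construction is routine, but the uniformity over all $\mf{X}$ (rather than one $\mf{X}$ at a time) is where care is needed. If one takes the graphical route instead, the lighter but still non-trivial obstacle is the commutation identity $(\Gi{\mf{X}})\langle\mf{V}\setminus\mf{N}\rangle = \mc{H}_{\overline{\mf{X}}}$, which requires a short case analysis of how directed and bidirected edges incident to nodes of $\mf{N}$ transform under the two operations.
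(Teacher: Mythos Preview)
The paper does not prove this proposition: it is stated as a direct citation of Theorem~4 in \citet{lee2019structural}, with no proof given in the present text. So there is no in-paper proof to compare against.

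Your approach is sound. The semantic invariance lemma you isolate --- that SCMs conforming to $\mc{G}$ and SCMs conforming to $\mc{H}=\mc{G}\langle\mf{V}\setminus\mf{N}\rangle$ realize exactly the same families of interventional laws over $\mf{V}\setminus\mf{N}$ for interventions $\mf{X}\subseteq\mf{V}\setminus\mf{N}$ --- is the standard semantic justification for latent projection, and your forward and converse constructions are the natural ones. Given the lemma, the bookkeeping you describe (equivalence classes match, so minimality matches; realizable reward profiles match, so POIS status matches) is correct. Your alternative graphical route via \Cref{thm6 in lee2018} and the commutation $(\Gi{\mf{X}})\langle\mf{V}\setminus\mf{N}\rangle = \mc{H}_{\overline{\mf{X}}}$ is also viable and is likely closer in spirit to the original argument in \citet{lee2019structural}, whose constrained POMIS characterization is developed through MUCT/IB on the projected graph rather than through a full SCM-level correspondence.

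You correctly flag the converse lift as the delicate step. The uniformity over all $\mf{X}$ can be handled in one shot: make each $N\in\mf{N}$ a deterministic copy of (a tuple of) its $\mc{G}$-parents, and route any confounding that $\mc{H}$ encodes via bidirected edges through the corresponding $N$-nodes' exogenous noise. Since no intervention touches $\mf{N}$, this single lifted model reproduces the $\mc{H}$-SCM on every $do(\mf{x})$ with $\mf{X}\cap\mf{N}=\emptyset$ simultaneously.
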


\subsection*{Algorithmic Characterization of POIS} 
The algorithm \textsf{POISs} (Alg.~\ref{alg: pois}) is identical to \textsf{POMISs} (Alg.~1 in \citet{lee2018structural}), except for Lines 3 and 9, where the set $\{\mf{X}\}$ is replaced with $\{\mf{R} \mid \mf{X} \subseteq \mf{R} \subseteq \An(\mf{T})_{\mc{G}} \setminus \mf{T}\}$ in order to include  not only the POMIS $\mf{X}$ but also all POISs equivalent to it. The algorithm completely enumerates all POISs avoiding redundant computations by Thm.~9 in \citet{lee2018structural} and it takes $\mc{O}(kn^2)$ where $k$ denotes the number of POIS and $n = \lvert \mf{V} \rvert$. 

\begin{algorithm}[h]
\small
\SetAlgoNoEnd

\DontPrintSemicolon
\SetKwInput{KwInput}{Input}

\textbf{function } \textsf{POISs}($\mc{G}$, $Y$)\;

\Indp{
$\mf{T}$, $\mf{X}$ = $\MUCT(\mc{G},Y)$, $\IB(\mc{G},Y)$; $\mc{Q} = \mc{G}_{\overline{\mf{X}}}[\mf{T} \cup \mf{X}]$

\Return{\rm $\{\mf{R}  \mid \mf{X} \subseteq \mf{R} \subseteq \An(\mf{T})_{\mc{G}}  \setminus \mf{T}\} \cup$ \textsf{subPOISs}$(\mc{Q},Y, \textsf{reversed(topological-sort}(\mc{Q}), \emptyset$})\;
}

\Indm
\BlankLine 
\textbf{function }{\rm \textsf{subPOISs}$(\mc{G},Y,\boldsymbol{\pi}, \mf{O})$}\;

\Indp{
    $\mf{P} = \emptyset$

    \For{\rm $\pi[i] \in \boldsymbol{\pi}$}{
        $\mf{T}$, $\mf{X}$, $\boldsymbol{\pi'}$, $\mf{O}'$ = $\MUCT(\mc{G}_{\overline{\pi[i]}},Y)$, $\IB(\mc{G}_{\overline{\pi[i]}},Y)$, $\boldsymbol{\pi}[i+1:]$, $\mf{O} \cup \pi[1:i-1]$

        \If{\rm $\mf{X}\cap \mf{O}' = \emptyset$}{
            $\!\mf{P} = \mf{P} \cup \{\mf{R}  \mid \mf{X} \subseteq \mf{R} \subseteq \An(\mf{T})_{\mc{G}} {\setminus} \mf{T} \} {\cup}$ \textsf{subPOISs}$(\mc{G}_{\overline{\mf{X}}}[\mf{T} {\cup} \mf{X}],Y, \boldsymbol{\pi'}, \mf{O}'$) \textbf{if} $\boldsymbol{\pi'} \neq \emptyset$ \textbf{else} $\emptyset$. 
            }
        \Return{$\mf{P}$}
        }
    }
\caption{Algorithm enumerating all POISs.}
\label{alg: pois}
\end{algorithm}

\begin{proposition}
    The algorithm \textsf{POISs} (Alg.~\ref{alg: pois}) returns all, and only POISs given $\langle \mc{G}, Y \rangle$.
\end{proposition}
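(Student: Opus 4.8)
My plan is to reduce correctness of \textsf{POISs} to two ingredients that are already available: the soundness and completeness of the unmodified \textsf{POMISs} routine of \citet{lee2018structural} (Thm.~9 there), together with the bookkeeping that, whenever it emits a POMIS $\mf{X}$, it emits it alongside the set $\mf{T} = \MUCT(\mc{G}_{\overline{\mf{X}}},Y)$; and the graphical characterization of POIS, \Cref{prop: characterization POIS}, together with the shared-MUCT fact of \Cref{prop: pois share MUCT}. Since \textsf{POISs} differs from \textsf{POMISs} only by replacing each emitted singleton $\{\mf{X}\}$ with the family $\mc{F}_\mf{X} \triangleq \{\mf{R} \mid \mf{X} \subseteq \mf{R} \subseteq \An(\mf{T})_\mc{G} \setminus \mf{T}\}$ while leaving the recursion tree untouched, it suffices to prove that $\bigcup_\mf{X}\mc{F}_\mf{X}$, ranging over all POMISs $\mf{X}$ produced by the recursion, is exactly the set of POISs with respect to $\langle \mc{G},Y\rangle$.

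For soundness I would fix an emitted POMIS $\mf{X}$ with companion $\mf{T}=\MUCT(\mc{G}_{\overline{\mf{X}}},Y)$ and an arbitrary $\mf{R}\in\mc{F}_\mf{X}$. Each variable in $\mf{R}\setminus\mf{X}$ lies in $\An(\mf{T})_\mc{G}\setminus\mf{T}$, hence is neither in $\mf{T}$ nor a parent of $\mf{T}$; so, exactly as in the proof of \Cref{prop: characterization POIS}, additionally removing its incoming edges leaves $\mf{T}$ descendant-closed, CC-closed and inside $\An(Y)$, giving $\MUCT(\mc{G}_{\overline{\mf{R}}},Y)=\mf{T}$ and $\IB(\mc{G}_{\overline{\mf{R}}},Y)=\mf{X}$. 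Then $\IB(\mc{G}_{\overline{\mf{R}}},Y)\subseteq\mf{R}\subseteq\An(\mf{T})_\mc{G}\setminus\mf{T}$, so $\mf{R}$ is a POIS by \Cref{prop: characterization POIS}. Hence nothing outside the POIS set is returned.

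For completeness I would take any POIS $\mf{R}$, set $\mf{R}^\dagger\triangleq\IB(\mc{G}_{\overline{\mf{R}}},Y)$, and use \Cref{prop: pois share MUCT} to get that $\mf{R}^\dagger$ is a POMIS with $\MUCT(\mc{G}_{\overline{\mf{R}^\dagger}},Y)=\MUCT(\mc{G}_{\overline{\mf{R}}},Y)=:\mf{T}_\mf{R}$, and \Cref{prop: characterization POIS} to get $\mf{R}^\dagger\subseteq\mf{R}\subseteq\An(\mf{T}_\mf{R})_\mc{G}\setminus\mf{T}_\mf{R}$. By completeness of the \textsf{POMISs} recursion, $\mf{R}^\dagger$ is emitted at some point, and — by the same invariant relied on in \Cref{thm6 in lee2018} — it is emitted together with $\mf{T}=\MUCT(\mc{G}_{\overline{\mf{R}^\dagger}},Y)=\mf{T}_\mf{R}$; therefore $\mf{R}\in\mc{F}_{\mf{R}^\dagger}$ and is returned. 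The two directions give the claim, and the $\mc{O}(kn^2)$ bound and the avoidance of redundant recomputation carry over from \citet{lee2018structural} verbatim, since only the cardinality of each emitted family — not the recursion — has changed.

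The step I expect to be the real obstacle is making this reduction airtight on one point: that the companion set $\mf{T}$ the recursion carries is the \emph{global} MUCT $\MUCT(\mc{G}_{\overline{\mf{X}}},Y)$, and that the ancestor closure $\An(\mf{T})_\mc{G}$ in Lines 3 and 9 is taken with respect to (the appropriate manipulation of) the original diagram rather than the reduced subgraph $\mc{G}_{\overline{\mf{X}}}[\mf{T}\cup\mf{X}]$ threaded through the recursion. Establishing this needs the structural invariants behind the correctness of \textsf{POMISs} in \citet{lee2018structural} — that a POMIS discovered deep in the recursion as the interventional border of a manipulated reduced graph coincides with a POMIS of $\mc{G}$ whose MUCT, and the $\mc{G}$-ancestors of that MUCT, are faithfully preserved by each reduction step — so that the appended POIS intervals $\mc{F}_\mf{X}$ are neither too small (which would break completeness) nor too large (which would break soundness). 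Once that invariant is in hand, the rest is the routine bookkeeping sketched above.
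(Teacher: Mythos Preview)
Your proposal is correct and follows essentially the same route as the paper's own proof, which is literally the one-liner ``This follows from Thm.~9 in \citet{lee2018structural} and \Cref{prop: characterization POIS}.'' Your version is simply a more explicit unpacking of that sentence --- the soundness/completeness split, the use of \Cref{prop: pois share MUCT} (which is a corollary of \Cref{prop: characterization POIS}), and the local-vs-global MUCT invariant you flag at the end are precisely what Thm.~9 of \citet{lee2018structural} is being invoked to supply.
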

\begin{proof}
    This follows from Thm. 9 in \citet{lee2018structural}) and \Cref{prop: characterization POIS}.
\end{proof}

\section{Dominance Bounds}\label{sec: app dominance bound}
This section presents technical details relevant to the computation of dominance bounds. 

\subsection{Lower Dominance Bound}

We start by demonstrating that the $\MISs$ (Alg.~3 in \citet{lee2018structural}) is sound and complete under constraints. 
\begin{lemma}\label{lem: MISs valid non-manipulable}
    The algorithm $\MISs(\mc{G} \langle \mf{V} \setminus \mf{N} \rangle , Y)$ returns all and only MISs with respect to $\langle \mc{G},Y,\mf{N} \rangle$.  
\end{lemma}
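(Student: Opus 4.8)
The plan is to reduce the constrained claim to the already-established unconstrained correctness of the $\MISs$ routine, using the latent projection exactly as \Cref{prop: thm4 in lee2019} does for POMISs. Write $\mc{H} = \mc{G}\langle \mf{V}\setminus\mf{N}\rangle$, so $\mf{V}(\mc{H}) = \mf{V}\setminus\mf{N}$. First I would recall the graphical form of minimality from \citet{lee2018structural} (the content of the remark after \Cref{def: mis}): a set $\mf{X}$ is an MIS with respect to $\langle \mc{G},Y\rangle$ exactly when $Y\notin\mf{X}$ and, for every $X\in\mf{X}$, there is a directed path from $X$ to $Y$ in $\mc{G}$ that does not meet $\mf{X}\setminus\{X\}$; adding the constraint $\mf{N}$ simply intersects the admissible sets with $\mf{V}\setminus\{Y\}\setminus\mf{N}$. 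Since Alg.~3 of \citet{lee2018structural} is sound and complete on an \emph{arbitrary} semi-Markovian input diagram, it then suffices to show that $\mf{X}$ is an MIS with respect to $\langle \mc{G},Y,\mf{N}\rangle$ if and only if $\mf{X}$ is an MIS with respect to $\langle \mc{H},Y\rangle$, and the lemma follows by running that algorithm on $\langle \mc{H},Y\rangle$.

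The technical core I would isolate is that the latent projection preserves directed reachability with forbidden intermediate vertices: for $A,B\in\mf{V}\setminus\mf{N}$ and $\mf{W}\subseteq(\mf{V}\setminus\mf{N})\setminus\{A,B\}$, there is a directed path from $A$ to $B$ in $\mc{G}$ avoiding $\mf{W}$ iff there is one in $\mc{H}$ avoiding $\mf{W}$. For the forward direction I would take such a path in $\mc{G}$ and collapse every maximal run of consecutive interior vertices lying in $\mf{N}$; by the definition of the latent projection each such run, with its two $(\mf{V}\setminus\mf{N})$-endpoints, becomes a single directed edge of $\mc{H}$, and the surviving interior vertices lie in $\mf{V}\setminus\mf{N}$ and are already disjoint from $\mf{W}$. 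For the backward direction I would expand each edge of an $\mc{H}$-path: edges of $\mc{G}[\mf{V}\setminus\mf{N}]$ are kept, and every other edge of $\mc{H}$ arises from a directed path in $\mc{G}$ whose interior lies entirely in $\mf{N}$, hence outside $\mf{W}$; concatenating gives a directed walk in $\mc{G}$, and extracting a simple path from it only deletes vertices, so its $(\mf{V}\setminus\mf{N})$-vertices remain a subset of those of the $\mc{H}$-path and it still avoids $\mf{W}$. I would also note that minimality references directed edges only, so the bidirected edges that the projection introduces are irrelevant.

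Putting the pieces together: apply this reachability claim with $A=X$, $B=Y$, $\mf{W}=\mf{X}\setminus\{X\}$ — legitimate whenever $\mf{X}\subseteq\mf{V}\setminus\{Y\}\setminus\mf{N}$ — to conclude that for each $X\in\mf{X}$ the required $\mf{X}$-avoiding directed path to $Y$ exists in $\mc{G}$ iff it exists in $\mc{H}$; combined with the identity $\mf{V}(\mc{H})=\mf{V}\setminus\mf{N}$ (so $\mf{X}\subseteq\mf{V}\setminus\{Y\}\setminus\mf{N}$ is literally $\mf{X}\subseteq\mf{V}(\mc{H})\setminus\{Y\}$), this yields the desired equivalence of MIS families, after which soundness and completeness of $\MISs$ on $\langle\mc{H},Y\rangle$ closes the argument. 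The hard part will be stating and proving the reachability claim fully rigorously — in particular making the backward direction airtight (that edge-expansion never introduces a vertex of $\mf{W}$, and that simplifying the resulting walk cannot reinstate one); the rest is bookkeeping around citing the graphical characterization of \Cref{def: mis} and the correctness of the existing $\MISs$ algorithm.
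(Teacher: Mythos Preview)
Your proposal is correct and follows essentially the same approach as the paper: both arguments reduce to the proper-directed-path characterization of minimality and show that the latent projection $\mc{H}=\mc{G}\langle \mf{V}\setminus\mf{N}\rangle$ preserves the existence of such paths. Your version is somewhat cleaner in that you isolate a symmetric reachability lemma and prove both directions uniformly, whereas the paper argues one direction via contrapositive on proper paths and the other via the no-equivalent-proper-subset definition of minimality; the content is the same.
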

\begin{proof}
Let $\mf{X}$ be an MIS with respect to $\langle \mc{G} \langle \mf{V} \setminus \mf{N} \rangle,Y\rangle$. We will prove $\mf{X}$ is also an MIS with respect to $\langle \mc{G},Y \rangle$ by proving the contrapositive. Suppose that $\mf{X}$ is \textit{not} an MIS with respect to $\langle \mc{G}, Y \rangle$. Then, there exists a node $X \in \mf{X}$ such that there is no \textit{proper directed path}\footnote{We refer to a directed path from $X \in \mf{X}$ to $Y$ as a \textit{proper} directed path with respect to $\mf{X}$ if only the first node $X$ belongs to $\mf{X}$.} from $X$ to $Y$ with respect to $\mf{X}$ in $\mc{G}$.
That is, every directed path from $X$ to $Y$ in $\mc{G}$ forms $X \to V_1 \to \cdots \to V_n \to Z \to W_1 \to \cdots \to W_m \to Y$ with $n,m \geq 0$ for an arbitrary $Z \in \mf{X} \setminus \{X\}$. Since the latent projection does not introduce any directed edges from $V_i$ to $W_j$ in $\mc{G}\langle \mf{V} \setminus \mf{N} \rangle$, all such paths correspond to non-proper directed paths from $X$ to $Y$ with respect to $\mf{X}$ in $\mc{G}\langle \mf{V} \setminus \mf{N} \rangle$. Therefore, $\mf{X}$ is \textit{not} an MIS with respect to $\langle \mc{G} \langle \mf{V} \setminus \mf{N} \rangle,Y\rangle$, which completes the contrapositive. 
Now, suppose $\mf{X}$ is an MIS with respect to $\langle \mc{G},Y \rangle$. By definition, there is no proper subset $\mf{X}' \subseteq \mf{V} \setminus \{Y\}$ such that $\mf{X}'$ is equivalent to $\mf{X}$. Moreover, since $\mf{X}$ is defined over $\mf{V} \setminus \{Y\} \setminus \mf{N}$, any such proper subset $\mf{X}'$ must also be defined over the same superset. Hence, any set $\mf{X}$ included in the output of  $\MISs(\mc{G} \langle \mf{V} \setminus \mf{N} \rangle , Y)$ is an MIS with respect to $\langle \mc{G},Y,\mf{N} \rangle$. Combining this with the soundness and completeness of the $\MISs$ algorithm, the proof is complete. 
\end{proof}
%
Equipped with this result and the definition of MIS (\Cref{def: mis}), the lower dominance bound $\ell^\star$ can be computed using $\MISs$; $\ell^\star=\max_{\mf{w} \in \Omega_{\mf{W}},\mf{W} \in \MISs(\mc{G}\langle \mf{V} \setminus \mf{N}^\ast \rangle , Y)}$. 

\subsection{Upper Dominance Bound}

We proceed to the upper dominance bound. It is crucial to identify a space in which POMISs under different constraints can be meaningfully compared. The following lemma shows that POMISs remain MISs under weaker constraints.
\begin{lemma} \label{lem: MIS to MIS}
    If $\mf{X}$ is a (PO)MIS with respect to $\langle \mc{G}, Y, \mf{N} \rangle$, then it is also an MIS with respect to $\langle \mc{G}, Y \rangle$. 
\end{lemma}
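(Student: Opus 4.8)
The plan is to handle the two cases --- $\mf{X}$ a MIS, or $\mf{X}$ a POMIS, with respect to $\langle\mc{G},Y,\mf{N}\rangle$ --- by reducing the second to the first. The POMIS case is immediate from the definitions: a POMIS is a POIS that is additionally \emph{minimal}, and minimality is exactly the MIS condition of \Cref{def: mis}. Hence any POMIS with respect to $\langle\mc{G},Y,\mf{N}\rangle$ is, in particular, a MIS with respect to $\langle\mc{G},Y,\mf{N}\rangle$, and it suffices to prove the claim under the hypothesis that $\mf{X}$ is a MIS with respect to $\langle\mc{G},Y,\mf{N}\rangle$.

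For that case, the observation driving the proof is that the minimality requirement in \Cref{def: mis} --- that no proper subset $\mf{X}'\subsetneq\mf{X}$ satisfies $\mb{E}_{P_{\mf{x}}}Y=\mb{E}_{P_{\mf{x}[\mf{X}']}}Y$ for every SCM conforming to $\mc{G}$ --- makes no reference to $\mf{N}$; the constraint $\mf{N}$ enters only through the side condition $\mf{X}\subseteq\mf{V}\setminus\{Y\}\setminus\mf{N}$. Since $\mf{X}\subseteq\mf{V}\setminus\{Y\}\setminus\mf{N}\subseteq\mf{V}\setminus\{Y\}$, the set $\mf{X}$ satisfies \Cref{def: mis} verbatim with $\mf{N}=\emptyset$, i.e.\ it is a MIS with respect to $\langle\mc{G},Y\rangle$. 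If one prefers a graphical argument, it runs along the same lines as the first half of the proof of \Cref{lem: MISs valid non-manipulable}: $\mf{X}$ is minimal iff every $X\in\mf{X}$ admits a proper directed path to $Y$ (with only its first vertex in $\mf{X}$), and a proper directed path in $\mc{G}\langle\mf{V}\setminus\mf{N}\rangle$ lifts to one in $\mc{G}$ by replacing each projection edge arising from a path through $\mf{N}$ with that $\mf{N}$-internal directed path, whose intermediate vertices lie in $\mf{N}$ and hence outside $\mf{X}$.

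The one place that deserves attention is the interface with how constrained MISs are actually used in the paper, namely as the output of $\MISs(\mc{G}\langle\mf{V}\setminus\mf{N}\rangle,Y)$. If the proof reuses \Cref{lem: MISs valid non-manipulable} instead of arguing from \Cref{def: mis}, then one invokes that lemma to identify the MISs with respect to $\langle\mc{G},Y,\mf{N}\rangle$ with the MISs with respect to $\langle\mc{G}\langle\mf{V}\setminus\mf{N}\rangle,Y\rangle$, and the desired implication ``MIS of the latent projection $\Rightarrow$ MIS of $\mc{G}$'' is precisely the contrapositive already established within that proof; one should also note $\mf{X}\subseteq\mf{V}\setminus\{Y\}$ holds trivially since the projection is taken over $\mf{V}\setminus\mf{N}$. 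I do not expect a real obstacle: the whole content is the two bookkeeping facts that POMIS $\subseteq$ MIS by definition and that weakening $\mf{N}$ to $\emptyset$ only enlarges the admissible sets while leaving the minimality condition untouched.
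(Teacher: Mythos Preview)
Your proposal is correct and in fact subsumes the paper's proof. The paper argues exactly via your third alternative: it invokes the implication already established inside the proof of \Cref{lem: MISs valid non-manipulable} (MIS with respect to $\langle\mc{G}\langle\mf{V}\setminus\mf{N}\rangle,Y\rangle$ implies MIS with respect to $\langle\mc{G},Y\rangle$) and then notes POMIS $\subseteq$ MIS under the same constraint. Your primary route---observing that the minimality clause in \Cref{def: mis} quantifies only over subsets $\mf{X}'\subsetneq\mf{X}$ and SCMs conforming to $\mc{G}$, with $\mf{N}$ appearing solely in the side condition $\mf{X}\subseteq\mf{V}\setminus\{Y\}\setminus\mf{N}$---is more elementary and avoids the detour through the latent projection altogether. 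Both arguments are sound; yours is shorter, while the paper's choice has the minor advantage of tying the lemma explicitly to the algorithmic characterization used elsewhere.
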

\begin{proof}
    In the proof of \Cref{lem: MISs valid non-manipulable}, we have shown that if $\mf{X}$ be an MIS with respect to $\langle \mc{G} \langle \mf{V} \setminus \mf{N} \rangle,Y\rangle$, then it is also an MIS with respect to $\langle \mc{G},Y \rangle$. Since the set of POMISs is a subset of the set of MISs under the same constraint, the result also holds for the POMISs, which concludes the proof. 
\end{proof}
\begin{lemma}\label{lem: subconstraint dominance}
   Let $\mb{P}_{\mc{G},Y}^{\mf{N}'}$ and $\mb{P}_{\mc{G},Y}^{\mf{N}}$ be a set of POMISs corresponding to the constraints $\mf{N}'$ and $\mf{N}$, respectively, where $\mf{N}' \subseteq \mf{N}$. Then $\mb{P}_{\mc{G},Y}^{\mf{N}}$ dominates $\mb{P}_{\mc{G},Y}^{\mf{N}'}$.
\end{lemma}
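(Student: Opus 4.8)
The plan is to reduce the comparison of the two POMIS families to a comparison of the optimal rewards they attain, carried out inside a single common space so that ``dominance'' is well posed. For a fixed SCM conforming to $\mc{G}$, I would first use \Cref{prop: thm4 in lee2019} to rewrite $\mb{P}_{\mc{G},Y}^{\mf{N}} = \mb{P}_{\mc{G}\langle \mf{V}\setminus\mf{N}\rangle,Y}$ and $\mb{P}_{\mc{G},Y}^{\mf{N}'} = \mb{P}_{\mc{G}\langle \mf{V}\setminus\mf{N}'\rangle,Y}$, and then \Cref{lem: MIS to MIS} to view every set in both families as an MIS with respect to $\langle \mc{G}, Y\rangle$; this is the common space in which the two constraint levels can be meaningfully compared. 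By the defining property of POMIS (\Cref{def: POIS}), the best reward attained by a family coincides with the optimum over its whole admissible action space, i.e. $\max_{\mf{X}\in\mb{P}_{\mc{G},Y}^{\mf{N}}}\max_{\mf{x}}\mb{E}_{P^\ast_{\mf{x}}}Y = \max_{\mf{X}\subseteq\mf{V}\setminus\{Y\}\setminus\mf{N}}\max_{\mf{x}}\mb{E}_{P^\ast_{\mf{x}}}Y$, and analogously for $\mf{N}'$, since intervening on a non-POMIS can never beat the POMIS optimum.

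The key step is the monotonicity of this attained optimum under relaxation of the non-manipulability set. Since $\mf{N}'\subseteq\mf{N}$, we have $\mf{V}\setminus\{Y\}\setminus\mf{N}\subseteq\mf{V}\setminus\{Y\}\setminus\mf{N}'$, so every intervention admissible under the stronger constraint $\mf{N}$ is also admissible under $\mf{N}'$ and induces the same interventional reward there. Letting $\mf{r}^\star$ denote an optimal action of $\mb{P}_{\mc{G},Y}^{\mf{N}}$, the set $\mf{r}^\star$ remains feasible under $\mf{N}'$, so taking maxima over the nested admissible action spaces yields $\max_{\mf{X}\in\mb{P}_{\mc{G},Y}^{\mf{N}}}\max_{\mf{x}}\mb{E}_{P^\ast_{\mf{x}}}Y \le \max_{\mf{X}\in\mb{P}_{\mc{G},Y}^{\mf{N}'}}\max_{\mf{x}}\mb{E}_{P^\ast_{\mf{x}}}Y$. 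This is the same containment mechanism behind \Cref{eq: dominace upper} and \Cref{thm: dominance relations}, now instantiated between two constraints $\mf{N}'\subseteq\mf{N}$ rather than between a single constraint and the unconstrained case, and it places the optimum of $\mb{P}_{\mc{G},Y}^{\mf{N}}$ between the target optimum and that of $\mb{P}_{\mc{G},Y}^{\mf{N}'}$.

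Finally I would read off the dominance in the sense relevant to the upper dominance bound. The inequality above shows that relaxing the constraint from $\mf{N}$ to the smaller set $\mf{N}'$ can only increase the attainable optimum; equivalently, for the target constraint $\mf{N}^\ast\supseteq\mf{N}$ against which these quantities are compared, the optimum supplied by $\mb{P}_{\mc{G},Y}^{\mf{N}}$ is the closer of the two to the target optimum and is therefore the tighter of the two valid upper bounds. In the hierarchy climbed by \textsc{udb} this is exactly the statement that $\mb{P}_{\mc{G},Y}^{\mf{N}}$ dominates $\mb{P}_{\mc{G},Y}^{\mf{N}'}$. I expect the main obstacle to lie not in the inequality itself but in making the comparison well posed: POMISs under $\mf{N}$ and under $\mf{N}'$ are defined over the different vertex sets $\mf{V}\setminus\mf{N}$ and $\mf{V}\setminus\mf{N}'$, so the two families become comparable only once both are embedded as MISs of $\langle\mc{G},Y\rangle$ via \Cref{prop: thm4 in lee2019} and \Cref{lem: MIS to MIS}; accordingly, the bulk of the argument is spent on this reduction and on verifying that restricting an $\mf{N}$-admissible action to the $\mf{N}'$ setting leaves its reward under $\pi^\ast$ unchanged, so that the two optima are genuinely comparable.
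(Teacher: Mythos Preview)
Your core argument is correct and yields the right inequality, but the route differs from the paper's. The paper does not compare admissible action spaces directly; instead it projects out the smaller constraint first, setting $\mc{H}=\mc{G}\langle\mf{V}\setminus\mf{N}'\rangle$, so that via \Cref{prop: thm4 in lee2019} one family becomes the unconstrained POMISs $\mb{P}_{\mc{H},Y}$ and the other the constrained $\mb{P}_{\mc{H},Y}^{\mf{N}\setminus\mf{N}'}$. Then \Cref{lem: MIS to MIS} shows every set in the constrained family is an MIS with respect to $\langle\mc{H},Y\rangle$, and the conclusion follows from ``POMISs dominate MISs.'' You bypass the projection and the MIS embedding entirely: you identify the POMIS optimum with the optimum over all admissible interventions (which is exactly the content of the sentence after \Cref{def: POIS}) and then use the inclusion $\mf{V}\setminus\{Y\}\setminus\mf{N}\subseteq\mf{V}\setminus\{Y\}\setminus\mf{N}'$ directly. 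This is more elementary and arguably cleaner; the paper's route has the advantage of reusing exactly the ``unconstrained versus constrained in a single graph'' pattern that recurs in the proof of \Cref{thm: dominance relations}.

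One caution on your third paragraph: you reinterpret ``dominates'' as ``gives the tighter upper bound,'' but the paper's definition is ``achieves at least as high a maximum expected reward.'' Under that definition, the inequality you derived says the weaker-constraint family $\mb{P}_{\mc{G},Y}^{\mf{N}'}$ dominates the stronger-constraint family $\mb{P}_{\mc{G},Y}^{\mf{N}}$, which matches the surrounding text (``proper subsets of $\mf{N}$ dominate $\mf{N}$'') and the paper's own argument once its two latent-projection identities are read correctly (they should be $\mb{P}_{\mc{G},Y}^{\mf{N}'}=\mb{P}_{\mc{H},Y}$ and $\mb{P}_{\mc{G},Y}^{\mf{N}}=\mb{P}_{\mc{H},Y}^{\mf{N}\setminus\mf{N}'}$, the reverse of what is written). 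The lemma as stated has $\mf{N}$ and $\mf{N}'$ swapped in its conclusion; do not contort the meaning of dominance to match it---your inequality is the intended content.
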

\begin{proof}
    Let $\mc{H} \triangleq \mc{G} \langle \mf{V} \setminus \mf{N}' \rangle$. According to \Cref{prop: thm4 in lee2019}, we have $\mb{P}_{\mc{G},Y}^{\mf{N}'} = \mb{P}_{\mc{H},Y}^{\mf{N} \setminus \mf{N}'}$ and $\mb{P}_{\mc{G},Y}^{\mf{N}} = \mb{P}_{\mc{H},Y}$.  Let us denote $\mf{N}^\dagger = \mf{N} \setminus \mf{N}'$. According to \Cref{lem: MIS to MIS}, any $\mf{X} \in \mb{P}_{\mc{H},Y}^{\mf{N}^\dagger}$ is an MIS with respect to $\langle \mc{H},Y \rangle$. Since POMISs dominate MISs under the same constraint $\mf{N}^\dagger$, the claim follows. 
\end{proof}

\thmDominanceRelations*
\begin{proof}
    Without loss of generality, we assume $\mf{N} = \emptyset$ since we can equivalently reformulate the graph as $\mc{G} = \mc{G} \langle \mf{V} \setminus \mf{N} \rangle$ and the constraint set as $\mf{N}^\ast = \mf{N}^\ast \setminus \mf{N}$. According to \Cref{lem: MIS to MIS}, any $\mf{X} \in \mb{P}_{\mc{G},Y}^{\mf{N}^\ast}$ is an MIS with respect to $\langle \mc{G},Y \rangle$. Furthermore, for any unconstrained POISs $\mf{R}$, we can always find an equivalent unconstrained POMIS $\mf{R}^\dagger = \IB(\mc{G}_{\overline{\mf{R}}},Y)$ that yields the same expected reward, i.e., $\mb{E}_{P_{\mf{r}}}Y = \mb{E}_{P_{\mf{r}[\mf{R}^\dagger]}}Y$, as supported by \Cref{cor: equivalence}. Therefore, the upper dominance bound is sound. The lower bound also holds by the definition of MIS (\Cref{def: mis}) and POIS (\Cref{def: POIS}).  
\end{proof}

Building on the statements, we present the algorithm \textsc{udb} (Alg.~\ref{alg: UDB}) which returns a valid upper dominance bound $u^\star$ given the inputs $(\mc{G}, Y, \mf{N}, \mb{U})$ where $\mb{U}$ denotes a collection of upper causal bounds for all actions. 

\begin{algorithm}[h]
\small
\SetAlgoNoEnd

\SetKwInput{KwInput}{Input}

\DontPrintSemicolon
\textbf{function }{\rm \textsc{udb}$(\mc{G}, Y, \mf{N}, \mb{U})$}:

\setcounter{AlgoLine}{0}

\KwInput{$\mc{G}$: causal diagram; $Y$: reward variable; $\mf{N}$: non-manipulable variables; $\mb{U} = \{u_{\mf{x}} \mid \mf{x} \in \Omega_{\mf{X}}, \mf{X} \in 2^{\mf{V} \setminus \{Y\}}\}$: collection of upper causal bounds. }

\textbf{Ensure:}~{All upper causal bounds $u_\mf{x} \in \mb{U}$ have been computed.}

    

        
        

            

        



\BlankLine

\Indp
{
    
    Compute the latent projection $\mc{H} = \mc{G} \langle \mf{V} \setminus \mf{N} \rangle $; Initialize the upper dominance bound $u^\star = 0$. 

    \lFor{\rm $\mf{R} := \textsf{POMISs}(\mc{H},Y)$}{
        \textbf{if} $u^\star > \max_{\mf{r} \in \Omega_{\mf{R}}}u_{\mf{r}}$ \textbf{then} $(u^\star, \mf{R}^\star) = (\max_{\mf{r} \in \Omega_{\mf{R}}} u_{\mf{r}} ,\mf{R})$. 
    }
    
    \lIf{\rm $u^\star < \infty$ and $u_{\mf{r}^\star} = \ell_{\mf{r}^\star}$ (i.e., transportable) }{\Return{$u^\star$} 

    \lIf{$\mf{N} = \emptyset$}{\Return{$u^\star$}}

    \lFor{\rm all $\mf{N}' \subset \mf{N}$ such that $\vert \mf{N}' \vert = \vert \mf{N} \vert - 1$}{$u^\star = \min\{u^\star,\textsc{udb}(\mc{G}, Y, \mf{N}', \mb{U})\}$}
        \Return{$u^\star$}
    }
}

\caption{Upper Dominance Bound (\textsc{udb})}
\label{alg: UDB}
\end{algorithm}

In Line 3, the algorithm attempts to compute upper causal bounds of POMISs. If the algorithm reaches Line 6, this implies that (1) there exists at least one POMIS action whose upper causal bound is $\infty$, or (2) the current upper dominance bound corresponds to upper causal bound of a \textit{non}-transportable action. In such cases, the algorithm recursively explores weaker constraints $\mf{N}'$ and returns the tightest bound among the results of recursive calls; on the other hand, if $u^\star < \infty$ and the upper dominance bound corresponds to the expected reward of a transportable action (Line 4), no further recursive calls are required---this is justified by \Cref{lem: subconstraint dominance}---and the algorithm simply returns $u^\star$. 


%
\paragraph{Runtime analysis.} In the worst case, the algorithm may need to traverse up to the unconstrained POMISs, resulting in an exponential time complexity in the size of $\mf{N}^\ast$, i.e., $\mc{O}(2^{\vert \mf{N}^\ast \vert})$.  
\paragraph{Dominance relationship example.} As a concrete example to illustrate the dominance relationships in \Cref{wrapfig: hierarchy}, we present an SCM that exhibits these relations. We consider an SCM $\mc{M}$ where 

\begin{align}
    \mc{M}
    = \begin{cases}
            \mf{U} &= \{U_A,U_B,U_C,U_Y,U_{BC},U_{BY}\}\\
            \mf{V} &= \{A,B,C,Y\} \\
            \mf{F} &= 
            \begin{cases}
                f_{A} = u_{AB}, f_{B} = u_B \oplus u_{BC} \oplus u_{BY}, \\
                f_C = a \oplus b \land u_C, f_Y = u_A \oplus u_C \oplus u_Y \land u_{BY}
            \end{cases}\\
            P(\mf{U}) &= 
            \begin{cases}
                U_A \sim \Bern(0.3), U_B \sim \Bern(0.1), U_C \sim \Bern(0.25),\\
                U_Y \sim \Bern(0.2), U_{BC} \sim \Bern(0.2), U_{BY} \sim \Bern(0.15).
            \end{cases}
    \end{cases}
\end{align}
 
To elaborate on dominance relations, let us suppose access to the causal bounds of expected rewards of all actions. In this setting, the value of upper dominance bound is:

(d) $u^\star = \max_{\mf{r} \in \Omega_{\mf{R}}, \mf{R} \in \mb{P}_{\mc{G},Y}^{\{A,C\}} = \{\emptyset, \{B\}\} } \mb{E}_{P^\ast_{\mf{r}}} Y  \thickapprox 0.265$. For weaker constraints, we have (b) $\max_{\mf{r} \in \Omega_{\mf{R}}, \mf{R} \in \mb{P}_{\mc{G},Y}^{\{A\}}=\{\emptyset, \{B\}, \{C\}\} } \mb{E}_{P^\ast_{\mf{r}}} Y \thickapprox 0.782$, (c) $\max_{\mf{r} \in \Omega_{\mf{R}}, \mf{R} \in \mb{P}_{\mc{G},Y}^{\{C\}} = \{\emptyset, \{A\}, \{A,B\}\} } \mb{E}_{P^\ast_{\mf{r}}} Y \thickapprox 0.782$; and (a) $\max_{\mf{r} \in \Omega_{\mf{R}}, \mf{R}^\dagger \in \mb{P}_{\mc{G},Y}= \{\emptyset, \{A\}, \{A,C\}\} } \mb{E}_{P^\ast_{\mf{r}}} Y \thickapprox 0.97$. 

Therefore, we observe the dominance relationships: (d) $\leq$ (b, c) $\leq$ (a). 

\section{Causal Bounds: Partial Transportability}\label{sec: app partial trans}

This section presents technical details relevant to the computation of causal bounds (\Cref{prop: Q trans bound,thm: causal bound} in \Cref{sec: trans}). We begin by introducing counterfactual variables, which play a useful role in our proofs.

\paragraph{Counterfactual variables.} Given a set of variables $\mf{Y} \subseteq \mf{V}$, the solution for $\mf{Y}$ in $\mc{M}_\mf{x}$ defines a \textit{potential response} for a unit $\mf{u}$, denoted as $\mf{Y}_\mf{x}(\mf{u})$. Averaging over the space of $\mf{U}$, a potential response $\mf{Y}_{\mf{x}}(\mf{u})$ induces a \textit{counterfactual variables} $\mf{Y}_{\mf{x}}$ \citep{pearl:2k}. 

\citet{correa2025counterfactual} introduced a novel calculus over probability quantities may defined at the counterfactual level, called the \textit{ctf-calculus}. The independence rule (Rule 2) in ctf-calculus requires the construction of another graphical object, known as the \textit{Ancestral Multi-World Network} (AMWN), which serves to identify d-separation \citep{pearl:95} relations among counterfactual variables.
\begin{theorem}[Counterfactual calculus (ctf-calculus); Theorem 3.1 in \citet{correa2025counterfactual}]
    Let $\mc{G}$ be a causal diagram, then for $\mf{Y}$, $\mf{X}$, $\mf{Z}$, $\mf{W}$, $\mf{T}$, $\mf{R} \subseteq \mf{V}$, the following rules hold for the probability distributions generated by any model compatible with $\mc{G}$:
    {
    \setlength{\belowdisplayskip}{10pt}%
    \setlength{\abovedisplayskip}{10pt}%
    \setlength{\jot}{0pt}
    \begin{align*}
        &\textbf{ Rule 1. (Consistency rule - Obs./intervention exchange)}
            \\  &\qquad 
                P(\mf{y}_{\mf{T}_\ast \mf{x}}, \mf{x}_{\mf{T}_\ast}, \mf{w}_\ast) = P(\mf{y}_{\mf{T}_\ast}, \mf{x}_{\mf{T}_\ast}, \mf{w}_\ast)
                & \quad
                \\[.5em]
        &\textbf{ Rule 2. (Independence rule - Adding/removing counterfactual observations)} 
            \\ &\qquad
            P(\mf{y}_{\mf{r}} \mid \mf{x}_\mf{t}, \mf{w}_\ast) = P(\mf{y}_{\mf{r}}, \mf{w}_\ast) \\
            &\qquad\qquad \text{if } (\mf{Y}_{\mf{r}} \CI \mf{X}_\mf{t} \mid \mf{W}_\ast)_{\mc{G}_A} 
            \\[.5em]
        &\textbf{ Rule 3. (Exclusion Rule - Adding/removing interventions)}
        \\ & \qquad 
        P(\mf{y}_{\mf{xz}}, \mf{w}_\ast) = P(\mf{y}_{\mf{z}}, \mf{w}_\ast)
        \\ &\qquad \qquad \text{if } \mf{X} \cap \An(\mf{Y}) = \emptyset \text{ in } \mc{G}_{\overline{\mf{Z}}}
    \end{align*}}   
    where $\mc{G}_A$ is the AMWN $\mc{G}_A(\mc{G}, \mf{Y}_\mf{r} \cup \mf{X}_\mf{t} \cup \mf{W}_\ast)$. 
\end{theorem}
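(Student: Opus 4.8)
The plan is to establish each of the three rules separately, reducing Rules~1 and~3 to the defining restrictions of structural causal models (composition/consistency and exclusion) applied at the level of individual units $\mf{u}$, and reducing Rule~2 to the soundness of d-separation in a Bayesian network built over counterfactual variables, namely the AMWN.

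For Rule~1 (consistency) I would argue pointwise over $\mf{U}$. Fix a unit $\mf{u}$ in the conditioning event $\{\mf{x}_{\mf{T}_\ast}=\mf{x},\mf{w}_\ast\}$, i.e.\ a $\mf{u}$ for which, in the submodel where $\mf{T}$ is held fixed (to its reference values), $\mf{X}$ already takes the value $\mf{x}$. Then the extra intervention $do(\mf{x})$ is vacuous at $\mf{u}$: the structural equations that produce $\mf{Y}$ in $\mc{M}_{\mf{t}_\ast,\mf{x}}$ receive exactly the same inputs as in $\mc{M}_{\mf{t}_\ast}$, so $\mf{Y}_{\mf{T}_\ast\mf{x}}(\mf{u})=\mf{Y}_{\mf{T}_\ast}(\mf{u})$ — this is precisely the composition (consistency) axiom. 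Summing/integrating over the units in the conditioning event then gives the stated equality of joint probabilities. Rule~3 (exclusion) is handled analogously: the hypothesis $\mf{X}\cap\An(\mf{Y})=\emptyset$ in $\mc{G}_{\overline{\mf{Z}}}$ means that, once $\mf{Z}$ is intervened upon, no directed path carries any member of $\mf{X}$ to any member of $\mf{Y}$, so the recursive substitution computing $\mf{Y}_{\mf{x}\mf{z}}(\mf{u})$ never references $\mf{X}$; hence $\mf{Y}_{\mf{x}\mf{z}}(\mf{u})=\mf{Y}_{\mf{z}}(\mf{u})$ for every $\mf{u}$, and taking expectations finishes the rule. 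Both are routine unit-level arguments requiring no probabilistic machinery beyond a final averaging step (and positivity of the models to keep the conditionals well defined).

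The substantive step is Rule~2 (independence). Here I would first make precise the construction of the Ancestral Multi-World Network $\mc{G}_A=\mc{G}_A(\mc{G},\mf{Y}_\mf{r}\cup\mf{X}_\mf{t}\cup\mf{W}_\ast)$: it contains, for each intervention regime among $\mf{r}$, $\mf{t}$, and the natural regime $\ast$, a copy of the relevant ancestors of the corresponding counterfactual variables, with two world-copies of a node identified whenever the two submodels agree on the mechanism and inputs feeding that node, and with exogenous nodes shared across worlds exactly as they are shared in $\mc{G}$. The key lemma to invoke (or reprove) is that the joint counterfactual distribution over $\mf{Y}_\mf{r}\cup\mf{X}_\mf{t}\cup\mf{W}_\ast$ is Markov relative to $\mc{G}_A$ — i.e.\ it factorizes according to $\mc{G}_A$ — which follows from the recursive definition of potential responses together with the merge rule. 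Given this Markov property, the standard soundness of d-separation yields $(\mf{Y}_\mf{r}\CI\mf{X}_\mf{t}\mid\mf{W}_\ast)_{\mc{G}_A}\implies P(\mf{y}_\mf{r}\mid\mf{x}_\mf{t},\mf{w}_\ast)=P(\mf{y}_\mf{r}\mid\mf{w}_\ast)$, and rewriting $P(\mf{y}_\mf{r}\mid\mf{w}_\ast)$ in the normalization used in the statement gives the claimed identity $P(\mf{y}_\mf{r}\mid\mf{x}_\mf{t},\mf{w}_\ast)=P(\mf{y}_\mf{r},\mf{w}_\ast)$.

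I expect the main obstacle to be the bookkeeping inside Rule~2: specifying exactly which world-copies and which ancestral nodes enter $\mc{G}_A$, showing the merge rule is well defined (no conflicting identifications of nodes across worlds), and verifying the factorization/Markov claim in full generality with mixed worlds and the natural regime present simultaneously. Once that scaffolding is in place the rest is essentially the classical d-separation soundness argument, and Rules~1 and~3 are comparatively immediate.
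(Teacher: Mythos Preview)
The paper does not prove this theorem at all: it is quoted verbatim from \citet{correa2025counterfactual} as background, and immediately after stating it the authors write ``since only Rule 1 (\textbf{R1}) is used in this paper, we refer the reader to \citet{correa2025counterfactual} for further details about AMWN.'' There is therefore nothing in the paper to compare your proposal against; the result is imported, not established here.

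That said, a brief remark on your sketch. Your unit-level arguments for Rules~1 and~3 via composition/consistency and the recursive definition of potential responses are the standard route and would go through. For Rule~2, your plan (show the joint counterfactual distribution is Markov with respect to the AMWN, then invoke soundness of d-separation) is also the intended strategy in the cited source. One caution: the right-hand side of Rule~2 as printed in this paper, $P(\mf{y}_{\mf{r}}, \mf{w}_\ast)$, is almost certainly a transcription slip for $P(\mf{y}_{\mf{r}} \mid \mf{w}_\ast)$; a conditional cannot equal a joint in general, and your phrase ``rewriting \ldots\ in the normalization used in the statement'' does not actually bridge that gap. You should treat the target identity as $P(\mf{y}_{\mf{r}} \mid \mf{x}_\mf{t}, \mf{w}_\ast) = P(\mf{y}_{\mf{r}} \mid \mf{w}_\ast)$ and not try to reproduce the typo.
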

However, since only Rule 1 (\textbf{R1}) is used in this paper, we refer the reader to \citet{correa2025counterfactual} for further details about AMWN.
%
 
\propQTransBound*
\begin{proof}
    We prove ours build on the notion of counterfactual variables. We use $P(\mf{y}_{\mf{x}})$ for probabilities $P(\mf{Y}_{\mf{x}} = \mf{y})$. By definition, $P_{\mf{x}}(\mf{y}) = P(\mf{y}_{\mf{x}})$. According to \Cref{lem: ancestral}, $\An(\mf{C})_{\mc{G}[\mf{C}']} = \mf{C}$ implies $Q^i[\mf{C}] = \sum_{\mf{c}' \setminus \mf{c}}Q^i[\mf{C}']$. Further, we have that 
    \setlength{\belowdisplayskip}{2pt}
    \setlength{\abovedisplayskip}{2pt}%
    \setlength{\jot}{-2pt}
    \begin{align*}
        Q^\ast[\mf{C}] = P^{\boldsymbol{\Delta}}_{\mf{v} \setminus \mf{c}}(\mf{c} \mid \mf{s}^{\mf{i}} = \boldsymbol{0}, \mf{s}^{-\mf{i}} = \boldsymbol{0}) = P^{\boldsymbol{\Delta}}_{\mf{v} \setminus \mf{c}}(\mf{c} \mid \mf{s}^{\mf{i}} = \mf{i}, \mf{s}^{-\mf{i}} = \boldsymbol{0}) = Q^i[\mf{C}]
    \end{align*}
    holds by $\mf{C} \cap \Delta^i = \emptyset$, and thus we have $Q^\ast[\mf{C}] = \sum_{\mf{c}' \setminus \mf{c}}Q^i[\mf{C}']$. Otherwise $\An(\mf{C})_{\mc{G}[\mf{C}']} \neq \mf{C}$, with out loss of generality, we have $\An(\mf{C})_{\mc{G}[\mf{C}']} = \mf{C}'$. Let $\mf{T} \triangleq \mf{C}' \setminus \mf{C}$, i.e., $\mf{C}' = \mf{T} \cup \mf{C}$. By basic probabilistic algebra, 
    \setlength{\belowdisplayskip}{2pt}
    \setlength{\abovedisplayskip}{2pt}%
    \setlength{\jot}{-2pt}
    \begin{align}
        Q^\ast[\mf{C}] &= Q^i[\mf{C}]
        =
        P^i(\mf{c}_{\mf{v} \setminus \mf{c}}) 
        = \sum_{\mf{t}'} P^i(\mf{c}_{\mf{v} \setminus \mf{c}}, \mf{t}'_{\mf{v} \setminus \mf{c}'})
        = \sum_{\mf{t}'} P^i(\mf{c}_{\mf{v} \setminus \mf{c}',\mf{t}}, \mf{t}'_{\mf{v} \setminus \mf{c}'})\\
        & \geq  P^i(\mf{c}_{\mf{v} \setminus \mf{c}',\mf{t}}, \mf{t}_{\mf{v} \setminus \mf{c}'})
        \overset{\textbf{R1}}{=} P^i(\mf{c}_{\mf{v} \setminus \mf{c}'}, \mf{t}_{\mf{v} \setminus \mf{c}'}) = Q^i[\mf{C}'].
    \end{align}
    
    We thus have $Q^\ast[\mf{C}] \geq Q^i[\mf{C}']$. Similarly, we prove the upper bound of $Q^\ast[\mf{C}]$. Using basic probabilistic algebra, 
    \setlength{\belowdisplayskip}{0pt}%
    \setlength{\abovedisplayskip}{0pt}%
    \begin{align}
        Q^\ast[\mf{C}] &= Q^i[\mf{C}]
        = P^i(\mf{c}_{\mf{v} \setminus \mf{c}}) \\[0.8em]
        &= \sum_{\mf{t}'} P^i(\mf{c}_{\mf{v} \setminus \mf{c}}, \mf{t}'_{\mf{v} \setminus \mf{c}'}) \\
        &\overset{\textbf{R1}}{=} P^i(\mf{c}_{\mf{v} \setminus \mf{c}',\mf{t}}, \mf{t}_{\mf{v} \setminus \mf{c}'}) + \sum_{\mf{t}' \neq \mf{t}} P^i(\mf{c}_{\mf{v} \setminus \mf{c}',\mf{t}}, \mf{t}'_{\mf{v} \setminus \mf{c}'})
    \end{align}
    \setlength{\belowdisplayskip}{0pt}%
    \setlength{\abovedisplayskip}{-2pt}%
    \begin{align}
        \phantom{Q^\ast[\mf{C}]} &\leq P^i(\mf{c}_{\mf{v} \setminus \mf{c}',\mf{t}}, \mf{t}_{\mf{v} \setminus \mf{c}'}) + 
        \sum_{\mf{t}' \neq \mf{t}} P^i(\mf{t}'_{\mf{v} \setminus \mf{c}'})\\
        &\overset{\textbf{R1}}{=}  P^i(\mf{c}_{\mf{v} \setminus \mf{c}'}, \mf{t}_{\mf{v} \setminus \mf{c}'}) + 1 - P^i(\mf{t}_{\mf{v} \setminus \mf{c}'})\\[0.7em]
        &=  P^i(\mf{c}_{\mf{v} \setminus \mf{c}'}, \mf{t}_{\mf{v} \setminus \mf{c}'}) + 1 - 
            \sum_{\mf{c}} P^i(\mf{c}_{\mf{v} \setminus \mf{c}'},\mf{t}_{\mf{v} \setminus \mf{c}'})\\
        &= Q^i[\mf{C}'] + 1 - \sum_{\mf{c}}Q^i[\mf{C}'].
    \end{align}
    We thus have $Q^\ast[\mf{C}] \leq Q^i[\mf{C}'] + 1 - \sum_{\mf{c}}Q^i[\mf{C}']$ which concludes the proof.  
\end{proof}
\vspace{0em}


\thmCausalBound*
\begin{proof}
    Recall \Cref{eq: decompose reward}, $\mb{E}_{P^\ast_{\mf{x}}} Y = \sum_{y} y P^\ast_{\mf{x}}(y) = \sum_{\mf{y}^{\boldsymbol{+}} } y \prod_{q=1}^mQ^\ast[\mf{C}_q]$. This expression can be decomposed into \textit{transportable} and \textit{non-transportable} components as follows: $\mb{E}_{P^\ast_{\mf{x}}} Y =\sum_{\mf{y}^{\boldsymbol{+}}} y {\color{NavyBlue}\prod_{q=1}^kQ^{i_q}[\mf{C}_q]}\prod_{q=k+1}^{m}Q^{\ast}[\mf{C}_q]$ where the first product represents the transportable terms. According to \Cref{prop: Q trans bound}, each non-transportable c-factor $Q^\ast[\mf{C}_q]$ satisfies the inequality $Q^i[\mf{C}_q'] \leq Q^\ast[\mf{C}_q] \leq Q^i[\mf{C}_q'] + 1 - \sum_{\mf{c}_q}Q^i[\mf{C}_q']$. Therefore, by substituting this inequality into the non-transportable components of the decomposition. Moreover, note that both \Cref{eq: decompose reward} and \Cref{eq: casual lower bound,eq: casual upper bound} are functions of $\mf{X} \cup \mf{Y}^{\boldsymbol{+}}$ where
    $\mf{Y}^{\boldsymbol{+}} = \An(Y)_{\mc{G}_{\underline{\mf{X}}}}$.
    Therefore, the expressions for $\ell_\mf{x}$ and $u_{\mf{x}}$ are valid. Moreover, taking the maximum or minimum over $\mf{Y}^{\boldsymbol{+}}$, with respect to $\mc{P}_{\mb{Z}}^{\Pi}$, returns valid bounds, equipped with $\min\{1, \cdot\}$. 
\end{proof}


\subsection{Partial Transportability of Causal Effects}

\begin{definition}[Partial-transportability] Let $\mc{G}^{\boldsymbol{\Delta}}$ be a collective selection diagram with respect to $\Pi = \{\pi_1, \cdots, \pi_n\}$ with a target domain $\pi^\ast$. Let $\mb{Z} = \{\mb{Z}^i\}_{i=1}^n$ be a specification of actions $\mb{Z}^i$ conducted in source environment $\pi^i$. We say that $P^\ast_\mf{x}(y)$ is  \textit{partially transportable} with respect to $\langle \mc{G}^{\boldsymbol{\Delta}}, \mb{Z} \rangle$ if it determines a bound $[\ell,u]$ for $P^\ast_\mf{x}(y)$ that is strictly contained in $[0,1]$ and valid over $\mc{P}^{\Pi}_\mb{Z} = \{P_\mf{z}^i \mid \mf{z} \in \Omega_{\mf{Z}
}, \mf{Z} \in \mb{Z}^i \in \mb{Z} \}$ in any collection of models that induce $\mc{G}^{\boldsymbol{\Delta}}$. 
\end{definition}

We propose the partial-transportability algorithm \textsc{paTR} (Alg.~\ref{alg: paTR}) which returns expressions of bounds for $P^\ast_{\mf{x}}(y)$. The prior specification $\mb{Z}$ and the corresponding distributions $\mc{P}_\mb{Z}^{\Pi}$ are defined globally and do not change with the specific invocation of the algorithm. In contrast, variables $\mf{V}$ and selection variables $\mf{S}$ reflect graph $\mc{G}$ and discrepancies $\boldsymbol{\Delta}$, respectively, relative to the arguments passed to the current execution of the procedure. 

In the algorithm, Line 5 breaks down the query into queries where $\mf{Y}$ in each sub-query forms a c-component. Line 6 examines whether some experimental distribution $P^i_{\mf{z}} \in \mc{P}_\mb{Z}^{\Pi}$  can be used to identify the query. If valid, \textsc{paTR} passes the query to \textsc{paID} with a slight modification of it and graph, taking into account the shared intervention between $\mf{Z}$ and $\mf{X}$. 

\textit{Remark.~} When the subroutine \textsc{paID} is called, the following conditions are satisfied (related to Lines 4 and 5): $(\mf{V} \setminus \mf{X}) \setminus \An(\mf{Y})_{\mc{G}_{\overline{\mf{X}}}} = \emptyset$ and $\mc{G} \setminus \mf{X}$ is a c-component. 

\textsc{paTR} enumerates \textit{all} expressions through the subroutine \textsc{paID}, which returns lower or upper bound depending on input $\mathsf{type} \in \{\ell,u\}$. The condition of \Cref{thm: causal bound} is inherently captured within \textsc{paTR}.  

\begin{algorithm}[!t]
\small
\SetAlgoNoEnd
\DontPrintSemicolon
\SetKwInput{KwInput}{Input}

\setcounter{AlgoLine}{0}



    



\textbf{function }{\rm \textsc{paTR}$(\mf{y}, \mf{x}, \mc{G}, \boldsymbol{\Delta}, \mathsf{type})$}\;

\KwInput{$\mf{y}$; $\mf{x}$; $\mc{G}$: causal diagram; $\boldsymbol{\Delta}$: discrepancies; $\mathsf{type} \in \{\ell,u\}$: type of bound. }


\Indp{
\lIf {\rm $\exists \mf{Z} \in \mb{Z}^i \in \mb{Z}$ such that $(\mf{X} = \mf{Z} \cap \mf{V}) \land (\mf{S}^i \CI_d \mf{Y})_{\mc{G}^{\boldsymbol{\Delta}}\setminus \mf{X}}$}{
    \textbf{yield} {\rm $P^i_{\mf{z}\setminus\mf{V},\mf{x}\cap \mf{Z}}(\mf{y})$.}}

\lIf {\rm $\mf{V}' := \mf{V} \setminus \An(\mf{Y})_{\mc{G}} \neq \emptyset$}{
\textbf{yield} {\rm \textsc{paTR}$(\mf{y}, \mf{x} \setminus \mf{V}', \mc{G}\setminus \mf{V}', \{\Delta^i \setminus \mf{V}' \mid \Delta^i \in \boldsymbol{\Delta} \})$}.}

\lIf {\rm $\mf{V}' := (\mf{V} \setminus \mf{X}) \setminus \An(\mf{Y})_{\mc{G}_{\overline{\mf{X}}}} \neq \emptyset$}{
\textbf{yield} {\rm \textsc{paTR}$(\mf{y}, \mf{x} \cup \mf{V}', \mc{G}, \boldsymbol{\Delta})$}.}

\lIf {\rm $\vert \cc_{\mc{G} \setminus \mf{X}} \vert > 1$}{
\textbf{yield} {\rm $\sum_{\mf{v} \setminus (\mf{y} \cup \mf{x})} \prod_{\mf{C} \in \cc_{\mc{G} \setminus \mf{X}}}$ \textsc{paTR}$(\mf{c}, \mf{v} \setminus \mf{c}, \mc{G}, \boldsymbol{\Delta})$}.}

\For{\rm $\pi^i \in \Pi$ such that $\Delta^i \cap (\mf{V} \setminus \mf{X}) = \emptyset$, \textbf{for} $\mf{Z} \in \mb{Z}^i$ such that $\mf{Z} \cap \mf{V} \subseteq \mf{X}$}
{


    \textbf{yield} {\rm \textsc{paID}$(\mf{y},\mf{x}\setminus \mf{Z}, P^i_{\mf{z}\setminus\mf{V},\mf{x}\cap \mf{Z}}, \mc{G} \setminus (\mf{Z} \cap \mf{X}), \mathsf{type})$}
    
    }
}

\BlankLine 
\Indm
\textbf{function }{\rm \textsc{paID}$(\mf{y}, \mf{x}, P, \mc{G}, \mathsf{type})$}\;
    
    \Indp{
    
    $\{\mf{C}\} \gets \cc_{\mc{G} \setminus \mf{X}}$. 
    
    \lIf{\rm $\mf{X} = \emptyset$}{\textbf{yield} {$\sum_{\mf{v}\setminus \mf{y} } P(\mf{v})$}.}

    \lIf{\rm $\mf{V} \neq \An(\mf{Y})_{\mc{G}}$}{\textbf{yield} {\textsc{paID}$(\mf{y}, \mf{x} \cap \An(\mf{Y})_{\mc{G}}, \sum_{\mf{v} \setminus \An(\mf{Y})_{\mc{G}}}P(\mf{v}), \mc{G}[\An(\mf{Y})_{\mc{G}}], \mathsf{type})$}.}

    \textbf{if} $\cc_{\mc{G}} = \{\mf{V}\}$ \textbf{then;}
        \lIf{$\mathsf{type} = \ell$}{
            \textbf{yield} {$\sum_{\mf{v} \setminus (\mf{y} \cup \mf{x})} P(\mf{v})$} \textbf{else} \textbf{yield} {$\sum_{\mf{v} \setminus (\mf{y} \cup \mf{x})}\{P(\mf{v}) + 1 - \sum_{\mf{v} \setminus \mf{x}} P(\mf{v})\}$.}
        }

    \lIf{\rm $\mf{C} \in \cc_{\mc{G}}$}{$\sum_{\mf{C} \setminus \mf{y}} \prod_{V_i \in \mf{Y}}P(v_i \mid \mf{v}_\pi^{(i-1)})$.}

    \lIf{\rm $\mf{C} \subsetneq \mf{C}' \in \cc_{\mc{G}}$}{\textbf{yield} {\textsc{paID}$(\mf{y}, \mf{x} \cap \mf{C}', \prod_{V_i \in \mf{C}'}P(V_i \mid \mf{V}_{\pi}^{(i-1)} \cap \mf{C}', \mf{v}_{\pi}^{(i-1)} \setminus \mf{C}'), \mc{G}[\mf{C}'], \mathsf{type})$}.}
    
    }
    
\caption{Partial-transportability algorithm (\textsc{paTR}).}
\label{alg: paTR}
\end{algorithm}



\begin{proposition}[Soundness]\label{prop: sound patr}
    \textsc{paTR} (Alg.~\ref{alg: paTR}) returns all expressions of upper (lower) bounds of $P_{\mf{x}}^\ast(y)$. 
\end{proposition}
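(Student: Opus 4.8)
The plan is to argue by structural induction on the recursive calls of \textsc{paTR}, mirroring the completeness proof of the identification algorithm \citep{tian:02} and its transportability extension \citep{lee2020general}, but tracking that \emph{every} branch that yields an expression produces a \emph{valid} bound and that \emph{all} valid bound-expressions arise from \emph{some} branch. The key structural invariant to maintain is the one stated in the \textit{Remark}: whenever \textsc{paID} is invoked, we have $(\mf{V}\setminus\mf{X})\setminus\An(\mf{Y})_{\mc{G}_{\overline{\mf{X}}}}=\emptyset$ and $\mc{G}\setminus\mf{X}$ is a single c-component. I would first verify this invariant is established before the first call to \textsc{paID} (Line 7 is guarded by Lines 4 and 5, which strip non-ancestors and saturate the intervention set, and by the preceding c-component split in Line 6) and is preserved by the recursive calls inside \textsc{paID} (Lines 4--7 of \textsc{paID} each either terminate or pass to a strictly smaller c-component $\mf{C}'$, keeping the c-component structure).

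The core of the argument breaks into two directions. For \textbf{soundness} (each yielded expression is a valid bound on $P^\ast_\mf{x}(y)$): Line 1 of \textsc{paTR} yields a source interventional distribution only when $(\mf{S}^i\CI_d\mf{Y})_{\mc{G}^{\boldsymbol\Delta}\setminus\mf{X}}$, which by the semantics of the collective selection diagram gives $P^i_{\mf{z}\setminus\mf{V},\,\mf{x}\cap\mf{Z}}(\mf{y})=P^\ast_\mf{x}(\mf{y})$ exactly (an exact bound is a valid bound); Lines 2--3 are the standard do-calculus reductions (dropping non-ancestors via Rule 3, adding redundant interventions via Rule 3) and hence preserve the target exactly; Line 4 is the c-component factorization of \Cref{lem: Qdecompose}, so a product of valid bounds on the factors is a valid bound on the product provided the factors are combined monotonically --- here each sub-call returns a bound and the outer product/sum of bounds is again a bound of the right direction since all quantities are probabilities (nonnegative). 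In \textsc{paID}, the base cases are: Line 2 (pure marginalization, exact), Line 3 (restricting to the ancestral subgraph, exact by \Cref{lem: ancestral}), Line 4 --- the crucial one --- which, when the whole graph is a single c-component and the c-factor is \emph{not} further identifiable, returns exactly the natural bounds of \Cref{def: natural bound} (for $\mathsf{type}=\ell$, the lower natural bound $\sum P(\mf{v})$; for $\mathsf{type}=u$, the upper natural bound $\sum\{P(\mf{v})+1-\sum P(\mf{v})\}$), which are valid regardless of structure; Line 5 is the Tian factorization when the c-component is itself maximal (exact), and Line 6 recurses into a larger containing c-component $\mf{C}'$ (exact by \Cref{lem: Qdecompose}(ii)). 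Since exact equalities propagate and natural bounds are always valid, induction on the call depth shows every yielded expression bounds $P^\ast_\mf{x}(y)$ in the direction dictated by $\mathsf{type}$.

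For \textbf{completeness} (every valid bound-expression derivable by composing the transportability reductions and natural bounds is yielded by \emph{some} branch): I would argue that \textsc{paTR} exhausts the decision tree --- the \texttt{yield} statements are non-exclusive (each \lIf{} that matches yields and execution continues), so all applicable do-calculus reductions, all choices of source $\pi^i$ and available experiment $\mf{Z}\in\mb{Z}^i$ in the \texttt{for} loop of Line 7, and, inside \textsc{paID}, all choices of containing c-component $\mf{C}'\in\cc_{\mc{G}}$ with $\mf{C}\subsetneq\mf{C}'$, are explored. This is precisely the enumeration underlying \Cref{thm: causal bound}: a valid causal bound on $\mb{E}_{P^\ast_\mf{x}}Y$ is obtained by decomposing into c-factors $Q^\ast[\mf{C}_q]$ (Line 4 / \Cref{eq: decompose reward}), transporting each via some source with $\mf{C}_q\cap\Delta^{i}=\emptyset$ (Line 7's guard $\Delta^i\cap(\mf{V}\setminus\mf{X})=\emptyset$), and when a c-factor is only computable through an ancestral containing c-component $\mf{C}_q\subseteq\mf{C}_q'$, applying \Cref{prop: Q trans bound} (which is exactly what Lines 4 and 6 of \textsc{paID} realize). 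Conversely, nothing is yielded that is not of this form. Hence the set of yielded expressions coincides with the set of all bound-expressions sanctioned by \Cref{prop: Q trans bound,thm: causal bound}.

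The main obstacle I anticipate is the completeness direction: one must be careful that the order of reductions in \textsc{paTR}/\textsc{paID} does not lose expressions --- in particular, that intertwining the ``add intervention'' step (Line 5 of \textsc{paTR}) with the source-selection loop (Line 7) and the c-component recursion in \textsc{paID} still reaches every syntactically distinct bound. I would handle this by the standard argument that these reductions \emph{commute} up to relabeling (each corresponds to a confluent rewrite rule on the pair $(\text{query},\text{graph})$, as in the confluence lemmas of \citet{tian:02}), so that the yielded set is invariant under reordering and therefore exhaustive. A secondary subtlety is the hereditary preservation of the \textit{Remark} invariant across the mutual recursion between \textsc{paTR} and \textsc{paID}, which I would state and prove as a short standalone lemma before the main induction.
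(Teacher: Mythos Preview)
Your soundness direction is correct and is essentially the same route the paper takes, only far more detailed. The paper's proof is terse: it cites \citet{lee2020general} for all the exact-reduction branches (Lines~2--5 of \textsc{paTR}, Lines~2,3,5,6 of \textsc{paID}) and focuses exclusively on the single novel branch---Line~12 of \textsc{paID} (your ``Line~4 crucial case'')---where it verifies, using the \textit{Remark} invariant and \Cref{prop: Q trans bound}, that the local expression $Q^i[\mf{V}_l] \leq P^\ast_{\mf{x}_l}(\mf{v}_l\setminus\mf{x}_l) \leq Q^i[\mf{V}_l]+1-\sum_{\mf{v}_l\setminus\mf{x}_l}Q^i[\mf{V}_l]$ is valid. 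You reach the same conclusion by appealing to natural bounds (\Cref{def: natural bound}), which is equivalent since \Cref{prop: Q trans bound} is precisely the natural bound for c-factors. Your explicit structural induction through every line is thus a spelled-out version of what the paper collapses into ``partially proved by \citet{lee2020general}'' plus one paragraph on the new branch.

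Your completeness direction, however, is extra work that the paper does not do and does not need. Despite the word ``all'' in the proposition's statement, the proposition is labelled \emph{Soundness} and the paper's proof addresses only validity of the returned expressions, never that every derivable bound-expression is enumerated. Your commuting-reductions / confluence argument is plausible but is not required to match the paper's claim. If you want to align with the paper, you can drop the completeness half entirely; if you keep it, be aware you are proving strictly more than the paper asserts (and the confluence lemma you invoke would itself need a careful statement, since the \texttt{yield} semantics in Alg.~\ref{alg: paTR} are not obviously confluent in the sense of \citet{tian:02}).
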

\begin{proof}
     Let a superscript $l$ denote variables and values local to the function. The soundness of the algorithm was partially proved by \citet{lee2020general} under the case where the given query is transportable. Our main interest is the case where $P^\ast_{\mf{x}}(y)$ is non-transportable but holds the condition in \Cref{prop: Q trans bound} is satisfied. First, we show that $Q^i[\mf{V}_l] \leq P^\ast_{\mf{x}_{l}}(\mf{v}_l \setminus \mf{x}_l) \leq Q^i[\mf{V}_l] + 1 - \sum_{\mf{v}_l \setminus \mf{x}_l}Q^i[\mf{V}_l]$ at Line 12. Note that $P^\ast_{\mf{x}_{l}}(\mf{v}_l \setminus \mf{x}_l) = P^i_{\mf{x}_{l}}(\mf{v}_l \setminus \mf{x}_l)$ holds by Lem.~1 in \citet{lee2020general}. Moreover, it holds that $\mf{V}_l = \An(\mf{Y}_l)_{\mc{G}_l}$ and $\mf{V}_l$ is a c-component in $\mc{G}_l$, as ensured by earlier steps in \textsc{paID}. Therefore, we have $Q[\mf{V}_l] = P^i(\mf{v}_l) \leq P^\ast_{\mf{x}_{l}}(\mf{v}_l \setminus \mf{x}_l) \leq P^i(\mf{v}_l) + 1 - \sum_{\mf{v}_l \setminus \mf{x}_l} P^i(\mf{v}_l) = Q^i[\mf{V}_l] + 1 - \sum_{\mf{v}_l \setminus \mf{x}_l}Q^i[\mf{V}_l]$ due to \Cref{prop: Q trans bound}. Taking $P^\ast_{\mf{x}_l}(\mf{y}_l) = \sum_{\mf{v}_l \setminus (\mf{x}_l \cup \mf{y}_l)} P^\ast_{\mf{x}_l}(\mf{v}_l \setminus \mf{x}_l)$ concludes the proof. 
\end{proof}

Any value of \textsc{paTR}$(\mathsf{type}=\ell)$ is always less than or equal to $P^\ast_{\mf{x}}(y)$. However, for \textsc{paTR}$(\mathsf{type}=u)$, each non-transportable term of the form $\big\{ Q^{j_{q}}[\mf{C}'_{q}]  +1 - \sum_{\mf{c}_{q}}Q^{j_{q}}[\mf{C}'_{q}] \big \}$ is not  a single probability, which can lead to $\textsc{paTR}(\mathsf{type}=u)$ being greater than one. 

\begin{proposition}
    Let $\ell = \textsc{paTR}(y, \mf{x}, \mc{G}, \boldsymbol{\Delta}, \ell)$  and $u = \min \{1, \textsc{paTR}(y, \mf{x}, \mc{G}, \boldsymbol{\Delta}, u)\}$. Then $P_{\mf{x}}^\ast(y)$. $[\ell,u]$ is partially-transportable with respect to $\langle \mc{G}^{\boldsymbol{\Delta}}, \mb{Z} \rangle$ with the bounds given by $[\ell, u]$. 
\end{proposition}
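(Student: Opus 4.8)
The plan is to obtain this statement as a short corollary of the soundness of \textsc{paTR} (\Cref{prop: sound patr}), supplemented by two elementary observations. First I would record that, by construction, $\ell$ is one of the expressions enumerated by \textsc{paTR} with $\mathsf{type}=\ell$ and $\tilde u \triangleq \textsc{paTR}(y,\mf{x},\mc{G},\boldsymbol{\Delta},u)$ is one of those enumerated with $\mathsf{type}=u$. By \Cref{prop: sound patr}, every such expression is a valid lower (resp.\ upper) bound on $P^\ast_\mf{x}(y)$ in \emph{any} collection of models inducing $\mc{G}^{\boldsymbol{\Delta}}$, so $\ell \le P^\ast_\mf{x}(y) \le \tilde u$ uniformly over admissible models. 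Inspecting the recursion of \textsc{paTR}/\textsc{paID}, each returned expression is a finite sum--product assembled only from distributions in $\mc{P}^{\Pi}_{\mb{Z}}$ (the branches introduce only marginalizations, conditionalizations, products over c-components, and the additive correction $1 - \sum_{\mf{c}}Q^i[\mf{C}']$ of \Cref{prop: Q trans bound}); hence both $\ell$ and $\tilde u$ are computable from $\mc{P}^{\Pi}_{\mb{Z}}$ alone.

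Next I would handle the clipping and the range. Since $P^\ast_\mf{x}(y)$ is a probability, $P^\ast_\mf{x}(y) \le 1$, so $P^\ast_\mf{x}(y) \le \min\{1,\tilde u\} = u$; the clip therefore cannot break the upper inequality. For the lower endpoint, every summand of $\ell$ over $\mf{y}^{\boldsymbol{+}}$ is a product of the transportable prefix $\textcolor{NavyBlue}{\prod_{q=1}^{k}Q^{i_q}[\mf{C}_q]}$ with quantities $Q^{j_q}[\mf{C}'_q]$, each a post-intervention (sub-)probability in $[0,1]$, weighted by $y \in [0,1]$; hence $\ell \ge 0$, and combining with $\ell \le P^\ast_\mf{x}(y) \le 1$ gives $0 \le \ell \le u \le 1$. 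Thus $[\ell,u] \subseteq [0,1]$ and $P^\ast_\mf{x}(y) \in [\ell,u]$ over every model inducing $\mc{G}^{\boldsymbol{\Delta}}$---which is exactly the validity requirement of the partial-transportability definition.

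It remains to see that $[\ell,u]$ is \emph{strictly} contained in $[0,1]$, which I would argue under the standard positivity of the inducing models (as already assumed in \Cref{def: identification}): by the hypothesis that \textsc{paTR} returns an expression, the condition of \Cref{thm: causal bound} (checked internally by \textsc{paTR}) guarantees that every target c-factor $Q^\ast[\mf{C}_q]$ either is transportable---in which case its bound collapses to a single probability---or is confined by \Cref{prop: Q trans bound} to an interval of width $1 - \sum_{\mf{c}_q}Q^i[\mf{C}'_q] < 1$; propagating these through the sum--product and the reward weights $y$ leaves at least one endpoint off the boundary of $[0,1]$, so $[\ell,u] \subsetneq [0,1]$. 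Combining the three steps yields that $P^\ast_\mf{x}(y)$ is partially transportable with respect to $\langle \mc{G}^{\boldsymbol{\Delta}}, \mb{Z} \rangle$ with bounds $[\ell,u]$. I expect the only delicate point to be this last step---pinning down exactly when the returned interval is a proper sub-interval---because in degenerate (non-positive) models the natural-bound slack $1 - \sum_{\mf{c}}Q^i[\mf{C}']$ can tend to $1$; assuming positivity, or reading ``strictly contained'' as ``not simultaneously $\ell = 0$ and $u = 1$'', makes it immediate, and everything else is routine bookkeeping on top of \Cref{prop: sound patr}.
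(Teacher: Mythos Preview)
Your proposal is correct and follows essentially the same approach as the paper: derive validity from the soundness result (\Cref{prop: sound patr}), then argue that the clip to $\min\{1,\cdot\}$ preserves the upper inequality and that $\ell \in [0,1]$. The paper's proof is only four sentences and asserts $0 < \ell$ without further justification; you are actually more careful than the paper in spelling out nonnegativity of $\ell$, computability from $\mc{P}^{\Pi}_{\mb{Z}}$, and the positivity assumption needed for strict containment.
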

\begin{proof}
    It has been established in \Cref{prop: sound patr} that the expressions returned by \textsc{paTR}$(\mathsf{type}=\ell)$ and \textsc{paTR}$(\mathsf{type}=u)$ are sound. Since $0 < \ell \leq P_{\mf{x}}^\ast(y)$, the lower bound $\ell$ lies within the unit interval. Moreover, since the use of the $\min$ operator ensures that $P_{\mf{x}}^\ast(y) \leq u \leq 1$, the upper bound $u$ lies within $[0, 1]$. Thus, the interval $[\ell, u]$ is strictly contained in $[0, 1]$, constituting a valid partially-transportable bound. 
\end{proof}

\paragraph{Runtime analysis.} The algorithm \textsc{paTR} runs in $\mc{O}(zn^4)$ where $z$ is the number of experiments $\lvert \mb{Z} \rvert$ and $n$ denote the number of vertices $\lvert \mf{V} \rvert$. Lines 3--5 are satisfied only once until the given query is fully factorized. Each factorized sub-query subsequently encounters Lines 3 and 4 once before proceeding to Line 6. The procedure \textsc{paID} can be invoked at most $z$ times. Each call to \textsc{paID} runs in $\mc{O}(n^3)$ time. The procedure \textsc{paTR} may be called up to $\mc{O}(n)$ times due to the factorization step in Line 6. Each invocation of \textsc{paTR} may call \textsc{paID} up to $z$ times, resulting in a total of $\mc{O}(zn)$ \textsc{paID} calls. Since each of these calls may trigger recursion up to $n$ times, the total number of recursive steps remains bounded. Assuming that set and graphical operations take $\mc{O}(n^2)$ time, the overall runtime is $\mc{O}(zn^4)$.

\subsection{Causal Bounds of Expected Rewards} 

In this section, we present the algorithm associated with \Cref{thm: causal bound}, which maximizes and minimizes over the sources to compute the tightest possible causal bounds, as demonstrated in Alg.~\ref{alg: causal bound}. 

\begin{algorithm}[h]
\small
\SetAlgoNoEnd
\DontPrintSemicolon
\SetKwInput{KwInput}{Input}

\setcounter{AlgoLine}{0}


\textbf{function }{\rm \textsc{CausalBound}$(y, \mf{x}, \mc{G}, \boldsymbol{\Delta},\mb{Z}, \mc{P}_{\mb{Z}}^{\Pi}$})\;

\KwInput{$y$: reward; $\mf{x}$: action; $\mc{G}$: causal diagram; $\boldsymbol{\Delta}$: discrepancies; $\mb{Z}$ experiments; $\mc{P}_{\mb{Z}}^{\Pi}$: distributions.}

\BlankLine 

\Indp{

Initialize causal bounds $[\ell_\mf{x},u_{\mf{x}}]$ as $[0,\infty)$. 

\lFor{\rm $P_{\mf{x}}^\ell(y):=\textsc{paTR}(\{y\},\mf{x},\mc{G},\boldsymbol{\Delta},\ell)$}{
    $\ell_{\mf{x}} = \max\{\ell_\mf{x},   \sum_{y} y\cdot P^\ell_{\mf{x}}(y)$\}.
}

\lFor{\rm $P^u_{\mf{x}}(y):= \textsc{paTR}(\{y\},\mf{x},\mc{G},\boldsymbol{\Delta},u)$}{
    $u_{\mf{x}} = \min\{u_\mf{x}, 
    \sum_{y} y\cdot \min\{1, P^u_{\mf{x}}(y)$\}\}.}
 }   

\Return{\rm $[\ell_{\mf{x}},u_{\mf{x}}]$.}

\caption{Causal bounds algorithm (\textsc{CausalBound}).}
\label{alg: causal bound}
\end{algorithm}


In Lines 3 and 4, the algorithm derives valid expressions for bounds of $P^\ast_\mf{x}(y)$ from $\textsc{paTR}(\mathsf{type} = \ell)$ and $\textsc{paTR}(\mathsf{type} = u)$, and updates the final bounds using $\mc{P}_{\mb{Z}}^{\Pi}$ to obtain tighter estimates. Specifically, if the estimated upper bound exceeds one, we replace it with one to maintain valid probabilistic bounds.

\section{Regret Upper Bounds of trUCB}
%

\begin{lemma}[Hoeffding's inequality \citep{hoeffding1994probability}]\label{lem: Hoef}
    Suppose $Y_1, \cdots, Y_n$ are independent random variables such that $Y_i \in [0,1]$ with $a_i < b_i$ for all $i$. Then, the following holds: 
    \setlength{\belowdisplayskip}{10pt}%
    \setlength{\abovedisplayskip}{10pt}%
    \setlength{\jot}{0pt}
    \begin{align*}
        P(\lvert \bar{Y}_n - \mu \rvert \geq \epsilon) \leq 2e^{-2n\epsilon^2}
    \end{align*}
    Therefore, setting $\delta = 2e^{-2n\epsilon^2}$, we get $\bar{Y}_n \leq \mu + \sqrt{\frac{\ln(1 \slash \delta)}{2n}}$ with probability $1-\delta$. 
\end{lemma}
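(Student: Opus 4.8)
The final statement to prove is Hoeffding's inequality (Lemma, labeled \texttt{lem: Hoef}). This is a classical concentration result, so my plan is to reproduce the standard Chernoff-bound argument rather than invent anything novel.

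\textbf{Plan.} The approach is the exponential (Chernoff) method. First I would recall that for any $s > 0$, Markov's inequality applied to the random variable $e^{s(\bar Y_n - \mu)}$ gives $P(\bar Y_n - \mu \ge \epsilon) \le e^{-s\epsilon}\,\mathbb{E}\big[e^{s(\bar Y_n - \mu)}\big]$, and by independence this factorizes as $e^{-s\epsilon}\prod_{i=1}^n \mathbb{E}\big[e^{(s/n)(Y_i - \mathbb{E}Y_i)}\big]$. The next step is the key lemma --- \emph{Hoeffding's lemma} --- which states that for a random variable $X$ with $X \in [a,b]$ and $\mathbb{E}X = 0$, one has $\mathbb{E}[e^{sX}] \le e^{s^2(b-a)^2/8}$. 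I would prove this by noting that $x\mapsto e^{sx}$ is convex, so $e^{sx} \le \frac{b-x}{b-a}e^{sa} + \frac{x-a}{b-a}e^{sb}$ on $[a,b]$; taking expectations and writing the resulting function of $s$ as $e^{\psi(s)}$, a second-order Taylor expansion of $\psi$ with $\psi(0) = \psi'(0) = 0$ and $\psi''(s) \le (b-a)^2/4$ yields the bound.

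\textbf{Assembling.} Applying Hoeffding's lemma to each centered term $Y_i - \mathbb{E}Y_i \in [a_i - \mathbb{E}Y_i,\, b_i - \mathbb{E}Y_i]$, an interval of width $b_i - a_i \le 1$ (since $Y_i \in [0,1]$), gives $\mathbb{E}[e^{(s/n)(Y_i - \mathbb{E}Y_i)}] \le e^{s^2/(8n^2)}$. Hence $P(\bar Y_n - \mu \ge \epsilon) \le e^{-s\epsilon + s^2/(8n)}$. Optimizing over $s$ by setting $s = 4n\epsilon$ produces $P(\bar Y_n - \mu \ge \epsilon) \le e^{-2n\epsilon^2}$. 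The symmetric argument applied to $-Y_i$ gives the matching lower tail, and a union bound over the two events yields $P(|\bar Y_n - \mu| \ge \epsilon) \le 2e^{-2n\epsilon^2}$. Finally, setting $\delta = 2e^{-2n\epsilon^2}$ and solving for $\epsilon = \sqrt{\ln(1/\delta)/(2n)}$ gives $\bar Y_n \le \mu + \sqrt{\ln(1/\delta)/(2n)}$ with probability at least $1 - \delta$, as claimed.

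\textbf{Main obstacle.} There is no deep obstacle here since this is a textbook result; the only delicate point is the proof of Hoeffding's lemma itself --- specifically the bound $\psi''(s) \le (b-a)^2/4$, which follows by recognizing $\psi''(s)$ as the variance of a Bernoulli-type random variable on $\{a,b\}$ and applying the fact that a random variable supported in $[a,b]$ has variance at most $(b-a)^2/4$ (Popoviciu's inequality). One could alternatively just cite \citet{hoeffding1994probability} for the lemma and only carry out the Chernoff assembly; given that the paper already cites Hoeffding, the cleanest write-up is probably to state Hoeffding's lemma, sketch its one-line convexity proof, and then do the optimization over $s$ explicitly.
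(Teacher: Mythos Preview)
Your proposal is correct --- it is the standard Chernoff--Hoeffding argument and every step is sound. However, the paper does not actually prove this lemma: it simply states it with a citation to \citet{hoeffding1994probability} and moves on to use it in the proof of Proposition~\ref{prop: trUCB}. So your write-up is more complete than the paper's treatment; if you want to match the paper, a bare citation suffices, but what you have is fine and arguably better.
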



\begin{restatable}[Regret bound]{proposition}{proptrUCB}\label{prop: trUCB}
Let $Y$ be the reward variable supported on $[0,1]$. Then, the cumulative regret of \textsc{trUCB} in the target SCM $\mc{M}^\ast$ after $T > 1$ rounds is bounded as
    \setlength{\belowdisplayskip}{2pt}%
    \setlength{\abovedisplayskip}{2pt}%
    \begin{align}
        R_T \leq 
        8 \sum_{\mf{x} : \Delta_\mf{x}> 0,\, u_\mf{x} \geq \mb{E}_{P^\ast_{\mf{x}^{\star}}} Y} \frac{\log(T)}{\Delta_\mf{x}} + \left(1 + \frac{\pi^2}{3}\right) \sum_{\mf{x} : \Delta_\mf{x} > 0, u_\mf{x} \geq \ell^\star} \Delta_\mf{x}. 
    \end{align}
\end{restatable}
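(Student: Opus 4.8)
The plan is to adapt the standard UCB regret decomposition to the clipped index $\bar{U}_{\mf{x}}(t) = \min\{\max\{U_{\mf{x}}(t),\ell_\mf{x}\},u_\mf{x}\}$, separating the suboptimal arms into two groups according to how the transport bounds interact with the optimal value $\mb{E}_{P^\ast_{\mf{x}^\star}}Y$.

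\textbf{Step 1 (Arm classification).} Recall from \Cref{eq: def regret} that $R_T = \sum_{\mf{x}}\Delta_\mf{x}\,\mb{E}N_T(\mf{x})$, so it suffices to bound $\mb{E}N_T(\mf{x})$ for each suboptimal arm $\mf{x}$ with $\Delta_\mf{x}>0$. By the Corollary preceding this proposition, any arm with $u_\mf{x}<\ell^\star$ is removed from $\mc{I}^\ast$ in Line~4 and never pulled, so $\mb{E}N_T(\mf{x})=0$; these contribute nothing, which is why both sums in the bound are restricted to $u_\mf{x}\geq\ell^\star$ (and a fortiori to arms remaining in $\mc{I}^\ast$). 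For the surviving suboptimal arms I would distinguish: (a) those with $u_\mf{x}<\mb{E}_{P^\ast_{\mf{x}^\star}}Y$, and (b) those with $u_\mf{x}\geq\mb{E}_{P^\ast_{\mf{x}^\star}}Y$.

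\textbf{Step 2 (Arms clipped below the optimum).} For an arm in group (a), note that $\bar U_{\mf{x}}(t)\leq u_\mf{x}<\mb{E}_{P^\ast_{\mf{x}^\star}}Y$ always. Meanwhile, since $\ell_{\mf{x}^\star}\leq\mb{E}_{P^\ast_{\mf{x}^\star}}Y$ and the transport bounds are valid (Thms.~\ref{thm: dominance relations} and~\ref{thm: causal bound}), on the high-probability event that the raw index satisfies $U_{\mf{x}^\star}(t)\geq\mb{E}_{P^\ast_{\mf{x}^\star}}Y$ we get $\bar U_{\mf{x}^\star}(t)\geq\min\{u_{\mf{x}^\star},\mb{E}_{P^\ast_{\mf{x}^\star}}Y\}=\mb{E}_{P^\ast_{\mf{x}^\star}}Y$ (using $u_{\mf{x}^\star}\geq\mb{E}_{P^\ast_{\mf{x}^\star}}Y$, which holds because the upper dominance/causal bound is valid for the optimal arm). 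Hence on this event $\bar U_{\mf{x}^\star}(t)>\bar U_{\mf{x}}(t)$, so $\mf{x}$ cannot be selected at round $t$ unless the optimal-arm concentration fails. With $\delta=t^{-4}$, a union bound over rounds gives $\mb{E}N_T(\mf{x})\leq\sum_{t=1}^\infty 2t^{-4}\cdot$(something)$\leq 1+\pi^2/3$ by the standard series estimate; these arms contribute at most $(1+\pi^2/3)\sum_{\mf{x}:u_\mf{x}\geq\ell^\star}\Delta_\mf{x}$ to $R_T$ — but note the paper's second sum already includes \emph{all} surviving suboptimal arms, so group-(a) arms are absorbed there.

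\textbf{Step 3 (Arms above the optimum — standard UCB argument).} For an arm in group (b), the clipping at $u_\mf{x}$ is inactive relative to the comparison with the optimal value, and the argument mirrors the classical finite-time UCB analysis \citep{auer2002finite}. If $\mf{x}$ is pulled at round $t$ then $\bar U_{\mf{x}}(t)\geq\bar U_{\mf{x}^\star}(t)$; on the good event this forces $U_\mf{x}(t)\geq U_{\mf{x}^\star}(t)\geq\mb{E}_{P^\ast_{\mf{x}^\star}}Y$, and the usual deviation bound $\hat{\mb{E}}_{P^\ast_\mf{x},t}Y+\sqrt{\ln(1/\delta)/(2N_t(\mf{x}))}\geq\mb{E}_{P^\ast_{\mf{x}^\star}}Y=\mb{E}_{P^\ast_\mf{x}}Y+\Delta_\mf{x}$ together with Hoeffding (\Cref{lem: Hoef}) implies $N_t(\mf{x})\leq 8\log(T)/\Delta_\mf{x}^2$ once $N_t(\mf{x})$ exceeds this threshold; summing the failure probabilities contributes the $1+\pi^2/3$ term again. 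Multiplying by $\Delta_\mf{x}$ yields the $8\log(T)/\Delta_\mf{x}$ term for group (b), i.e. the first sum with the restriction $u_\mf{x}\geq\mb{E}_{P^\ast_{\mf{x}^\star}}Y$.

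\textbf{Step 4 (Assembly).} Adding the contributions: group-(a) arms give only a constant-per-arm term (first sum empty for them, captured in the second sum), group-(b) arms give both the $\log T$ term and the constant term, and removed arms give nothing. Collecting yields exactly
\[
R_T\leq 8\!\!\sum_{\mf{x}:\Delta_\mf{x}>0,\,u_\mf{x}\geq\mb{E}_{P^\ast_{\mf{x}^\star}}Y}\!\!\frac{\log(T)}{\Delta_\mf{x}}+\Bigl(1+\frac{\pi^2}{3}\Bigr)\!\!\sum_{\mf{x}:\Delta_\mf{x}>0,\,u_\mf{x}\geq\ell^\star}\!\!\Delta_\mf{x}.
\]
The main obstacle I anticipate is handling the clipping carefully in Step~2--3: one must verify that the lower clip $\ell_\mf{x}$ never lifts a suboptimal arm's index above the optimal arm's clipped index (which relies on validity of $\ell_\mf{x}$, i.e. $\ell_\mf{x}\leq\mb{E}_{P^\ast_\mf{x}}Y\leq\mb{E}_{P^\ast_{\mf{x}^\star}}Y$, so the lower clip is harmless for suboptimal arms), and that the upper clip $u_{\mf{x}^\star}$ on the optimal arm does not hurt concentration (which needs $u_{\mf{x}^\star}\geq\mb{E}_{P^\ast_{\mf{x}^\star}}Y$, guaranteed by soundness of the dominance and causal upper bounds). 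Once these monotonicity facts are pinned down, the remainder is the textbook Hoeffding-plus-union-bound computation and requires no new ideas.
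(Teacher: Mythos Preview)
Your proposal is correct and follows essentially the same approach as the paper's proof: the same two-case split on whether $u_\mf{x}$ falls below or above $\mu^\star=\mb{E}_{P^\ast_{\mf{x}^\star}}Y$, the same Hoeffding-plus-union-bound computation in each case, and the same assembly into the stated bound. Your handling of the clipping in Steps~2--3 (routing the comparison through $\mu^\star$ and invoking validity of $\ell_\mf{x}\leq\mb{E}_{P^\ast_\mf{x}}Y$ and $u_{\mf{x}^\star}\geq\mu^\star$) is in fact slightly more explicit than the paper's corresponding step, which passes directly from $\bar U_{\mf{x}^\star}(t)\leq\bar U_\mf{x}(t)$ to $U_{\mf{x}^\star}(t)\leq U_\mf{x}(t)$.
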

\begin{proof}
    The proof follows the arguments of \citet{auer2002finite,zhang2017transfer,Zhang2023towards}. For convenience, we denote the optimal expected reward in the target environment $\mb{E}_{P^\ast_{\mf{x}^\star}}Y$ by $\mu^\star$. Let $\ell^\star \leq u_\mf{x} < \mu^\star$. Then, we have 
    \setlength{\belowdisplayskip}{2pt}
    \setlength{\abovedisplayskip}{2pt}%
    \setlength{\jot}{-2pt}
    \begin{align}
        N_T(\mf{x}) 
        &= \sum_{t=1}^T \mathds{I} \{ \mf{X}_t = \mf{x}\} \\
        &\leq 
        \sum_{t=1}^T \mathds{I} 
        \{ \bar{U}_{\mf{x}^\star}(t) < \mu^\star\} + 
        \mathds{I} 
        \{ \bar{U}_{\mf{x}^\star}(t) \geq \mu^\star, \mf{X}_t = \mf{x}\}\\
        & =
        \sum_{t=1}^T \mathds{I} 
        \{ \bar{U}_{\mf{x}^\star}(t) < \mu^\star\} + 
        \mathds{I} 
        \{ \bar{U}_\mf{x}(t) \geq \mu^\star\}.
    \end{align}
    The last step holds since an arm $\mf{x}$ is played at time step $t$ if and only if the causal clipped UCB $\bar{U}_\mf{x}(t)$ is maximal. Since $\ell_\mf{x} \leq \bar{U}_\mf{x}(t) \leq u_{\mf{x}}$ and $u_{\mf{x}} < \mu^\star$, we have $\bar{U}_\mf{x}(t) < \mu^\star$. Thus, 
    \setlength{\belowdisplayskip}{2pt}
    \setlength{\abovedisplayskip}{2pt}%
    \setlength{\jot}{-2pt}
    \begin{align}
        N_T(\mf{x}) 
        & \leq
        \sum_{t=1}^T \mathds{I} 
        \{ \bar{U}_{\mf{x}^\star}(t) < \mu^\star\} \\
        & \leq
        \sum_{t=1}^T \mathds{I} 
        \{ U_{\mf{x}^\star}(t) < \mu^\star\}.
    \end{align}
    Hence, we can derive the following:
    \setlength{\belowdisplayskip}{2pt}
    \setlength{\abovedisplayskip}{2pt}%
    \setlength{\jot}{-2pt}
    \begin{align}
        \mb{E}N_T(\mf{x}) 
        & \leq
        \sum_{t=1}^T P
        ( U_{\mf{x}^\star}(t) < \mu^\star ) \\
        & \leq
        \sum_{t=1}^T P
        ( \hat{\mb{E}}_{P^\ast_{\mf{x}^\star},t}Y + \sqrt{\frac{2\ln(t)}{N_t(\mf{x}^\star)}} < \mu^\star )\\
        & \leq
        \sum_{t=1}^T 
        \sum_{N_t(\mf{x}^\star)=1}^t
        P( \hat{\mb{E}}_{P^\ast_{\mf{x}^\star},t}Y + \sqrt{\frac{2\ln(t)}{N_t(\mf{x}^\star)}} < \mu^\star )\label{Eq. ucb hoef 1}\\
        & \leq
        \sum_{t=1}^T 
        \sum_{N_t(\mf{x}^\star)=1}^t
        \delta = 
        \sum_{t=1}^T 
        \frac{1}{t^3} \leq \frac{\pi^2}{6}\label{Eq. ucb hoef 2}
    \end{align}
    where \Cref{Eq. ucb hoef 1} to \Cref{Eq. ucb hoef 2} follows Hoeffding's inequality. Next, \textbf{let} us consider $u_\mf{x} > \mu^\star$, i.e., the prior upper causal bound is greater than the optimal expected reward. Let $l$ be an arbitrary positive integer.
    \setlength{\belowdisplayskip}{2pt}
    \setlength{\abovedisplayskip}{2pt}%
    \setlength{\jot}{-2pt}
    \begin{align}
        N_T(\mf{x}) 
        &= \sum_{t=1}^T \mathds{I} \{ \mf{X}_t = \mf{x}\} \\
        &\leq 
        l + \sum_{t=1}^T \mathds{I} 
        \{ \bar{U}_{\mf{x}^\star}(t) 
        \leq 
        \bar{U}_\mf{x}(t), N_t(\mf{x}) \geq l\}\\
        & \leq
        l + \sum_{t=1}^T \mathds{I} 
        \{ U_{\mf{x}^\star}(t) 
        \leq 
        U_\mf{x}(t), N_t(\mf{x}) \geq l\}\\
        & =
        l + \sum_{t=1}^T \mathds{I} 
        \{ \hat{\mb{E}}_{P^\ast_{\mf{x}^\star},t}Y 
        + \sqrt{\frac{2\ln(t)}{N_t(\mf{x}^\star)}}
        \leq 
        \hat{\mb{E}}_{P^\ast_{\mf{x}},t}Y 
        + \sqrt{\frac{2\ln(t)}{N_t(\mf{x})}}
        , N_t(\mf{x}) \geq l\}\\
        & \leq
        l + \sum_{t=1}^T \sum_{N_t(\mf{x}^\star)=1}^t \sum_{N_t(\mf{x})=l}^t \mathds{I} 
        \{ \hat{\mb{E}}_{P^\ast_{\mf{x}^\star},t}Y 
        + \sqrt{\frac{2\ln(t)}{N_t(\mf{x}^\star)}}
        \leq 
        \hat{\mb{E}}_{P^\ast_{\mf{x}},t}Y 
        + \sqrt{\frac{2\ln(t)}{N_t(\mf{x})}}\}
    \end{align}
    The last event implies that at least one of the following events occur: 
    \begin{align}
        &\hat{\mb{E}}_{P^\ast_{\mf{x}^\star},t}Y 
        + \sqrt{\frac{2\ln(t)}{N_t(\mf{x}^\star)}} \leq 
        \mb{E}_{P^\ast_{\mf{x}^\star}}Y \label{eq: evert1}\\
        & \hat{\mb{E}}_{P^\ast_{\mf{x}},t}Y 
        - \sqrt{\frac{2\ln(t)}{N_t(\mf{x})}} \geq 
        \mb{E}_{P^\ast_{\mf{x}}}Y \label{eq: evert2}\\
        & \mb{E}_{P^\ast_{\mf{x}^\star}}Y 
        < 
        \mb{E}_{P^\ast_{\mf{x}}}Y
        + 2 \sqrt{\frac{2\ln(t)}{N_t(\mf{x})}}\label{eq: evert3}
    \end{align}
    Remaining part follows the proof of Theorem 1 in \citet{auer2002finite}. We bound the probability of events \Cref{eq: evert1,eq: evert2} using Hoeffding's inequality \citep{hoeffding1994probability}. \Cref{eq: evert3} does not appear 
    \begin{align}
        \mb{E}_{P^\ast_{\mf{x}^\star}}Y  - \mb{E}_{P^\ast_{\mf{x}}}Y
        - 2 \sqrt{\frac{2\ln(t)}{N_t(\mf{x})}}
        = 
        \Delta_{\mf{x}} - 2 \sqrt{\frac{2\ln(t)}{N_t(\mf{x})}} 
        \geq 
        0
    \end{align}
    for $N_t(\mf{x}) \geq \frac{8\ln t}{
    (\Delta_{\mf{x}})^2}$. Therefore, with $l \geq \left\lceil \frac{8\ln t}{
    (\Delta_{\mf{x}})^2} \right\rceil$, we get 
    \begin{align}
       \mb{E} N_T(\mf{x}) 
        & \leq
        \left\lceil \frac{8\ln t}{
    (\Delta_{\mf{x}})^2} \right \rceil + \sum_{t=1}^T \sum_{N_t(\mf{x}^\star)=1}^t \sum_{N_t(\mf{x})=\left\lceil {8\ln t} \slash {
    (\Delta_{\mf{x}})^2} \right \rceil}^t 
    2t^{-4}\leq 
    \frac{8\ln t}{
    (\Delta_{\mf{x}})^2}
    + 1 + \frac{\pi^2}{3}
    \end{align}
    which concludes the proof.
\end{proof}

\section{Details on Experimental Settings}\label{sec: app experimental details}
We compare \textsc{trUCB} (Alg.~\ref{alg: trUCB}) against standard UCB over all combinations of arms (UCB), defined as $\{\mf{x} \in \Omega_{\mf{X}} \mid \mf{X} \in 2^{\mf{V} \setminus \{Y\} \setminus \mf{N}^\ast}\}$ and over POMISs (\textsc{poUCB}), defined as $\{\mf{x} \in \Omega_{\mf{X}} \mid \mf{X} \in\mb{P}_{\mc{G},Y}^{\mf{N}^\ast}\}$. Our baseline comparison is with \textsc{poUCB}, to ensure a fair evaluation of transportability performance. The number of trials is set to 50,000, which is sufficient to observe the performance differences. Each simulation is repeated 1,000 times to ensure consistent results. The experiments were conducted on a Linux server equipped with an Intel Xeon Gold 5317 processor running at 3.0 GHz and 64 GB of RAM. No GPUs were used during the simulations. 

\subsection{Detailed Explanations of the Working Examples}
We provides details on the workings of the \textsc{trUCB} algorithm (Alg.~\ref{alg: trUCB}) and the specific SCMs used in all bandit instances presented in the experiments (\Cref{sec: experiments}). We denote the exclusive-or operation by $\oplus$, and use $\Bern$ to represent a Bernoulli distribution. We \textit{randomly} generate structural functions $\mf{F}$ using binary logical operations ($\land$, $\lor$, $\oplus$, $\lnot$), and the parameters of the exogenous variable distributions are also \textit{randomly} selected. The following table (\Cref{table: experiment}) summarizes our simulation results.  

\begin{table}[!t]
        \centering
        \scriptsize
        \begin{tabular}{l@{\hskip 4pt}c@{\hskip 4pt}c@{\hskip 4pt}c}
            \toprule
              & Task 1 & Task 2 & Task 3 \\

              Total trials & 50k & 50k & 50k \\

            \midrule
            \textsc{trUCB}
            & $\boldsymbol{47.94} \pm 9.60$ {\color{betterred}$(36.83\%)$}        
            & $\boldsymbol{85.25} \pm 113.3$ {\color{betterred}$(38.62\%)$}        
            & $\boldsymbol{95.69} \pm 7.47$ {\color{betterred}$(55.11\%)$}        
            \\
            \textsc{poUCB}
            & $130.16 \pm 7.33$
            & $220.75 \pm 117.82$
            & $173.62 \pm 7.56$
            \\
            \textsc{UCB}
            & $397.03 \pm 12.33$
            & $220.75 \pm 117.82$ (= CR of \textsc{poUCB})
            & $214.42 \pm 7.76$
            \\
            \bottomrule
        \end{tabular}
        \vspace{1em}
        \caption{Mean and standard deviation of cumulative regret over 1,000 repeated simulations. The percentages (red) represent the ratio $\frac{\text{CR for \textsc{trUCB}}}{\text{CR for \textsc{poUCB}}} \times 100 (\%)$. }
        \label{table: experiment}
        \vspace{-1em}
\end{table}

\paragraph{Task 1.} The bandit instance is associated with an SCM $\mc{M}$ where 

\begin{align}
    \mc{M}
    = \begin{cases}
            \mf{U} &= \{U_A,U_B,U_Y, U_{AY}\}\\
            \mf{V} &= \{A,B,C,Y\} \\
            \mf{F} &= 
            \begin{cases}
                f_{A} = u_A \lor (b \oplus u_{AY}), f_{B} = u_B, \\
                f_Y = (1- b) \lor ((u_{AY} \lor b) \oplus a)\}
            \end{cases}\\
            P(\mf{U}) &= 
            \begin{cases}
                U_A \sim \Bern(0.33), U_B \sim \Bern(0.38),\\
                U_Y \sim \Bern(0.41), U_{AY} \sim \Bern(0.75).
            \end{cases}
    \end{cases}
\end{align}

The decision maker has an experimental prior $\mb{Z}^1 = \{\{B\}\}$ from $\Delta^1 = \{A\}$. The action space without prior information corresponds to $\mb{P}_{\mc{G},Y}^{\mf{N}^\ast = \emptyset} = \{\{B\}, \{A,B\}\}$, which corresponds to the action space of the baseline algorithm \textsc{poUCB} (i.e., the initialized target action space). In this setting, we observe that $\mb{E}_{P^\ast_b} Y$ cannot be bounded from the source $\pi^1$, since $\{A\} \cap \Delta^1 \neq \emptyset$. In contrast, $\mb{E}_{P^\ast_{a,b}} Y $ can be bounded as $\sum_{y} y  P^1_b(a,y) \leq \mb{E}_{P^\ast_{a,b}} Y \leq \sum_{y} y  \{P^1_b(a,y) + 1 - P^1_b(a) \}$. Using these expressions, the decision maker can estimate causal bounds for four actions corresponding to the POMISs: $do(A=0,B=0): [0.1675, 1]$, $ do(A=0,B=1): [0.2965, 0.7940]$, $ do(A=1,B=0): [0.8325, 1]$ and $ do(A=1,B=1): [0.2935, 0.7960]$. The algorithm computes dominance bounds as $\ell^\star = 0.8325$ and $u^\star = \infty$. Here, $u^\star = \infty$ arises because the causal bounds for $do(b)$ are $[0,\infty)$, i.e., no information can be transported for that intervention. Among the four actions, the upper causal bounds for $do(A = 0,B = 1)$ and $do(A = 1,B = 1)$ are lower than $\ell^\star$, leading them to be excluded by \textsc{trUCB}. Accordingly, the algorithm begins the online interaction with the final action space $\mc{I}^\ast$ excluding these two actions. We observe that the mean cumulative regret at the final trial is $\boldsymbol{47.94}~(\pm~9.60)$ for \textsc{trUCB} and $\boldsymbol{130.16}~(\pm~7.33)$ for \textsc{poUCB}, which is $\boldsymbol{36.83\%}$ of the latter. 

\paragraph{Task 2.} The bandit instance is associated an SCM $\mc{M}$ where 

\begin{align}
    \mc{M}
    = \begin{cases}
            \mf{U} &= \{U_A,U_B,U_C,U_Y, U_{BC}, U_{BY}\}\\
            \mf{V} &= \{A,B,C,Y\} \\
            \mf{F} &= 
            \begin{cases}
                f_{A} = u_{A}, f_{B} = (u_{BY}) \land u_{BC}) \oplus u_B), \\
                f_C = b \oplus (u_{BC} \land ((1 - u_C) \oplus a)),\\
                f_Y = c \oplus (u_{BY} \land ((1 - u_Y) \oplus a))
            \end{cases}\\
            P(\mf{U}) &= 
            \begin{cases}
                U_A \sim \Bern(0.28), U_B \sim \Bern(0.42), U_C \sim \Bern(0.52)\\
                U_Y \sim \Bern(0.47), U_{BC} \sim \Bern(0.62), U_{BY} \sim \Bern(0.49).
            \end{cases}
    \end{cases}
\end{align}

An agent has priors from two environments with potential discrepancies $\Delta^1 = \{A\}$ and $\Delta^2 = \{B\}$. The prior from the first source is observational $\mb{Z}^1 = \{\emptyset\}$, and from the second source is experimental $\mb{Z}^2 = \{\{C\}\}$. The non-manipulable variable constraint is $\mf{N}^\ast = \{A,C\}$, and the corresponding initial action space is $\mb{P}_{\mc{G},Y}^{\mf{N}^\ast} = \{\emptyset, \{B\}\}$. The action $do(\emptyset)$ is transportable as $\mb{E}_{P^\ast_{\emptyset}}Y = \sum_{a,b,c,y}yQ^2[A]Q^1[B,C,Y] = \sum_{a,b,c,y}y P^2_c(a)P^1(b,c,y \mid a)  = 0.4844$. On the other hand, $do(b)$ is \textit{not} transportable, since $Q^\ast[C]$ is \textit{non}-transportable from the sources: 
$P^\ast_{b}(y) = \sum_{a,c}{\color{NavyBlue}Q^2[A]Q^2[Y]}Q^\ast[C]$. Since $\Delta^1 \cap \{C\} = \emptyset$ and according to \Cref{thm: causal bound}, $Q^\ast[C]$ is bounded by $Q^1[B,C]$, which leads to $\ell_b = \sum_{a, c, y}y {\color{NavyBlue} P^2_{c}(a)P^2_{c}(y | a)}P^1(c | a,b)P^1(b)$. This expression yields $\ell_{do(B{=}0)} = 0.2097$ and $\ell_{do(B{=}1)} = 0.2752$. The upper causal bound for $do(b)$ is given by $\sum_y y \min\{1,\sum_{a, c} {\color{NavyBlue} P^2_{c}(a)P^2_{c}(y | a)}\{P^1(b) P^1(c | a,b) + 1 - P^1(b)\}\}$. We thus have $u_{do(B{=}0)} = 0.6783$ and $u_{do(B{=}1)} = 0.8066$. We now proceed to compute the dominance bounds. The set of MISs with respect to $\langle \mc{G}, Y, \mf{N}^\ast \rangle$ is $\{\emptyset,\{B\}\}$. Since $\ell_\emptyset = 0.4844$ and $\ell_{b^\ast} = 0.6783$, we have\begin{wrapfigure}[11]{r}{0.4\textwidth}


    \vspace{-1em}
    \begin{minipage}[b]{0.4\textwidth}\centering
        \begin{tikzpicture}[x=2.8cm,y=1.5cm,>={Latex[width=1.4mm,length=1.7mm]},
        font=\sffamily\sansmath\tiny,
        line width=0.19mm,
       every node/.style={scale=1},
        RR/.style={draw,circle,inner sep=0mm, minimum size=4.5mm,font=\sffamily\tiny}]\centering
        \node[align=center] (C) at (0,0) {
          {\rm \scriptsize (c)} $\mf{N} = \{C\}$\\
          $\mb{P}_{\mc{G},Y}^{\mf{N}} = \{\{A\},\{A,B\}\}$
        };
        
        \node[align = center] (E) at (1,0) {
          {\rm \scriptsize (d)} $\mf{N} = \emptyset$\\
          $\mb{P}_{\mc{G},Y}^{\mf{N}} = \{\{A\},\{A,C\}\}$
        };
        
        \node[align = center] (A) at (1,1) {
          {\rm \scriptsize (b)} $\mf{N} = \{A\}$\\
          $\mb{P}_{\mc{G},Y}^{\mf{N}} = \{\emptyset,\{B\},$\\
          $\{C\}\}$
        };
        
        \node[align = center] (AC) at (0,1) {
          {\rm \scriptsize (a)} $\mf{N}^\ast = \{A,C\}$\\
          $\mb{P}_{\mc{G},Y}^{\mf{N}^\ast} = \{\emptyset,\{B\}\}$\\\,
        };
        
        \draw[->] (AC) -- (A);
        \draw[->,shorten <=-2mm] (AC) -- (C);
        
        \draw[->,shorten <=-4mm,shorten >=-3mm] (AC) -- (E);

        \draw[->] (A) -- (E);
        \draw[->] (C) -- (E);

        \end{tikzpicture}
    \end{minipage}
    \caption{Hierarchical relationships between POMISs under different constraints. Arrows indicate the direction of dominance relations.}
    \label{wrapfig: app_hierarchy}
\end{wrapfigure}
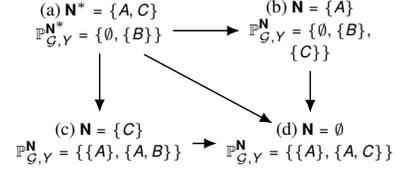the dominance lower bound is given by $\ell^\star = 0.4844$. To compute the upper dominance bound $u^\star$, the algorithm (executed via the subroutine \textsc{udb} in Alg.~\ref{alg: UDB}) initializes with $0.8066$ (i.e., the upper causal bound $u_{do(B=1)}$) and traverses weaker constraints (see \Cref{wrapfig: app_hierarchy}): (b) For $\mf{N} = \{A\}$, the bound remains $0.8066$. (c) For $\mf{N} = \{C\}$, the algorithm returns $0.8070$, which corresponds to the upper causal bound $u_{do(A{=}0,B{=}1)}$. (d) For $\mf{N} = \emptyset$, the algorithm returns $0.7697$, which corresponds to the expected reward of the transportable action $do(A=1,C=1)$---$\mb{E}_{P^\ast_{do(A{=}1,C{=}1)}}Y = \sum_y yP^2_{do(C{=}1)}(y | A{=}1) = 0.7697$. Consequently, the final upper dominance bound is set to $u^\star = 0.7697$. Since $u_{do(B=1)} {=} 0.8066 > 0.7697 {=} u^\star$, the final transport upper bound is updated to the value of $u^\star=0.7697$. The resulting transport bounds for $do(B{=}0)$ and $do(B{=}1)$ are $[0.2097, 0.6783]$ and $[0.2752, 0.7697]$, respectively. Although the size of the action space remains unchanged (i.e., no action is removed from $\mc{I}^\ast$, implying that the action spaces of all three algorithms---\textsc{UCB}, \textsc{poUCB} and \textsc{trUCB}---are identical), we observe that accounting for the transport bounds improves performance, with final mean regrets of $\boldsymbol{220.75}~(\pm~117.82)$ for \textsc{poUCB} and $\boldsymbol{85.25}~(\pm~113.3)$ for \textsc{trUCB}, corresponding to $\boldsymbol{38.62\%}$ of the baseline algorithm \textsc{poUCB}. 

\paragraph{Task 3.} The bandit instance is associated an SCM $\mc{M}$ where 
 
\begin{align}
    \mc{M}
    = \begin{cases}
            \mf{U} &= \{U_R,U_T,U_W,U_X,U_Z,U_Y, U_{RW},U_{RY}, U_{XY}, U_{WX}, U_{WZ}\}\\
            \mf{V} &= \{R,T,W,X,Z,Y\} \\
            \mf{F} &= 
            \begin{cases}
                f_{R} = u_R \lor ((1 - u_{RW}) \land (t \land u_{RY})), f_T = u_T\\
                f_W = u_W \lor ((1 - u_{WX}) \land (u_{RW} \land u_{WZ})), \\
                f_X = w \lor ((1 - u_{WX}) \land (u_{XY} \land u_X)), \\
                f_Z = z \lor ((1 - u_{XY}) \land (u_{RY} \land u_Y)), \\
                f_Y = x \lor ((1 - u_{WZ}) \land (r \land u_Z))
            \end{cases}\\
            P(\mf{U}) &= 
            \begin{cases}
                U_R\ \sim \Bern(0.53), U_T \sim \Bern(0.63), U_W\sim \Bern(0.38), \\
                U_X\sim \Bern(0.27), U_Z\sim \Bern(0.4), U_Y\sim \Bern(0.26), \\
                U_{RW}\sim \Bern(0.52), U_{RY}\sim \Bern(0.63), U_{XY}\sim \Bern(0.79), \\
                U_{WX}\sim \Bern(0.74), U_{WZ}\sim \Bern(0.31).
            \end{cases}
    \end{cases}
\end{align}
 
An agent has access to priors from two environments with potential discrepancies $\Delta^1 = \{R\}$ and $\Delta^2 = \{T\}$, with priors $\mb{Z}^1 = \{\emptyset, \{Z\}\}$ and $\mb{Z}^2 = \{\{Z\}\}$, and constraint $\mf{N}^\ast = \{T,W\}$. The initial action space is $\mb{P}_{\mc{G},Y}^{\mf{N}^\ast} = \{\emptyset, \{R\}, \{X\}, \{Z\}\}$ where $\mb{E}_{P^\ast_{\emptyset}}Y = \sum_{r, t, w, x, z,y} y P^1(r,x,y,z | t,w) P^1(w) P^2_{z}(t)$ and $\mb{E}_{P^\ast_{z}}Y = \sum_y yP^1_{z}(y) = \sum_y yP^2_{z}(y)$ are transportable while $\mb{E}_{P^\ast_{x}}Y = \sum_{r,t,z,y} y {\color{NavyBlue}Q^1[R,Y]Q^2[T]}Q^\ast[Z]$ and $\mb{E}_{P^\ast_{r}}Y = \sum_{w, x, z,y} y Q^\ast[W,X,Z,Y]$ are \textit{not}. According to \Cref{thm: causal bound}, we have that $Q^\ast[Z]$ can be bounded by $Q^1[W,R,X,Z,Y] = P^1(w,r,x,z,y\mid t)$, leading to $\sum_{r,t,z,y} y {\color{NavyBlue}P^1_{z}(r,y | t)P^2_{z}(t) }P^1(r,w,x,z | t) \leq \mb{E}_{P^\ast_{x}}Y \leq \sum_y y \min\{1,\sum_{r, t, z} {\color{NavyBlue}P^1_{z}(r,y | t)P^2_{z}(t)}(P^1(r,w,x,z | t) + 1 - P^1(r,w,x | t))\}$. Similarly, we can derive the causal bounds for $do(r)$ using $Q^1[W,R,X,Z,Y]$ as $\sum_{w, x, z} P^1(r,x,y,z | t,w) P^1(w) \leq \mb{E}_{P^\ast_{r}}Y \leq \sum_y y \min \{1,\sum_{w, x, z} P^1(r,x,y,z | t,w) P^1(w) + 1 - P^1(r,x,z | t,w) P^1(w))\}$. In this setting, the transportable target POMIS action yields $\mb{E}_{P^\ast_{do(Z=1)}}Y = 1$, resulting in $\ell^\star = u^\star = 1$. Furthermore, we find the upper causal bounds $u_{do(\emptyset)} = 0.5514$, $u_{do(X=0)} = 0.7901$ and $u_{do(Z=0)} = 0.034$---these three upper causal bounds are lower than $\ell^\star$. Hence, the corresponding actions are eliminated from $\mc{I}^\ast$ by the algorithm. We observe mean cumulative regrets of $\boldsymbol{173.62 }~(\pm~7.56)$ for \textsc{poUCB} and $\boldsymbol{95.69}~(\pm~7.47)$ for \textsc{trUCB}, corresponding to $\boldsymbol{55.11\%}$ of regret ratio. 

\section{Discussions}\label{sec: app discussion}
\paragraph{Misspecification.}
In the transportability literature, assuming access to true selection diagram is a standard modeling practice. As noted by \citet{bareinboim2016causal},
\textit{if no knowledge about commonalities and disparities across environments is available, transportability cannot be justified}. That said, some degree of misspecification can be tolerated without invalidating the performance guarantees---particularly when the assumed causal diagram or selection diagram forms \textit{super-model} of the true environment. A similar discussion was presented by \citet{bellot2023transportability}.  

For example, suppose both the source and target environments share the causal diagram $\mathcal{G} = \langle \{X,Y\}, \{X \to Y\} \rangle$. Given source data $P(x,y)$ and the graph structure, the interventional distribution $P_x(y) = P(y \mid x)$ is identifiable. However, if we are unsure about the presence of an unobserved confounder, we can conservatively posit a super-model $\mathcal{G}' = \langle \{X,Y\}, \{X \to Y, X \leftrightarrow Y\} \rangle$. Under this model, while point identification fails, the interventional distribution $P_{x}(y)$ is bounded within the interval $[P(x,y), P(x,y) + 1 - P(x)]$, which still contains the true value. The same reasoning applies to selection diagrams. The selection nodes only indicate potential discrepancies between environments. If a researcher is uncertain about whether a mechanism differs across environments, they may still conduct valid inference by conservatively assuming the presence of a discrepancy. Such conservatism does not harm transportability guarantees. Importantly, this conservative modeling increases the number of POMISs. For instance, a POMIS under $\mathcal{G}$ is $\{\{X\}\}$, while under $\mathcal{G}'$ it becomes $\{\emptyset, \{X\}\}$, which covers the true POMIS. This leads to less informative but still correct inferences, outperforming methods that ignore structural information altogether. 

In contrast, misspecifying in the opposite direction (i.e., failing to model a discrepancy that does exist in the target environment) can lead to incorrect inferences. This reflects a fundamental asymmetry: being conservative preserves soundness, but missing edges or selection nodes can violate correctness.

\paragraph{Parametric approach.}
One might be concerned that while the algorithm (Alg.~\ref{alg: trUCB}) effectively uses prior knowledge to eliminate non-optimal actions before learning begins, it then switches to a traditional UCB approach that ignores additional observations available during each round. Indeed, there exists a rich body of research that incorporates prior knowledge to iteratively update parameters of SCMs under graphical constraints \citep{zhang2022online,bellot2023transportability,wei2023approximate,jalaldoust2024partial}
in online learning. However, such approaches often rely on optimization-based approaches---such as \textit{canonical SCM} \citep{zhang2022partial} or \textit{neural causal models} (NCMs) \citep{xia2021causal} that assume full parameterization, which results in high computational overhead. This approach is infeasible for larger or denser causal diagrams, and exploiting them effectively remains an open problem \citep{elahi2024partial}. Moreover, these methods are constrained to categorical settings, which makes them inapplicable to continuous domains, such as those encountered in causal Bayesian optimization (\textsc{cbo}; \citet{aglietti2020causal} and muti-outcome variant (\textsc{mo-cbo}; \citet{bhatija2025multi}), where POMISs are also leveraged for structural pruning in continuous action spaces. 

In contrast, our approach focuses on leveraging structural knowledge offline, before any online interaction, and without requiring parameterization or any strong assumptions beyond a given graphical structure and sources. This distinction allows us to scale to settings where parameter learning is infeasible or computationally prohibitive (dense graph or continuous domain), while still retaining provable regret guarantees through structure-informed pruning and closed-form bounding. 

\paragraph{Future work.} Recently, research on transportability theory in practical settings has been increasing \citep{jung2024efficient,jalaldoust2024partial,jalaldoust2024transportable}. Beyond structural causal bandits, we believe that transportability-based decision making will offer substantial practical value when integrated with causal reinforcement learning \citep{zhang2022online,hwang2024fine,bareinboim2024introduction}, rehearsal learning \citep{qin2023rehearsal,qin2025gradient,du2024avoiding,duen2025abling,Lue2025avoding}, and sequential planning \citep{pearl1995SBD,jung2024complete}.

\section{Limitations}\label{sec: app limitation}
In this section, we discuss limitations of our work and outline promising directions for future research. 

\paragraph{Modeling bandit instances in the form of SCMs.} Structural Causal Models (SCMs) are a versatile and expressive framework that provides a principled way to represent and reason about causal relationships. Their generality makes them applicable across a wide range of domains. However, SCMs come with certain limitations, such as the assumption of a well-defined set of variables and a fixed causal structure, which may not adequately capture the complexity of dynamic, high-dimensional, or partially observed systems. Nonetheless, our work addresses a fundamental problem within the SCM framework. We believe it provides a solid foundation for future research, such as extending causal bandits to more complex or less structured environments.

\paragraph{Known causal diagram.}  
We make the standard assumption that the deployment learner has access to the underlying causal diagram. While knowledge of the causal structure can greatly enhance decision-making, this requirement may limit the broader applicability of the proposed approach. In practice, several techniques---such as causal discovery methods or the use of ancestral graphs as plausible explanations---can help alleviate this issue. However, these techniques typically require substantial domain knowledge, and thus, the assumption remains a key limitation of our framework. 

While a collective selection diagram provides a principled and interpretable framework for analyzing environment shifts and transferring information across environments, it arguably restricts the analysis to a narrow class of problems where common (super) causal diagrams should be explicitly defined. Consequently, the applicability of this framework may be limited in settings where environmental changes substantially alter the underlying graph structure.  

\paragraph{Tightness of causal bounds.}
As discussed in \Cref{sec: app partial trans}, the upper bound computed by the algorithm \textsc{paTR} (Alg.~\ref{alg: paTR}) can, in some cases, exceed one. Moreover, we do not guarantee the \textit{tightness} of our causal bounds; that is, we cannot ensure the existence of an SCM under which the causal effect exactly equals either $\ell$ or $u$. Characterizing conditions under which lower and upper bounds are attainable by some SCM would enhance the interpretability and reliability of our framework. Investigating tighter bounds and formally establishing their tightness remains an important direction for future work.

\paragraph{Sufficient prior data from source environments.}  
Our method relies on the availability of prior data to construct dominance and causal bounds for guiding exploration. However, when the prior data is insufficient or biased, the resulting bounds may become inaccurate. In particular, inaccurate bounds may fail to include the true expected reward of certain actions. As a result, the agent may prematurely eliminate potentially optimal actions from exploration, leading to suboptimal performance. Moreover, our current framework focuses on the transportability of causal quantities, rather than on improving estimation quality under noisy or limited prior data. Developing robust algorithms that can explicitly account for uncertainty in the prior and avoid overconfident pruning remains an important direction for future work. 


\section*{Impact Statement}
This work addresses a structured causal bandit framework that leverages prior knowledge from heterogeneous environments through transportability theory in the causal inference literature. This approach has potential applications in real-world settings where experimentation is costly or limited, such as personalized healthcare, adaptive education, and resource-constrained recommendation systems. In these domains, improper specification of causal structures may lead to misleading conclusions and biased decisions. Therefore, careful validation and domain-specific causal modeling are essential before deployment in high-stakes environments. 

\end{document}